\definecolor{mycolor}{rgb}{0,0,0}
\newcolumntype{C}[1]{>{\centering\arraybackslash$}m{#1}<{$}}
\theoremstyle{plain}
\newtheorem{theorem}{Theorem}[section]
\newtheorem{proposition}[theorem]{Proposition}
\newtheorem{lemma}[theorem]{Lemma}
\newtheorem{corollary}[theorem]{Corollary}
\theoremstyle{definition}
\newtheorem{definition}[theorem]{Definition}
\theoremstyle{remark}
\providecommand{\algorithmname}{Algorithm}
\def\RSthmtxt{theorem~}\newref{thm}{name = \RSthmtxt}}
\def\RSlemtxt{lemma~}\newref{lem}{name = \RSlemtxt}}
\theoremstyle{plain}
\theoremstyle{plain}
\theoremstyle{definition}
\theoremstyle{plain}
\providecommand{\definitionname}{Definition}
\providecommand{\lemmaname}{Lemma}
\providecommand{\propositionname}{Proposition}
\providecommand{\theoremname}{Theorem}
\begin{document}
\global\long\def\R{\mathbb{R}}%
\global\long\def\S{\mathbb{S}}%
\global\long\def\AA{\mathcal{A}}%
\global\long\def\FF{\mathcal{F}}%
\global\long\def\C{\mathcal{C}}%
\global\long\def\L{\mathscr{L}}%
\global\long\def\one{\mathbf{1}}%
\global\long\def\gap{\mathrm{gap}}%
\global\long\def\feas{\mathrm{feas}}%
\global\long\def\inner#1#2{\left\langle #1,#2\right\rangle }%

\global\long\def\ub{\mathrm{ub}}%
\global\long\def\lb{\mathrm{lb}}%

\global\long\def\x{\mathbf{x}}%
\global\long\def\e{\mathbf{e}}%
\global\long\def\u{\mathbf{u}}%
\global\long\def\v{\mathbf{v}}%
\global\long\def\z{\mathbf{z}}%
\global\long\def\f{\mathbf{f}}%
\global\long\def\F{\mathcal{F}}%
\global\long\def\G{\mathcal{G}}%
\global\long\def\H{\mathcal{H}}%

\global\long\def\bvec{\mathsf{b}}%
\global\long\def\Wmat{\mathsf{W}}%
\global\long\def\wvec{\mathsf{w}}%
\global\long\def\image{\mathsf{x}}%

\global\long\def\X{\mathbf{X}}%
\global\long\def\U{\mathbf{U}}%
\global\long\def\V{\mathbf{V}}%
\global\long\def\b#1{\mathbf{#1}}%
\global\long\def\I{\mathcal{I}}%
\global\long\def\E{\mathcal{E}}%

\global\long\def\rank{\operatorname{rank}}%

\global\long\def\dist{\operatorname{dist}}%

\global\long\def\diag{\operatorname{diag}}%

\global\long\def\tr{\operatorname{tr}}%


\global\long\def\forall{\mbox{for all }}%

\global\long\def\vec{\operatorname{vec}}%

\renewcommand{\arraystretch}{1.2}
\twocolumn[
\icmltitlerunning{Tight Certification of Adversarially Trained Neural Networks 
via Nonconvex Low-Rank Semidefinite Relaxations}
\icmltitle{Tight Certification of Adversarially Trained Neural Networks \\
via Nonconvex Low-Rank Semidefinite Relaxations}




\begin{icmlauthorlist}
\icmlauthor{Hong-Ming Chiu}{yyy}
\icmlauthor{Richard Y. Zhang}{yyy}
\end{icmlauthorlist}

\icmlaffiliation{yyy}{Department of Electrical and Computer Engineering, University of Illinois at Urbana--Champaign}

\icmlcorrespondingauthor{Hong-Ming Chiu}{hmchiu2@illinois.edu}
\icmlcorrespondingauthor{Richard Y. Zhang}{ryz@illinois.edu}

\icmlkeywords{Machine Learning}

\vskip 0.3in]



\printAffiliationsAndNotice{}  

\begin{abstract}
Adversarial training is well-known to produce high-quality neural network models that are empirically robust against adversarial perturbations. Nevertheless, once a model has been adversarially trained, one often desires a certification that the model is truly robust against all future attacks. Unfortunately, when faced with adversarially trained models, all existing approaches have significant trouble making certifications that are strong enough to be practically useful. Linear programming (LP) techniques in particular face a ``convex relaxation barrier'' that prevent them from making high-quality certifications, even after refinement with mixed-integer linear programming (MILP)  \textcolor{mycolor}{and branch-and-bound (BnB)} techniques. 
In this paper, we propose a nonconvex certification technique, based on a low-rank restriction of a semidefinite programming (SDP) relaxation. The nonconvex relaxation makes strong certifications comparable to much more expensive SDP methods, while optimizing over dramatically fewer variables comparable to much weaker LP methods. Despite nonconvexity, we show how off-the-shelf local optimization algorithms can be used to achieve and to certify global optimality in polynomial time. Our experiments find that the nonconvex relaxation almost completely closes the gap towards exact certification of adversarially trained models.
\end{abstract}

\section{Introduction}
\begin{figure}[t!]
    \centering
    \includegraphics[width=\linewidth]{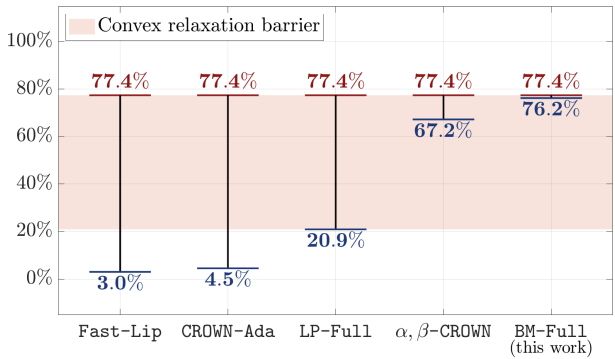}
    \captionsetup{font=footnotesize}
    \caption{\textbf{Certified adversarial training vs. $\ell_2$ attacks}. While PGD attacks indicate an empirical upper-bound of 77.4\% (red), state-of-the-art LP-based verifiers face a ``convex relaxation barrier'' (pink shading) that prevent them from certifying lower-bounds better than 20.9\% (blue). Even an award-winning state-of-the-art branch-and-bound verifier like $\alpha,\beta$-CROWN cannot significantly improve past 67.2\% in reasonable time. Our nonconvex relaxation overcomes the convex relaxation barrier, certifying a lower-bound of 76.2\% that almost fully closes the empirical-certified gap. 
    (See Section~\ref{sec:experiment} for details.)}
    \label{fig:page1}
    \vspace{-1em}
\end{figure}

To make neural network models robust to adversarial perturbation attacks, one popular strategy,
known as \emph{adversarial training}~\citep{kurakin2016adversarial,goodfellow2015explaining,madry2017towards,shafahi2019adversarial,wong2019fast},
is to attack a pre-trained model, and then to re-train the model with
the training set augmented or replaced by the attack. Despite its
simplicity, adversarial training works remarkably well in practice.
For example, the robust models adversarially trained by \citet{madry2017towards}
back in 2017 remain essentially unbroken in 2022, after more than
four years of white-box penetration testing by researchers world-wide.
Later, \citet{shafahi2019adversarial,wong2019fast} have extended the idea to train robust ImageNet classifiers, that achieve a similar level of accuracy, and within a comparable amount of training time, to nonrobust classifiers.

Nevertheless, adversarial training is an empirical strategy that does not promise a truly robust model. Given a model that has been made empirically robust through adversarial training, one often desires a formal mathematical proof or \emph{certification} that the model is truly robust against all future attacks. Unfortunately, when faced with an adversarially trained model, all existing approaches have significant trouble making certifications that are strong enough to be practically useful. A model that achieves $77.4\%$ test accuracy on adversarial inputs might only have a certified robust accuracy of $20.9\%$, using state-of-the-art methods \citep{weng2018towards,zhang2018efficient_nn,salman2019convex} based on a linear programming (LP) relaxation \citep{wong2018provable} of the ReLU activation (see Figure~\ref{fig:page1}).

In fact, recent work by \citet{salman2019convex} suggest that it is fundamentally impossible for \emph{any} method based on the LP relaxation to make substantially better certifications for adversarially trained models. Even mixed-integer linear programming (MILP) techniques like branch-and-bound and cutting planes \citep{tjeng2017evaluating,xu2021fast,wang2021beta}, which in theory are capable of exact certification given unlimited time, cannot in practice significantly close the gap left by the LP relaxation within reasonable time. 
Applying $\alpha,\beta$-CROWN \citep{zhang2018crown,wang2021beta,xu2021fast,zhang2022general}, the winning entry in the International Verification of Neural Networks Competition (VNN-COMP) competitions of 2021 and 2022, only improves the certified robust accuracy to $67.2\%$ before timing out, still leaving an unsatisfactory optimality gap of $10.2\%$. 
There appears to be an insurmountable ``convex relaxation barrier'', between the high degree of robustness that is empirically observed for adversarially trained models, and the low degree of robustness that can be rigorously certified via the LP relaxation, and mixed-integer programming methods based on the LP relaxation.


One promising direction for overcoming the barrier faced by the LP relaxation is to develop methods based on the semidefinite programming (SDP) relaxation
of \citet{raghunathan2018semidefinite}. Indeed, early experiments \citep{raghunathan2018certified,raghunathan2018semidefinite,zhang2020tightness,dathathri2020enabling,batten2021efficient} all suggest that the SDP relaxation can make significantly stronger certifications than the LP relaxation.
However, the cost of solving the SDP relaxation---while technically
still polynomial time---is so high as to be completely inaccessible. At its core,
the SDP relaxation requires optimizing over an $n\times n$ matrix variable, 
where $n$ is equal
to the total number of ReLU activations, plus the dimension of the
input layer. The fundamental difficulty is the need to store and to optimize over $n^2$ variables: even a single-layer MNIST classifier with 200 ReLU activations requires storing and optimizing over the $\approx 10^6$ elements of a $986\times 986$ matrix, which is already nearing
the limit of state-of-the-art SDP solvers like \citet{mosek2019mosek}, whose worst-case runtime scales as $O(n^6)$. Despite widespread speculation, it is currently unknown whether SDP-based methods will truly be able to provide tight certification for adversarially trained models.


\paragraph{Contributions}

This paper proposes a nonconvex certification technique, based on a \emph{low-rank restriction} of the usual convex SDP
relaxation of the ReLU activation. Rigorously, we show that the nonconvex
relaxation is always at least as tight as the SDP relaxation, but
that it optimizes over the $nr$ elements of an $n\times r$ rectangular
matrix. 
If a small value of $r$ can be used (we later validate this experimentally), then the nonconvex relaxation can make strong certifications comparable to the SDP relaxation, while optimizing over a dramatically smaller number of variables comparable to the LP relaxation. 

The nonconvexity of the relaxation poses a serious issue.
The correctness of our certification hinges critically on our ability
to solve a nonconvex verification problem to \emph{global} optimality, 
and then to \emph{certify} this global optimality,
but large-scale optimization algorithms are only capable of achieving and certifying \emph{local} optimality. 
In this paper, we establish that if a local
minimum for the verification problem is also global, then under a mild constraint qualification, the Lagrange multipliers that certify its local optimality are also guaranteed to certify its global optimality via Lagrangian duality (see \propref{dual} and \thmref{zerodual}). Conversely, if the
local minimum is non-global, then under the same mild constraint qualification, 
the Lagrange multipliers generate
a direction of global improvement (see \thmref{escape}), 
thereby ensuring that local optimization will eventually achieve certified global optimality.

Our experiments provide empirical confirmation of our theoretical
claims. We re-examine the models originally used by \citet{salman2019convex}
to demonstrate the existence of a ``convex relaxation barrier''
for the LP relaxation. Using a relaxation rank $r$ of no more than
10, we re-certify these models using our nonconvex relaxation, in
time comparable to that of the best-possible LP relaxation. Our results
find that even a basic nonconvex relaxation offers a significant reduction
in conservatism. Augmenting the nonconvex relaxation by bound propagation (as
is commonly done for the LP relaxation) allows us to almost fully
close the gap towards exact certification 
(see Figure~\ref{fig:page1}).


\paragraph{Related work}
Robustness certification methods can be broadly divided into exact and conservative methods. Exact methods based on mixed-integer linear programming (MILP)~\citep{tjeng2017evaluating,xu2021fast}
and Satisfiability Modulo Theories (SMT)~\citep{katz2017reluplex} can make necessary and sufficient certifications of robustness, but have worst-case runtimes that scale exponentially with the number of activations. Conservative methods can decline to certify a robust model, but have polynomial worst-case runtime, and therefore tend to be much more scalable in practice. Today, most state-of-the-art certification methods are conservative methods based on a triangle-shaped LP relaxation of the ReLU activation function introduced by \citet{wong2018provable}. In particular, \citep{weng2018evaluating,weng2018towards,zhang2018crown} proposed techniques for  strengthening these LP-based relaxations, by propagating tighter layer-wise upper- and lower-bounds on the ReLU activation function. Later work by \citet{wang2021beta} and \citet{xu2021fast} progressively refine these bounds using MILP \textcolor{mycolor}{and BnB} techniques. \citet{salman2019convex} proposed an optimal LP relaxation that unifies all the existing bound-propagating LP-based relaxation methods. This last paper pointed out that even the optimal LP relaxation has a gap cannot be improved; they refer to this inherent looseness as the ``convex relaxation barrier''. \textcolor{mycolor}{Another line of conservative methods are based on SDP relaxationof the ReLU gate \citep{raghunathan2018semidefinite,dathathri2020enabling}. Later work by \citet{batten2021efficient} further tighten the SDP relaxation using linear cut constraints.}


Our proposed approach can be interpreted as an application
of the Burer--Monteiro approach~\citep{burer2002rank,burer2005local} 
and the Riemannian staircase~\citep{boumal2016non,boumal2020deterministic} for
solving the SDP relaxation of \citet{raghunathan2018semidefinite}. 
Here, we emphasize that the rigorous 
applicability of these prior techniques hinges critically on the
linear independence constraint qualification (LICQ), a highly restrictive condition that is difficult to verify in practice. 
If LICQ does not hold, then the Riemannian staircase can become get stuck at a non-LICQ point, so rigorous global guarantees are lost.
In the existing literature, LICQ is often taken as a strong blanket assumption~\citep{rosen2014rise,carlone2015lagrangian,cohen2019certified,rosen2019se},
but this reduces the Riemannian staircase from a provable algorithm
to an empirical heuristic. In this paper, we formally establish LICQ for the verification problem in \lemref{npcq}. Assuming that local optimization does not get stuck at the ``corner'' of the ReLU (this is the same assumption that allows ReLU models to be trained via gradient descent), it immediately follows that our nonconvex relaxation can be globally optimized in polynomial time.

\section{Background}

\paragraph{Notations}
We use $(x_1,\ldots,x_\ell)$ to denote the vertical concatenation of $x_1,\ldots,x_\ell$, with $x_{k}$ stacking on top of $x_{k+1}$. 
We use ($\{x_k\}_{k=1}^\ell)$ as a shorthand notation for $(x_1,\ldots,x_\ell)$. 
We use the square bracket $x[i]$ and $X[i,j]$ to denote indexing. 
The size-$n$ identity matrix is written as $I_n$; we suppress the subscript $n$ whenever it can be inferred from context.
We write $\mathbf{e}_i$ to denote the $i$-th canonical basis, i.e. the $i$-th column of the appropriate identity matrix. 
We write $\mathrm{diag}(X)$ to extract the diagonal from the matrix $X$, and $\mathrm{diag}(x)$ to convert length-$n$ vector into an $n\times n$ diagonal matrix.
The $i$-th largest eigenvalue of a matrix $X$ is denoted $\lambda_i(X)$. We use $\odot$ to denote the elementwise product.

Consider the task of classifying a data point $\hat{x}\in\R^{p}$
as belonging to the $\hat{c}$-th of $q$ classes. The standard approach
is to train a classifier model $f:\R^{p}\to\R^{q}$ such that the prediction vector $f(\hat{x})$ takes on its maximum value at the $\hat{c}$-th element, as in $f(\hat{x})[\hat{c}]>f(\hat{x})[c]$ for all incorrect labels $c\ne\hat{c}$. In this paper, we focus our
attention on $\ell$-layer feedforward ReLU-based neural networks,
defined recursively 
\begin{align}
f(x)\equiv W_{\ell}x_{\ell}+b_{\ell},\quad x_{k+1}=\max\{0,W_{k}x_{k}+b_{k}\} \nonumber
\end{align}
for $k\in\{1,2,\dots,\ell-1\}$, where $x_{1}\equiv x$ is the input. Throughout the paper, we will use $n_{k}$
to denote the number of neurons at the $k$-th layer, and $n=\sum_{k=1}^{\ell}n_{k}$
to denote the total number of neurons. Note that our convention includes the neurons at the input layer, i.e. $x_{1}\equiv x$, but excludes
those at the output layer, i.e. $f(x)\equiv W_{\ell}x_{\ell}+b_{\ell}$. 

To compute an adversarial example $x\approx\hat{x}$, the standard
approach is to apply projected gradient descent (PGD) to the following \emph{semi-targeted attack} problem, which was first introduced by \cite{carlini2017towards}:
\begin{alignat}{2}
\phi[c]=\min_{x=(x_{1},\dots,x_{\ell})\in\R^{n}}\quad & w_{\ell}^{T}x_{\ell}+w_{0}x_{0} & \tag{A}\label{eq:attack}\\
\text{s.t. }\quad & x_{k+1}=\max\{0,W_{k}x_{k}+b_{k}\},\nonumber \\
 & \|x_{1}-\hat{x}\|\le\rho,\nonumber 
\end{alignat}
for all $k\in\{1,2,\dots,\ell-1\}$, where $w_\ell=(\e_{\hat{c}}-\e_{c})^{T}W_{\ell}$ and $w_0=(\e_{\hat{c}}-\e_{c})^{T}b_{\ell}$.  \footnote{To simplify presentation, we focus our attention on the $\ell_2$ norm, and assume $w_0=0$ without loss of generality. Our results can be extended for the $\ell_\infty$ norm and are included in the appendix.}Robustness to adversarial perturbations can be certified
by verifying that (\ref{eq:attack}) achieves a positive global minimum
$\phi[c]>0$ for every incorrect class $c\ne\hat{c}$. The numerical
value of the minimum global minimum, written
$\phi^{\star}=\min_{c\ne\hat{c}}\phi[c],$
is a \emph{robustness margin} that measures how robust the model is
to adversarial perturbations. The more positive is the robustness
margin $\phi^{\star}$, the more the model is able to resist misclassification.

\section{Rank-constrained SDP relaxation}
Our goal in this paper is to develop better lower-bounds on the semi-targeted attack problem (\ref{eq:attack}). Following existing work on the SDP relaxation, we begin by substituting
the rank-1 SDP reformulation of the ReLU activation in \citep{raghunathan2018semidefinite,zhang2020tightness}
to (\ref{eq:attack}). But instead
of deleting the rank-$1$ constraint altogether, we propose to slightly
relax it to a rank-$r$ constraint with a bounded trace, where $1 \le r \le n+1$, to result in a family of \emph{nonconvex} relaxations
\[
\phi_{r}[c]=\min_{X\in\S^{n+1}} \quad w_{\ell}^{T}x_{\ell} \tag{SDP-\ensuremath{r}}\label{eq:sdp}
\]
 subject to
\begin{align}
 & \tr(X_{1,1})-2x_{1}^{T}\hat{x}_{1}+\|\hat{x}_{1}\|^{2}x_{0}\le \rho^{2}x_{0}, \tag*{$(y_{0})$}\\
 & x_{k+1}\ge0,\quad x_{k+1}\ge W_{k}x_{k}+b_{k}x_{0}, \tag*{$(y_{k,1},y_{k,2})$} \\
 & \diag(X_{ k+1,k+1}-W_{k}X_{k,k+1}-b_{k}x_{k+1}^{T})=0,  \tag*{$(z_{k})$} \\
 & x_{0}=1,\quad \tr(X)\le R^{2} \tag*{$(z_{0},\mu)$}
\end{align}
for all $k\in\{1,2,\dots,\ell-1\}$, whose optimization variable $X$ is an $(n+1)\times(n+1)$ rank-$r$ constrained symmetric
positive semidefinite matrix
\[
X=\left[\begin{array}{c|ccc}
x_{0} & x_{1}^{T} & \cdots & x_{\ell}^{T}\\
\hline x_{1} & X_{1,1} & \cdots & X_{1,\ell}\\
\vdots & \vdots & \ddots & \vdots\\
x_{\ell} & X_{1,\ell}^{T} & \cdots & X_{\ell,\ell}
\end{array}\right]\succeq0,\quad \rank(X)\le r.
\]
Here we assign the dual variable associated with each constraint in the parenthesis. We will assume throughout the paper that the trace bound $R$
has been chosen large enough so that $\tr(X^{\star})<R^{2}$, or equivalently $\mu^\star=0$, holds at optimality. It follows from \citep{zhang2020tightness}
that $r=1$ instance of (\ref{eq:sdp}) coincides with (\ref{eq:attack})
exactly. Due to our use of a rank upper-bound, every subsequent instance
then provides a lower-bound on its previous relaxation:
\[
\phi[c]=\phi_{1}[c]\ge\phi_{2}[c]\ge\cdots\ge\phi_{n+1}[c].
\]
Finally, setting $r=n+1$ has the same effect as deleting the rank
constraint. Therefore, the $r=n+1$ instance coincides with the
convex semidefinite relaxation as originally proposed by \citep{raghunathan2018semidefinite}.

For relaxation ranks of $r<n+1$, the corresponding nonconvex instances
of (\ref{eq:sdp}) are NP-hard in general to solve to global optimality.
Even if we are provided with a globally optimal solution $X^{\star}$,
there is generally no way to (rigorously) tell that $X^{\star}$ is
indeed globally optimal. The most we can say is that $X^{\star}$
provides an upper-bound on the global minimum of (\ref{eq:sdp}). Unfortunately, 
this upper-bound is not helpful in our goal of lower-bounding (\ref{eq:attack}).

Instead, we will derive a lower-bound on  (\ref{eq:sdp}) via Lagrangian duality, which will also serve as a valid lower-bound on (\ref{eq:attack}). Our motivating insight is that all instances of (\ref{eq:sdp}), including
those nonconvex instances with $r<n+1$, have the \emph{same} convex
Lagrangian dual. Define dual variables $y=(y_{0},\{y_{k,1},y_{k,2}\}_{k=1}^{\ell-1})\ge 0$, $z=(y_{0},\{z_{k}\}_{k=1}^{\ell-1})$ and $\mu \le 0$ to correspond to the linear
constraints in (\ref{eq:sdp}) as shown in parentheses. Then, the
dual problem of (\ref{eq:sdp}) is written:
\begin{align}
\max_{y\ge0,\ z,\ \mu\le0}\ z_{0}+R^{2}\mu\quad \text{s.t.}\quad S(y,z)\succeq\mu I,\tag{SDD}\label{eq:sdd}
\end{align}
in which the components of the slack matrix
\[
S(y,z)\equiv\frac{1}{2}\left[\begin{array}{c|cccc}
s_{0} & s_{1}^{T} & s_{2}^{T} & \cdots & s_{\ell}^{T}\\
\hline s_{1} & S_{1,1} & S_{1,2}\\
s_{2} & S_{1,2}^{T} & S_{2,2} & \ddots\\
\vdots &  & \ddots & \ddots & S_{\ell-1,\ell}\\
s_{\ell} &  &  & S_{\ell-1,\ell}^{T} & S_{\ell,\ell}
\end{array}\right]
\]
are written
\begin{gather*}
s_{0}=2\left[y_{0}(\|\hat{x}\|^{2}-\rho^{2})+\sum_{k=1}^{\ell-1}b_{k}^{T}y_{k,2}-z_{0}\right],\\
s_{1}=W_{1}^{T}y_{1,2}-2\hat{x}y_{0},\\
s_{k+1}=W_{k+1}^{T}y_{k+1,2}-\left(Z_{k}b_{k}+y_{k,1}+y_{k,2}\right),\\
s_{\ell}=w_{\ell}-\left[Z_{\ell-1}b_{\ell-1}+y_{\ell-1,1}+y_{\ell-1,2}\right],\\
S_{1,1}=2y_{0}I,\ S_{k,k+1}=-W_{k}^{T}Z_{k},\ S_{k+1,k+1}=2Z_{k},
\end{gather*}
where $Z_{k}=\diag(z_{k})$. Here, $s_{k+1}$ is defined for all $k\in\{1,\ldots\ell-2\}$, and $S_{k,k+1}$ and $S_{k+1,k+1}$ are defined for all $k\in\{1,\ldots\ell-1\}$. We therefore obtain the following lower-bound on the semi-targeted attack problem in (\ref{eq:attack}),
which is valid for \emph{any} choice of multipliers $(y,z)$.

\begin{proposition}[Dual lower-bound]\label{prop:dual}
Let $X^{\star}$ denote the global solution of
(\ref{eq:sdp}) with rank $r\ge1$ and $\tr(X^\star)<R^2$. Then, any dual multipliers $y=(y_{0},\{y_{k,1},y_{k,2}\}_{k=1}^{\ell-1})$
and $z=(z_{0},\{z_{k}\}_{k=1}^{\ell-1})$ that satisfy $y\ge0$ provide
the following lower-bound
\[
\phi[c]\ge\phi_{r}[c]\ge z_{0}+R^{2}\cdot\min\{0,\lambda_{\min}[S(y,z)]\}.
\]
\end{proposition}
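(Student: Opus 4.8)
The plan is to derive the bound by weak Lagrangian duality, exhibiting an explicit feasible point of the convex dual (\ref{eq:sdd}) whose objective value matches the claimed right-hand side. The key structural fact is that this argument never touches the rank constraint of (\ref{eq:sdp}): it uses only $X\succeq0$, which is why the same convex dual lower-bounds every nonconvex instance. First, the left inequality $\phi[c]\ge\phi_{r}[c]$ is already in hand, since $r\ge1$ and the text establishes the monotone chain $\phi[c]=\phi_{1}[c]\ge\phi_{2}[c]\ge\cdots$ from the rank upper-bound (with the blanket assumption $\tr(X^{\star})<R^{2}$ ensuring the trace bound does not interfere). So it remains to prove $\phi_{r}[c]\ge z_{0}+R^{2}\min\{0,\lambda_{\min}[S(y,z)]\}$.

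Set $\mu:=\min\{0,\lambda_{\min}[S(y,z)]\}$. Then $\mu\le0$ and $\lambda_{\min}[S(y,z)]\ge\mu$, i.e.\ $S(y,z)\succeq\mu I$, so together with $y\ge0$ the triple $(y,z,\mu)$ is feasible for (\ref{eq:sdd}), and its objective value is precisely $z_{0}+R^{2}\mu=z_{0}+R^{2}\min\{0,\lambda_{\min}[S(y,z)]\}$. It therefore suffices to establish the weak-duality inequality $\phi_{r}[c]\ge z_{0}+R^{2}\mu$ for this triple.

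The core is a Lagrangian identity. Write the objective of (\ref{eq:sdp}) as $w_{\ell}^{T}x_{\ell}=\inner{C}{X}$ for the cost matrix $C\in\S^{n+1}$ with $\inner{C}{X}=w_{\ell}^{T}x_{\ell}$. Adding the linear constraints of (\ref{eq:sdp}) to the objective, each weighted by the multiplier assigned to it in parentheses (with the trace bound $\tr(X)\le R^{2}$ weighted by $-\mu\ge0$), one checks block by block — matching $s_{0},s_{k},s_{\ell},S_{k,k},S_{k,k+1}$ against the constraint data and keeping track of the factor $\tfrac12$ in the definition of $S(y,z)$ — that the resulting Lagrangian collapses to
\[
\mathcal{L}(X)=\inner{S(y,z)-\mu I}{X}+z_{0}+R^{2}\mu.
\]
Concretely this amounts to verifying that $S(y,z)$ equals $C$ plus the matrix representations of the dualized linear constraints, with scalar remainder $z_{0}+R^{2}\mu$; the only mildly delicate terms come from the ReLU constraints $(z_{k})$, where $z_{k}^{T}\diag(X_{k+1,k+1}-W_{k}X_{k,k+1}-b_{k}x_{k+1}^{T})=\inner{Z_{k}}{X_{k+1,k+1}-W_{k}X_{k,k+1}-b_{k}x_{k+1}^{T}}$ with $Z_{k}=\diag(z_{k})$ produces the blocks $2Z_{k}$, $-W_{k}^{T}Z_{k}$ and the vector term $-Z_{k}b_{k}$.

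Finally, evaluate $\mathcal{L}$ at the global solution $X^{\star}$. Because $X^{\star}$ is feasible for (\ref{eq:sdp}), the equality residuals vanish, while each inequality residual is $\le0$ and its multiplier (a component of $y$, or $-\mu$) is $\ge0$, so the weighted residuals are $\le0$ and hence $\inner{C}{X^{\star}}\ge\mathcal{L}(X^{\star})$. On the other hand $\inner{S(y,z)-\mu I}{X^{\star}}\ge0$ since $S(y,z)-\mu I\succeq0$ by the choice of $\mu$ and $X^{\star}\succeq0$ by feasibility. Combining,
\[
\phi_{r}[c]=\inner{C}{X^{\star}}\ge\mathcal{L}(X^{\star})=\inner{S(y,z)-\mu I}{X^{\star}}+z_{0}+R^{2}\mu\ge z_{0}+R^{2}\mu,
\]
which is the claim. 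The only real obstacle is the bookkeeping in the third step: confirming that the Lagrangian genuinely collapses to the stated slack-matrix form across all layer blocks, including the special form of $s_{\ell}$ where the objective $w_{\ell}$ enters.
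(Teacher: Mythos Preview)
Your proposal is correct and follows essentially the same weak-duality argument as the paper: write the objective as $\inner{S(y,z)}{X^{\star}}$ minus the dualized constraints, drop the inequality residuals using $y\ge0$ and feasibility, recover $z_{0}$ from the equality $x_{0}=1$, and bound $\inner{S(y,z)}{X^{\star}}$ below via $X^{\star}\succeq0$ and $\tr(X^{\star})\le R^{2}$. The only cosmetic difference is that you package the last step by explicitly setting $\mu=\min\{0,\lambda_{\min}[S(y,z)]\}$ and exhibiting a feasible point of (\ref{eq:sdd}), whereas the paper bounds $\inner{S(y,z)}{X^{\star}}\ge R^{2}\min\{0,\lambda_{\min}[S(y,z)]\}$ directly without naming $\mu$.
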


Let $X^{\star}$ denote the globally optimal solution for the convex
instance of (\ref{eq:sdp}) with $r=n+1$. It turns out that \emph{strong
duality} is satisfied in this convex case, meaning that there exists
optimal multipliers $y^{\star},z^{\star}$ that exactly satisfy 
\[
\phi[c]\ge\phi_{n+1}[c]=z_{0}^{\star}+R^{2}\cdot\min\{0,\lambda_{\min}[S(y^{\star},z^{\star})]\}
\]
and therefore \emph{certify} the global optimality $X^{\star}$ via
\propref{dual}. Now, suppose that the convex solution $X^{\star}$ is in fact low-rank,
as in $r^{\star}=\rank(X^{\star})\ll n$. The statement below says
that the \emph{nonconvex} instance of (\ref{eq:sdp}) with $r=r^{\star}$
also admits optimal multipliers $y^{\star},z^{\star}$ that certify
global optimality.
\begin{theorem}[Existence of global optimality certificate]
\label{thm:rstar}Let $r^{\star}=\rank(X^{\star})$ and $\tr(X^{\star})<R^2$, where $X^{\star}$ denotes the maximum-rank solution to the convex instance of (\ref{eq:sdp})
with $r=n+1$. Then, there exists optimal multipliers $y^{\star}=(y_{0}^{\star},\{y_{k,1}^{\star},y_{k,2}^{\star}\}_{k=1}^{\ell-1})$
and $z^{\star}=(z_{0}^{\star},\{z_{k}^{\star}\}_{k=1}^{\ell-1})$
that satisfy $y^{\star}\ge0$ and the following
\[
\phi[c]\ge\phi_{r^{\star}}[c]=z_{0}^{\star}+R^{2}\cdot\min\{0,\lambda_{\min}[S(y^{\star},z^{\star})]\}.
\]
\end{theorem}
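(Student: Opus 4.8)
The plan is to transfer the dual optimality certificate from the convex instance ($r=n+1$) of (\ref{eq:sdp}) to the nonconvex instance with $r=r^{\star}$, exploiting the fact noted above that \emph{every} instance of (\ref{eq:sdp}), convex or not, shares the same Lagrangian dual (\ref{eq:sdd}).

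First I would show that $X^{\star}$ is itself a global minimizer of the rank-$r^{\star}$ instance of (\ref{eq:sdp}). Every constraint of (\ref{eq:sdp}) other than the rank bound is affine in $X$ together with $X\succeq0$, so $X^{\star}$ inherits all of them from its feasibility in the $r=n+1$ instance; and the rank bound $\rank(X)\le r^{\star}$ holds by the very definition $r^{\star}=\rank(X^{\star})$. Thus $X^{\star}$ is feasible for the $r=r^{\star}$ instance and attains objective value $\phi_{n+1}[c]$ there, giving $\phi_{r^{\star}}[c]\le\phi_{n+1}[c]$. The reverse inequality $\phi_{r^{\star}}[c]\ge\phi_{n+1}[c]$ is just the monotonicity chain $\phi[c]=\phi_{1}[c]\ge\cdots\ge\phi_{n+1}[c]$ recorded above. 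Hence $\phi_{r^{\star}}[c]=\phi_{n+1}[c]$, and $X^{\star}$ is a global minimizer of (\ref{eq:sdp}) with $r=r^{\star}$ that also satisfies $\tr(X^{\star})<R^{2}$.

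Next I would invoke strong duality for the convex instance $r=n+1$: there is a dual-feasible pair $(y^{\star},z^{\star})$ with $y^{\star}\ge0$ attaining the dual optimum of (\ref{eq:sdd}), so that $\phi_{n+1}[c]=z_{0}^{\star}+R^{2}\cdot\min\{0,\lambda_{\min}[S(y^{\star},z^{\star})]\}$ (under $\tr(X^{\star})<R^{2}$ complementary slackness in fact forces the matching multiplier $\mu^{\star}=0$, so the minimum is zero; I keep the $\min\{0,\cdot\}$ form only to match the statement). Applying the weak-duality bound of \propref{dual} to the global minimizer $X^{\star}$ of the $r=r^{\star}$ instance gives $\phi_{r^{\star}}[c]\ge z_{0}^{\star}+R^{2}\cdot\min\{0,\lambda_{\min}[S(y^{\star},z^{\star})]\}$; combined with $\phi_{r^{\star}}[c]=\phi_{n+1}[c]$ from the previous step this inequality is an equality. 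Together with $\phi[c]=\phi_{1}[c]\ge\phi_{r^{\star}}[c]$ this is exactly the asserted conclusion.

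The only substantive ingredient is the strong duality and dual attainment of the convex instance, which is where a constraint qualification enters and must be handled with care: the equality constraints $\diag(X_{k+1,k+1}-W_{k}X_{k,k+1}-b_{k}x_{k+1}^{T})=0$ and $x_{0}=1$ rule out naive strict feasibility of (\ref{eq:sdp}), so I would either cite the strong-duality fact stated just before the theorem or verify a Slater-type condition directly on (\ref{eq:sdp}) or (\ref{eq:sdd}); once that is in hand, the rest is bookkeeping, precisely because (\ref{eq:sdd}) does not depend on $r$ at all. I would also note in passing that maximality of $\rank(X^{\star})$ is not strictly required for this existence claim --- any rank-$r^{\star}$ optimal solution would serve --- but phrasing the theorem with the maximum-rank solution pins down the canonical value of $r^{\star}$ that reappears in the subsequent algorithmic results.
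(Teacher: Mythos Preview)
Your proposal is correct and matches the paper's approach: the paper does not give a standalone proof of \thmref{rstar} in the appendix, but the argument it sketches in the text immediately preceding the theorem is precisely yours---strong duality for the convex $r=n+1$ instance supplies $(y^{\star},z^{\star})$, the shared dual (\ref{eq:sdd}) and the monotonicity chain $\phi_{r^{\star}}[c]\ge\phi_{n+1}[c]$ together with feasibility of $X^{\star}$ for the rank-$r^{\star}$ instance then force $\phi_{r^{\star}}[c]=\phi_{n+1}[c]$. Your caveat about Slater and the remark that maximality of the rank is inessential for this existence claim are both accurate.
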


In the following section, we use an approach of \citet{burer2003nonlinear} and \cite{boumal2016non,boumal2020deterministic} 
to constructively compute the optimal multipliers 
$y^{\star},z^{\star}$ that have been asserted to exist by \thmref{rstar}. In turn, plugging $y^{\star},z^{\star}$ into \propref{dual} produces a tight lower-bound on the semi-targeted attack problem (\ref{eq:attack}), thereby achieving the original goal of this section. 

\section{Solution via Nonlinear Programming}
In order to expose the underlying degrees of freedom in the rank-$r$
matrix $X$, we reformulate problem (\ref{eq:sdp}) into a low-rank
factorization form first proposed by \citep{burer2003nonlinear}:
\begin{align}
 &\phi_{r}[c]=\min_{u_{0},u,V}\quad  u_{0}\cdot(w_{\ell}^{T}u_{\ell}) \tag{BM-\ensuremath{r}}\label{eq:bm}\\
 &\text{subject to} \nonumber \\
 & \|u_{1}-u_{0}\hat{x}\|^{2}+\|V_{1}\|^{2}\le \rho^{2},\quad u_{0}^{2}=1, \tag*{$(y_{0},z_{0})$}\nonumber \\
 & u_{0}\cdot u_{k+1}\ge0, \tag*{$(y_{k,1})$}\nonumber \\
 & u_{0}\cdot(u_{k+1}-W_{k}u_{k}-b_{k}u_{0})\ge0, & \tag*{$(y_{k,2})$}\nonumber \\
  \begin{split}
 	&\diag[(u_{k+1}-W_{k}u_{k}-b_{k}u_{0})u_{k+1}^{T}\\
 	&\qquad +(V_{k+1}-W_{k}V_{k})V_{k+1}^{T}]=0,
  \end{split} \tag*{$(z_{k})$}\nonumber \\
 & u_{0}^{2}+\sum_{k=1}^{\ell-1}(\|u_{k}\|^{2}+\|V_{k}\|^{2})\le R^{2}, \tag*{$(\mu)$}\nonumber
\end{align}
for all $k\in\{1,\dots,\ell-1\}$, and over optimization variables
are $u_{0}\in\R$ and $u=(u_{1},\dots,u_{\ell})\in\R^{n}$ and $V=(V_{1},\dots,V_{\ell})\in\R^{n\times(r-1)}$.
Problem (\ref{eq:bm}) is obtained by substituting the following into
(\ref{eq:sdp})
\begin{equation}
X=\left[\begin{array}{c|c}
u_{0} & 0\\
\hline u_{1} & V_{1}\\
\vdots & \vdots\\
u_{\ell} & V_{\ell}
\end{array}\right]\left[\begin{array}{c|c}
u_{0} & 0\\
\hline u_{1} & V_{1}\\
\vdots & \vdots\\
u_{\ell} & V_{\ell}
\end{array}\right]^{T}=UU^{T}.\label{eq:bmpart}
\end{equation}
The equivalence between these two problems follows because every $(n+1)\times(n+1)$
matrix $X$ of rank $r$ can be factored as $X=LL^{T}$ into a low-rank
Cholesky factor $L$ that is both lower-triangular and of dimensions
$(n+1)\times r$. The advantage of the formulation (\ref{eq:bm}) is that it reduces
the number of explicit variables from the $\frac{1}{2}n(n+1)\approx\frac{1}{2}n^{2}$
in the original matrix $X$ to $nr+1\approx nr$ in the factor matrix
$U$, while also allowing the positive semidefinite constraint $X\succeq0$
to be enforced for free. For moderate values of $r\ll n$, the resulting
instance of (\ref{eq:bm}) contains just $O(n)$ variables and constraints.

We propose solving (\ref{eq:bm}) as an instance of the standard-form
nonlinear program,
\begin{equation}
\min_{\|x\|\le R}\quad f(x)\quad\text{ s.t. }\quad g(x)\le0,\quad h(x)=0,\tag{NLP}\label{eq:nlp}
\end{equation}
using a high-performance general-purpose solver like \texttt{fmincon}
or \texttt{knitro}. These are primal-dual solvers, and are designed
to output a primal point $x=(u_{0},u,\vec(V))$ that is \emph{first-order
optimal}, and dual multipliers $y=(y_{0},\{y_{k,1},y_{k,2}\}_{k=1}^{\ell-1})$
and $z=(z_{0},\{z_{k}\}_{k=1}^{\ell-1})$ that \emph{certify} the first-order
optimality of $x$. 
Below, the notion of first-order optimality is
taken from \citep[Theorem 12.3]{nocedal2006numerical}, and the notion
of certifiability follows from the proof of \citep[Theorem 12.1]{nocedal2006numerical}.
\begin{definition}[Certifiably first-order optimal]
The point $x$ is said to be \emph{first-order optimal} if it satisfies
the constraints $g(x)\le0$ and $h(x)=0$, and there exists no escape
path $x(t)$ that begins at $x(0)=x$ and makes a first-order improvement
to the objective while satisfying all constraints, as in
\begin{equation}
f(x(t))\le f(x)-\delta t,\quad g(x(t))\le0,\quad h(x(t))=0, \nonumber
\end{equation}
for all $t\in[0,\epsilon)$ with sufficiently small $\delta>0$ and $\epsilon>0$. 
Additionally, $x$ is said to be \emph{certifiably} first-order optimal if there
exist dual multipliers $y$ and $z$ that satisfy the Karush--Kuhn--Tucker
(KKT) equations:
\begin{equation}
\begin{gathered}
\nabla f(x)+\nabla g(x)y+\nabla h(x)z=0, \; y\odot g(x)=0,\;  y\ge0. \nonumber
\end{gathered}
\end{equation}
\end{definition}
Our main idea is to simply take the dual multipliers $y,z$ computed
by the nonlinear programming solver, round them $y\gets\max\{0,y\}$ to ensure that $y\ge0$, and then to plug them back into
\propref{dual}. Our main result is that, if $x$ is globally optimal and satisfies a mild constraint qualification, then the corresponding dual multipliers $y,z$ exist and are unique. Therefore, if $x$ is indeed globally optimal, then the dual multipliers $y,z$
that certify the local optimality of $x$ must also coincide with the optimal multipliers $y^{\star},z^{\star}$ that were asserted to existed earlier in \thmref{rstar}.

\begin{lemma}[Nonzero preactivation]\label{lem:npcq}
Suppose we have $x=(u_{0},u_{1},\dots,u_{\ell},\vec(V_{1}),\dots,\vec(V_{\ell}))$ that satisfies:
\begin{equation}
\e_{i}^{T}(W_{k}u_{k}+b_{k}u_{0})\ne0,\quad\e_{i}^{T}W_{k}V_{k}\ne0,\tag{NPCQ}\label{eq:npcq}
\end{equation}
for all $k\in\{1,\dots,\ell-1\}$ and $i\in\{1,\dots,n_{k+1}\}$. Then, $x$ is first-order optimal if and only if there exist dual
multipliers $y$ and $z$ to certify $x$ as being first-order optimal.
Moreover, the choice of dual multipliers $y,z$ is unique. 
\end{lemma}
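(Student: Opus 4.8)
The plan is to reduce the lemma to the statement that the linear independence constraint qualification (LICQ) holds at $x$, i.e.\ that the gradients of the active constraints of~(\ref{eq:bm}) are linearly independent. Granting LICQ, both parts follow from standard theory: the direction ``certifying multipliers exist $\Rightarrow$ $x$ is first-order optimal'' needs no constraint qualification at all---along any feasible path $x(t)$ with $x(0)=x$, differentiating the constraints and using stationarity yields $\nabla f(x)^T\dot x(0)=-\sum_i y_i\,\nabla g_i(x)^T\dot x(0)\ge0$ because $\nabla h_j(x)^T\dot x(0)=0$, $y\ge0$, $y_i=0$ on inactive constraints, and $\nabla g_i(x)^T\dot x(0)\le0$ on active ones---whereas under LICQ the reverse implication holds and the multipliers, being the solution of $-\nabla f(x)=\sum_{i\in\mathcal A}y_i\nabla g_i(x)+\sum_j z_j\nabla h_j(x)$ with linearly independent active gradients, are unique; this is the content of \citep[Theorem~12.1]{nocedal2006numerical} and its proof. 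The norm-ball constraint is slack by the standing assumption $\tr(X^\star)<R^2$, so only $(y_0),(z_0),(y_{k,1}),(y_{k,2}),(z_k)$ enter LICQ, and the entire task is to verify their active gradients are linearly independent.

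To do this I would exploit the banded structure of the constraint Jacobian in the variable ordering $u_0,(u_1,V_1),\dots,(u_\ell,V_\ell)$. First, the preactivation condition $\e_i^T(W_ku_k+b_ku_0)\ne0$ forces at most one of the two scalar inequalities $(y_{k,1})_i,(y_{k,2})_i$ to be active at each neuron: both active would need $\e_i^Tu_{k+1}=0$ and $\e_i^Tu_{k+1}=\e_i^T(W_ku_k+b_ku_0)$, contradicting~(\ref{eq:npcq}); this is the analogue of ``$x$ is not at a ReLU kink'', and it keeps the two inequalities from contributing identical rows to the Jacobian blocks below. Second, restricted to the layer-$(k{+}1)$ variables $\e_i^Tu_{k+1}$ (the $i$-th entry of $u_{k+1}$) and $\e_i^TV_{k+1}$ (the $i$-th row of $V_{k+1}$), the active inequality at $(k,i)$, if any, has gradient $(u_0,\mathbf{0})$, whereas $(z_k)_i$ has gradient $\bigl(2\e_i^Tu_{k+1}-\e_i^T(W_ku_k+b_ku_0),\ 2\e_i^TV_{k+1}-\e_i^TW_kV_k\bigr)$; and since distinct neurons $i$ touch disjoint layer-$(k{+}1)$ variables, the submatrix coupling the layer-$k$ constraints to the layer-$(k{+}1)$ variables is block diagonal over neurons.

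The crux---and the step I expect to be the main obstacle---is the per-neuron claim that each such block (one or two rows, $r$ columns) has full row rank. Since $u_0^2=1$ the inequality gradient $(u_0,\mathbf{0})$ is nonzero, so it remains to rule out $2\e_i^TV_{k+1}=\e_i^TW_kV_k$, which would make the $(z_k)_i$ gradient a scalar multiple of it (or zero). This is precisely where the second part of~(\ref{eq:npcq}), $\e_i^TW_kV_k\ne0$, must be combined with feasibility of $(z_k)_i$. I would split into the cases ``$(y_{k,1})_i$ active'' ($\e_i^Tu_{k+1}=0$), ``$(y_{k,2})_i$ active'' ($\e_i^Tu_{k+1}=\e_i^T(W_ku_k+b_ku_0)$), and ``neither active''. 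In the first two cases $(z_k)_i=0$ collapses to $\|\e_i^TV_{k+1}\|^2=\langle\e_i^TW_kV_k,\e_i^TV_{k+1}\rangle$, so $2\e_i^TV_{k+1}=\e_i^TW_kV_k$ would give $\|\e_i^TV_{k+1}\|^2=2\|\e_i^TV_{k+1}\|^2$, hence $\e_i^TV_{k+1}=0=\e_i^TW_kV_k$, contradicting~(\ref{eq:npcq}); in the last case one argues similarly, using $\e_i^T(W_ku_k+b_ku_0)\ne0$ to also preclude the $(z_k)_i$ gradient from vanishing in the $\e_i^Tu_{k+1}$ coordinate. Making this case analysis airtight---checking that feasibility genuinely forbids the degenerate alignment---is the delicate part; the rest is bookkeeping.

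Granting the per-neuron claim, each layer-$k$ block is block diagonal with full-row-rank blocks, and I would finish by peeling from the output side. Reading a hypothetical vanishing combination of active gradients in the $(u_\ell,V_\ell)$ coordinates involves only the layer-$(\ell-1)$ constraints, so all their coefficients vanish; deleting those, the $(u_{\ell-1},V_{\ell-1})$ coordinates involve only the layer-$(\ell-2)$ constraints, and so on down to layer~$1$. Only the coefficients of $(y_0)$ and $(z_0)$ then remain: if $(y_0)$ is active then $\|u_1-u_0\hat{x}\|^2+\|V_1\|^2=\rho^2>0$, so its gradient is nonzero in the $(u_1,V_1)$ coordinates and its coefficient vanishes; and $2u_0\ne0$ forces the coefficient of $(z_0)$ to vanish. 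Hence LICQ holds at $x$, completing the proof.
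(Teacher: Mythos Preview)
Your proposal is correct and follows essentially the same route as the paper: reduce to LICQ, exploit the block tri-diagonal structure of the constraint Jacobian, peel off layers from $k=\ell-1$ down to $k=1$ using a per-neuron full-rank argument, and finally dispatch $(y_0),(z_0)$ using $\rho>0$ and $u_0^2=1$. The paper organizes this as a hierarchy (single neuron $\to$ single layer $\to$ multiple layers $\to$ full problem) with explicit matrix computations, but the content is the same as what you outline; in particular your observation that at most one of $(y_{k,1})_i,(y_{k,2})_i$ can be active, and your case split on which one, mirrors the paper's single-neuron lemma exactly.

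One small remark on the step you flag as delicate: the paper handles all three per-neuron cases uniformly by completing the square in the equality constraint, obtaining
\[
\bigl\|\bigl(2\e_i^Tu_{k+1}-\e_i^T(W_ku_k+b_ku_0),\;2\e_i^TV_{k+1}-\e_i^TW_kV_k\bigr)\bigr\|
=\bigl\|\bigl(\e_i^T(W_ku_k+b_ku_0),\;\e_i^TW_kV_k\bigr)\bigr\|,
\]
so (\ref{eq:npcq}) immediately forces the $(z_k)_i$ gradient (restricted to the $(\e_i^Tu_{k+1},\e_i^TV_{k+1})$ coordinates) to be nonzero, and in the active-inequality cases the $V$-part alone has norm $\|\e_i^TW_kV_k\|>0$. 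This is equivalent to your direct substitution argument but avoids the case bookkeeping you were worried about.
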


\begin{theorem}[Zero duality gap]
\label{thm:zerodual}Let $r\ge r^{\star}$ where $r^{\star}$ is
defined in \thmref{rstar}. If $x$ is globally optimal and satisfies
(\ref{eq:npcq}), then the dual multipliers $y$ and $z$ that certify
$x$ to be first-order optimal must also certify $x$ to be globally
optimal, as in
\[
\phi_{r}[c]=u_{0}\cdot(w_{\ell}^{T}u_{\ell})=z_{0}+R^{2}\cdot\max\{0,\lambda_{\min}[S(y,z)]\}.
\]
\end{theorem}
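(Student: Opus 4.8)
The plan is to show that the unique KKT multipliers $(y,z)$ produced at a globally optimal $x$ are in fact optimal for the convex dual \eqref{eq:sdd}, and hence certify $x$ globally by \propref{dual}. First I would invoke \lemref{npcq}: since $x$ satisfies \eqref{eq:npcq}, the linear independence constraint qualification holds at $x$, so the KKT multipliers $(y,z)$ exist and are unique, and after the rounding $y\gets\max\{0,y\}$ we have $y\ge0$ (the rounding is vacuous since uniqueness already forces $y\ge 0$ at a genuine first-order optimal point; I would remark on this). Now form the slack matrix $S=S(y,z)$ and the factor matrix $U$ from \eqref{eq:bmpart}, so that $X=UU^\top$ is the rank-$\le r$ point attained at $x$. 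The stationarity equation $\nabla f(x)+\nabla g(x)y+\nabla h(x)z=0$ for the Burer--Monteiro problem \eqref{eq:bm}, after unpacking the gradients with respect to $u_0$, $u$, and $V$, is exactly the statement that $(S-\mu I)\,U = 0$ for the appropriate scalar $\mu\le 0$ tied to the trace constraint; I would carry out this identification column-block by column-block, matching the $u_k$-derivatives against the first column of $S U$ and the $V_k$-derivatives against the remaining columns. Because we have assumed $\tr(X^\star)<R^2$, the trace multiplier vanishes, $\mu=0$, so stationarity reads $S U = 0$, i.e. $\mathrm{range}(U)\subseteq\ker S$, equivalently $SX = 0$.

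Next I would establish dual feasibility, namely $S\succeq 0$. This is the crux and the main obstacle. Here is where $r\ge r^\star$ enters. Let $X^\star$ be the maximum-rank convex solution of \thmref{rstar}, with $\mathrm{rank}(X^\star)=r^\star$, and let $(y^\star,z^\star)$ be the optimal dual multipliers guaranteed by that theorem, with $S^\star=S(y^\star,z^\star)\succeq 0$ and complementary slackness $S^\star X^\star=0$. Since $r\ge r^\star$, the point $X^\star$ is feasible for the rank-$r$ program \eqref{eq:sdp}, hence $x$ (being globally optimal for \eqref{eq:bm}/\eqref{eq:sdp} at rank $r$) attains objective value $\phi_r[c]=w_\ell^\top x_\ell^\star=z_0^\star+0$ (using strong duality in the convex case and that $\phi_r[c]\le\phi_{n+1}[c]$ combined with the lower bound of \propref{dual} applied with $(y^\star,z^\star)$, which forces equality). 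Now I need to upgrade this from "the dual objective is matched" to "the \emph{specific} multipliers $(y,z)$ are dual feasible." The argument I would use: complementary slackness and primal stationarity of $x$, together with \eqref{eq:npcq}, pin down $(y,z)$ as the unique KKT pair, and one shows $(y^\star,z^\star)$ is also a KKT pair for the rank-$r$ Burer--Monteiro problem at $x$ — indeed $(y^\star,z^\star)$ satisfies complementarity with $x$ (since $S^\star X^\star=0$ and $X^\star$ has the same support pattern of active constraints as $x$, which is where a short case analysis on the ReLU active/inactive neurons via \eqref{eq:npcq} is needed) and satisfies stationarity $S^\star U=0$. By the uniqueness clause of \lemref{npcq}, $(y,z)=(y^\star,z^\star)$, whence $S=S^\star\succeq 0$.

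Finally I would close the loop with complementary slackness to get the exact value. We have $SX=0$ with $S\succeq 0$ and $X\succeq 0$, so $\langle S,X\rangle=0$; expanding $\langle S,X\rangle$ in terms of the constraint data of \eqref{eq:sdp} and using $y\odot g(x)=0$, $h(x)=0$, $\mu=0$ recovers $w_\ell^\top x_\ell = z_0 + R^2\mu = z_0$, which is precisely $z_0+R^2\cdot\max\{0,\lambda_{\min}[S(y,z)]\}$ since $\lambda_{\min}(S)\ge 0$. This matches the convex dual optimal value, so by weak duality $(y,z)$ is optimal for \eqref{eq:sdd} and certifies $x$ globally. The one technical point deserving care throughout is the active-set bookkeeping for the inequality constraints $(y_{k,1})$ and $(y_{k,2})$: \eqref{eq:npcq} guarantees no preactivation is exactly zero, so for each neuron exactly one of "$u_0 u_{k+1,i}=0$" or "$u_0(u_{k+1}-W_ku_k-b_ku_0)_i=0$" is active (never both, never the degenerate corner), which is exactly the structural fact that makes the KKT system square and the multipliers unique — and which also aligns the support of $(y,z)$ with that of $(y^\star,z^\star)$.
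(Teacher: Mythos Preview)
Your overall strategy matches the paper's: the paragraph preceding \lemref{npcq} explains that the argument proceeds by showing the unique KKT multipliers $(y,z)$ at $x$ must coincide with the convex-optimal $(y^{\star},z^{\star})$ of \thmref{rstar}, and the appendix does not give a standalone proof of \thmref{zerodual} beyond this. However, your justification that $(y^{\star},z^{\star})$ is a KKT pair for \eqref{eq:bm} at the point $x$ has a gap. You argue complementarity via ``$S^{\star}X^{\star}=0$ and $X^{\star}$ has the same support pattern of active constraints as $x$,'' and you then assert stationarity $S^{\star}U=0$ without proof. The ``same support pattern'' claim is neither established nor true in general: distinct convex optima need not share active sets, and nothing forces $X=UU^{\top}$ to equal $X^{\star}$. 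Routing through $X^{\star}$ here is a detour that does not actually deliver either $S^{\star}U=0$ or $y^{\star}\odot g(x)=0$.

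The fix is direct and uses an observation you already made. Since $r\ge r^{\star}$, the convex optimum $X^{\star}$ is feasible at rank $r$, so $\phi_{r}[c]=\phi_{n+1}[c]$; hence $X=UU^{\top}$ is itself optimal for the \emph{convex} instance of \eqref{eq:sdp}. Convex complementary slackness therefore applies to $X$ directly (not via $X^{\star}$): with $S^{\star}\succeq0$ and strong duality, the zero duality gap forces $\langle S^{\star},X\rangle=0$ and $y^{\star}\odot\G(X)=0$. The first yields $S^{\star}X=0$ (two PSD matrices with zero inner product commute and share a null space), hence $S^{\star}U=0$; the second is exactly $y^{\star}\odot g(x)=0$. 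Thus $(y^{\star},z^{\star})$ satisfies the KKT conditions for \eqref{eq:bm} at $x$, and uniqueness from \lemref{npcq} finishes the argument as you intended. A minor side remark on your last paragraph: \eqref{eq:npcq} guarantees that at most one of the two ReLU inequalities is active at each neuron, not exactly one---both can be strictly slack in the relaxation---so the ``exactly one'' phrasing should be weakened.
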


Conversely, if a first-order optimal point $x$  satisfies the constraint qualification but is not globally optimal, then the dual multipliers $y,z$ generate a direction of global improvement towards the global minimum. The key idea is to lift to a higher
relaxation rank $r_{+}=r+1$, in order to make $x$ a saddle point. The statement below gives a direction to
escape the saddle-point and make a decrement.
\begin{theorem}[Escape lifted saddle point]\label{thm:escape}
Let $x$ be certifiably first-order optimal for (\ref{eq:bm}) with dual multipliers
$(y,z)$. If $x$ satisfies $\gamma=-\lambda_{\min}[S(y,z)]>0$, (\ref{eq:npcq}) and $\|x\|<R$,
then the eigenvector $\xi=(\xi_{0},\xi_{1},\xi_{2},\dots,\xi_{\ell})$
that satisfies $\xi^{T}S(y,z)\xi=-\gamma\|\xi\|^{2}$ implicitly defines
an escape path $x_{+}(t)=(u_{0},\{u_{k,+}(t)\}_{k=1}^{\ell},\{V_{k,+}(t)\}_{k=1}^{\ell})$
with
\begin{gather*}
u_{k,+}(t)=u_{k}+O(t^{2}),\\ V_{k,+}(t)=[V_{k},0]+t\cdot[0,u_{k}\xi_{0}/u_{0}+\xi_{k}]+O(t^{2})
\end{gather*}
that makes a second-order improvement to the objective while satisfying
all constraints, as in
\[
f(x_{+}(t))=f(x)-t^{2}\gamma,\quad g(x_{+}(t))\le0,\quad h(x_{+}(t))=0
\]
for all $t\in[0,\epsilon)$ with sufficiently small but nonzero $\epsilon>0$.
\end{theorem}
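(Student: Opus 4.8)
The plan is to run the Burer--Monteiro ``staircase'' saddle-escape argument, adapted to the inequality constraints of (\ref{eq:bm}). \emph{Step 1: lift and read off structure.} I would first pad $U$ with a zero column, $\bar U=[U,0]\in\R^{(n+1)\times(r+1)}$; this respects the factored form (\ref{eq:bmpart}) and gives $\bar U\bar U^{T}=UU^{T}$, so $\bar U$ is feasible for the rank-$(r+1)$ instance of (\ref{eq:bm}) with the same objective value. Because the objective and all constraints are affine in the Gram matrix $UU^{T}$, the Lagrangian $\mathcal{L}$ of (\ref{eq:bm}) equals $\langle S(y,z)-\mu I,\;UU^{T}\rangle+\mathrm{const}$; the hypothesis $\|x\|<R$ makes the trace constraint strictly slack, so complementary slackness forces $\mu=0$, and the new zero column contributes nothing to the stationarity equations, hence $\bar U$ is a KKT point of the lifted problem with the \emph{same} multipliers $(y,z)$. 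I would then read out column-wise stationarity, which gives the two facts used throughout: $S(y,z)\,\u=0$ for the first column $\u=(u_{0},u_{1},\dots,u_{\ell})$, and the trailing $n\times n$ block of $S(y,z)$ annihilates every other column of $U$.

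\emph{Step 2: the escape direction.} Admissible perturbations of the new column must keep its $x_{0}$-entry at zero to stay in the form (\ref{eq:bmpart}), i.e.\ they lie in the hyperplane $\{x_{0}=0\}$. Since $\gamma=-\lambda_{\min}[S(y,z)]>0$, the bottom eigenvector satisfies $\xi^{T}S(y,z)\,\xi=-\gamma\|\xi\|^{2}<0$; but $S(y,z)$ has the null vector $\u$ whose $x_{0}$-coordinate $u_{0}=\pm1$ is nonzero, so I would subtract from $\xi$ the unique multiple of $\u$ that cancels its $x_{0}$-coordinate, obtaining a vector $\zeta$ in that hyperplane --- with layer blocks $\zeta_{k}=\xi_{k}-(\xi_{0}/u_{0})u_{k}$ --- that still has $\zeta^{T}S(y,z)\,\zeta=\xi^{T}S(y,z)\,\xi=-\gamma\|\xi\|^{2}<0$, precisely because $S(y,z)\,\u=0$. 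The path $x_{+}(t)$ then appends $t\zeta$ as a new column of $\bar U$ at leading order while holding $u_{0}$ fixed.

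\emph{Step 3: feasibility of the path.} I would choose the $O(t^{2})$ and higher-order corrections of the \emph{old} columns, via the implicit function theorem, so that $h(x_{+}(t))=0$ and every currently-active inequality of (\ref{eq:bm}) stays at equality for small $t$. This is exactly where (\ref{eq:npcq}) is needed: \lemref{npcq} shows that (\ref{eq:npcq}) makes the KKT multipliers of (\ref{eq:bm}) unique, equivalently that the active-constraint Jacobian at $x$ has full row rank (LICQ); this rank is unchanged at $\bar U$ since the new column enters neither $h$ nor the active constraints to first order; and (\ref{eq:npcq}) also excludes the non-smooth ReLU ``corner'' at which both $u_{0}u_{k+1}\ge0$ and $u_{0}(u_{k+1}-W_{k}u_{k}-b_{k}u_{0})\ge0$ are active, so the feasible set is a smooth manifold near $x$ and the implicit function theorem applies, producing the path with $u_{k,+}(t)=u_{k}+O(t^{2})$ and $V_{k,+}(t)=[V_{k},0]+t[0,\zeta_{k}]+O(t^{2})$. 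The remaining requirements come for free: $X_{+}(t):=U_{+}(t)U_{+}(t)^{T}\succeq0$ with $\rank\le r+1$ by construction; $\tr(X_{+}(t))\le R^{2}$ for small $t$ since $\|x\|<R$ makes it strictly slack at $t=0$; and inactive inequalities stay strict by continuity.

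\emph{Step 4: objective decrease, and the main obstacle.} Along the path $h=0$ and complementary slackness hold, so $f(x_{+}(t))=\langle S(y,z),X_{+}(t)\rangle+\mathrm{const}$. Expanding $X_{+}(t)=X+t^{2}\bigl(\zeta\zeta^{T}+U\Delta^{T}+\Delta U^{T}\bigr)+O(t^{3})$ with $\Delta$ the leading old-column correction: the $t^{0}$ term $\langle S(y,z),X\rangle$ vanishes by the stationarity facts of Step 1, the $t^{1}$ term vanishes, and the cross term $2\tr\!\bigl(\Delta^{T}S(y,z)U\bigr)$ vanishes because $S(y,z)U$ is supported on the $x_{0}$-row while every column of $\Delta$ is zero in the $x_{0}$-slot; hence $f(x_{+}(t))=f(x)+t^{2}\,\zeta^{T}S(y,z)\,\zeta+O(t^{3})=f(x)-t^{2}\gamma+O(t^{3})$ after normalizing $\|\xi\|=1$ (and $f\le\mathcal{L}$ along the path already gives the inequality version without tracking complementarity). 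The hard part will be Step 3: unlike the classical Riemannian staircase, which has only equality constraints, here the pure new-column direction $[0,\zeta]$ annihilates every constraint \emph{gradient} but not every constraint \emph{curvature}, so moving along it alone breaks feasibility at order $t^{2}$; the old columns must be co-moved by a correction that is \emph{forced} by $h=0$ yet must not destroy any active ReLU inequality, and establishing that such a correction exists --- and that $x$ does not sit at a ReLU corner in the first place --- is precisely what \lemref{npcq} supplies. Without the constraint qualification the escape step would be only a heuristic.
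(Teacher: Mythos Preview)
Your proposal is correct and follows essentially the same approach as the paper: lift to rank $r+1$ by padding with a zero column, use stationarity to obtain $S(y,z)U=0$, project the negative-curvature eigenvector onto the hyperplane $\{x_0=0\}$ along the null direction $\mathbf{u}$, and invoke LICQ (via \lemref{npcq}) with the implicit function theorem to produce a feasible path whose second-order Lagrangian decrement equals $\zeta^{T}S(y,z)\zeta=-\gamma$. Two cosmetic remarks: (i) your direction $\zeta_k=\xi_k-(\xi_0/u_0)u_k$ differs in sign from the formula in the theorem statement, but the paper's own appendix proof uses $(u_k/u_0)\xi_0-\xi_k$, and all three agree up to sign, which is immaterial for a quadratic form; (ii) your cross-term argument in Step~4 is more cautious than necessary, since stationarity actually gives $S(y,z)U=0$ in full (not merely ``supported on the $x_0$-row''), so $\tr(\Delta^{T}SU)=0$ trivially.
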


\begin{algorithm}[h!]
\caption{\label{alg:staircase}Summary of proposed algorithm}

\textbf{Input:} Initial relaxation rank $r\ge2$. Weights $W_{1},\dots,W_{\ell}$
and biases $b_{1},\dots,b_{\ell}$. Original
input $\hat{x}$, true label $\hat{c}$, target label $c$, and perturbation
size $\rho$. Variable radius bound $R$.

\textbf{Output:} Lower-bound $\phi_{\lb}[c]\le\phi[c]$ on the optimal
value of the semi-targeted attack problem (\ref{eq:attack}).

\textbf{Algorithm:}
\begin{enumerate}
\item (Solve rank-$r$ relaxation) Use a nonlinear programming solver to
solve the following
\begin{align*}
 & \min_{\|x\|\le R}\quad f(x) \equiv(\e_{c}-\e_{\hat{c}})^{T}(W_{\ell}u_{\ell}u_{0}+b_{\ell})\\
 & \text{subject to} \nonumber \\
 & g_{0}(x) \equiv\|u_{1}-u_{0}\hat{x}\|^{2}+\|V_{1}\|^{2}-\rho^{2} \le0, \tag*{$(y_{0})$}\\
 & h_{0}(x) \equiv1-u_{0}^{2} = 0, \tag*{$(z_{0})$}\\
 & g_{k}(x) \equiv\begin{bmatrix}-u_{0}u_{k+1} \\u_{0}(W_{k}u_{k}+b_{k}u_{0}-u_{k+1})\end{bmatrix} \le0, \tag*{$\begin{matrix}(y_{k,1})\\(y_{k,2})\end{matrix}$}\\
\begin{split}
 & h_{k}(x) \equiv\diag[(u_{k+1}-W_{k}u_{k}-b_{k}u_{0})u_{k+1}^{T}\\
 &\qquad\qquad+(V_{k+1}-W_{k}V_{k})V_{k+1}^{T}] =0,
\end{split} \tag*{$(z_{k})$}
\end{align*}
for all $k$ and over $x=(u_{0},\{u_{k}\}_{k=1}^{\ell},\{\vec(V_{k})\}_{k=1}^{\ell})$.
Retrieve the corresponding dual multipliers\\ $y=(y_{0},\{y_{k,1},y_{k,2}\}_{k=1}^{\ell-1})$
and $z=(z_{0},\{z_{k}\}_{k=1}^{\ell-1})$.
\item (Check certifiable first-order optimality) If $\|\nabla f(x)+\nabla g(x)y+\nabla h(x)z\|$
is sufficiently small, and if $g(x)\le0$ and $h(x)=0$ and $\|x\|<R$
hold to sufficient tolerance, then continue. Otherwise, return error due to solver's inability to achieve first-order optimality.
\item (Check dual feasibility) If $\epsilon_{\feas}=-\lambda_{\min}[S(y,z)]$
is sufficiently small, where the slack matrix $S(y,z)$ is defined
in (\ref{eq:sdd}), then return $\phi_{\lb}[c]=z_{0}-\epsilon_{\feas}\cdot R^2.$
Otherwise, continue.
\item (Escape lifted saddle point) Compute the eigenvector $\xi=(\xi_{0},\xi_{1},\dots,\xi_{\ell})$
satisfying $\|\xi\|=1$ and $\xi^{T}S(y,z)\xi=-\epsilon_{\feas}$.
Set up new primal initial point $x_{+}=(u_{0},\{u_{k}\}_{k=1}^{\ell},\{\vec(V_{+,k})\}_{k=1}^{\ell})$ where
\[
V_{+,k}=[V_{k},0]+\epsilon\cdot[0,u_{k}\xi_{0}/u_{0}+\xi_{k}].
\]
Increment $r\gets r+1$ and repeat Step 1 with $(x_{+},y,z)$ as the
initial point.
\end{enumerate}
\end{algorithm}

In practice, it suffices to move
along the \emph{straight} path $\tilde{u}_{k,+}(t)=u_{k}$ and $\tilde{V}_{k,+}(t)=[V_{k},0]+t\cdot[0,u_{k}\xi_{0}/u_{0}+\xi_{k}]$
then solve for feasibility $g(x)\le0$ and $h(x)=0$. Concretely,
after computing a first-order optimal $x$, we increment the relaxation
rank $r_{+}=r+1$, and initialize the nonlinear programming solver
using the lifted point $\tilde{x}_{+}(\epsilon)$ as the initial primal
point, and the old multipliers $y,z$ as the initial dual multipliers.
If this arrives at the global optimum, then the corresponding $y,z$
must certify $x$ as being so. Otherwise, we repeat the rank lifting
procedure.

Progressively lifting the relaxation rank $r$, in our experience
it takes no more than $r\le10$ to reduce the duality gap to values
of $10^{-8}$. To rigorously guarantee a zero duality gap, however,
can require a relaxation rank on the order of $r=O(\sqrt{n})$~\citep{boumal2020deterministic},
irrespective of the value of $r^{\star}$. Indeed, counterexamples
with $\epsilon_{\feas}>0$ exist for relaxation ranks $r$ that are
even slightly smaller than this threshold~\citep{waldspurger2020rank,ocarrol2022burer,zhang2022improved}.
As a purely theoretical result, the ability to achieve a zero duality
gap implies that the nonconvex relaxation (with $r\ge r^{\star}$)
can be solved in polynomial time (and is therefore not NP-hard). Of
course, setting $r=O(\sqrt{n})$ would also force us to optimize over
$O(n^{3/2})$ variables, thereby offsetting much of our computational
advantage against the usual convex SDP relaxation in practice.

\algref{staircase} summarizes our proposed approach in pseudocode
form, and introduces a number of small practical refinements. 

\begin{table*}[!b]
\captionsetup{font=footnotesize}
\caption{\textbf{Robustness verification for neural networks.} We compare the number of images certified as \emph{robust} by \texttt{BM-Full}, \texttt{BM}, $\alpha,\beta$-\texttt{CROWN}, \texttt{LP-Full}, \texttt{CROWN-Ada} and \texttt{Fast-Lip} within the first 1000 images for normally and robustly trained networks. The upper bound (denoted as UB in the table) on the true number of robust images is obtained by \texttt{PGD}.}
\begin{center}
\resizebox{\linewidth}{!}{%
\begin{tabular}{
lcc c@{\hskip 5pt}c c@{\hskip 5pt}c c@{\hskip 5pt}c c@{\hskip 5pt}c c@{\hskip 5pt}c c@{\hskip 5pt}c
} 
\toprule  
  \multicolumn{1}{c}{Network} & $\ell_2$ Radius & \texttt{PGD}   & \multicolumn{2}{c}{\texttt{BM-Full}} & \multicolumn{2}{c}{\texttt{BM}} & \multicolumn{2}{c}{$\alpha,\beta$\texttt{-CROWN}} & \multicolumn{2}{c}{\texttt{LP-Full}} & \multicolumn{2}{c}{\texttt{CROWN-Ada}} & \multicolumn{2}{c}{\texttt{Fast-Lip}} \\
   \midrule
   &  & \footnotesize{UB} & \footnotesize{Robust} & \footnotesize{Time} & \footnotesize{Robust} & \footnotesize{Time}&\footnotesize{Robust} & \footnotesize{Time}&\footnotesize{Robust} & \footnotesize{Time}& \footnotesize{Robust}&\footnotesize{Time}& \footnotesize{Robust}&\footnotesize{Time}\\
\cmidrule(lr){3-3}\cmidrule(lr){4-5}\cmidrule(lr){6-7}\cmidrule(lr){8-9}\cmidrule(lr){10-11}\cmidrule(lr){12-13}\cmidrule(lr){14-15}
   ADV-MNIST & 1.0 & 774 & $\bf762$ &  47s & 757 &  28s & 672 &  24s & 209 &  8s &  45 & 12ms &  30 & 13ms\\
   ADV-MNIST & 1.3 & 614 & $\bf569$ &  38s & 559 &  28s & 399 &  94s &  25 & 10s &   7 &  9ms &   2 & 12ms\\
   ADV-MNIST & 1.5 & 471 & $\bf411$ &  56s & 392 &  20s & 248 & 138s &  11 & 11s &   1 &  9ms &   1 & 13ms\\
   \midrule
   LPD-MNIST & 1.0 & 755 & $\bf730$ & 218s & 708 &  29s & 641 &  10s & 411 & 16s & 120 & 10ms &  66 & 13ms\\
   LPD-MNIST & 1.3 & 612 & $\bf514$ & 129s & 474 &  26s & 430 &  33s &  61 & 19s &  16 & 10ms &   8 & 13ms\\
   LPD-MNIST & 1.5 & 505 & $\bf391$ &  98s & 350 &  23s & 316 &  64s &  23 & 20s &   5 & 10ms &   2 & 14ms\\
   \midrule
   NOR-MNIST & 0.3 & 916 & $\bf911$ & 128s & 866 &  21s & 797 &  23s & 728 &  8s & 420 &  9ms & 348 & 12ms\\
   NOR-MNIST & 0.5 & 732 & $\bf696$ & 127s & 534 &  27s & 424 & 159s & 232 & 16s &  46 &  7ms &  27 & 13ms\\
   NOR-MNIST & 0.7 & 485 & $\bf381$ & 156s & 187 &  30s & 124 & 253s &  37 & 19s &   0 & 13ms &   0 & 17ms\\
\bottomrule
\end{tabular}
}
\end{center}
\label{table:certify_img}
\end{table*}

\section{Experiments}\label{sec:experiment}
We identically reproduce three models from \citet{salman2019convex}, two of which were trained to be robust against an $\ell_\infty$ adversary. We compare the  performance of our proposed 
verifier \texttt{BM}, which is based on solving (\ref{eq:bm}),  and \texttt{BM-Full}, which is an extension of \texttt{BM} with the addition of layer-wise preactivation bounds (see Appendix~\ref{app:implementation} for details), against state-of-the-art LP-based verifiers for certifying robustness against an $\ell_2$ adversary. Our experiments for certifying the robustness of the same models against an $\ell_\infty$ adversary are deferred to the appendix.
\paragraph{Methods.} 
\texttt{BM} and \texttt{BM-Full} denote the proposed method without and with preactivation bounds respectively. \textcolor{mycolor}{The source code for \texttt{BM} and \texttt{BM-Full} are available at \url{https://github.com/Hong-Ming/BM-r}.} \texttt{PGD} denotes the projected gradient descent algorithm for finding the upper bound on (\ref{eq:attack}). We compare \texttt{BM} and \texttt{BM-Full} to three state-of-the-art LP-based verifiers: \texttt{CROWN-Ada} of \citet{zhang2018efficient_nn}, \texttt{Fast-Lip} of \citet{weng2018towards}, and \texttt{LP-Full} of \citet{salman2019convex}. \texttt{CROWN-Ada} and \texttt{Fast-Lip} are both large-scale LP verifiers that have linear complexity with respect to the number of activations; the implementations that we used were taken directly from the authors' project page\footnote{\tiny\url{https://github.com/IBM/CROWN-Robustness-Certification}}. \texttt{LP-Full} is the optimal LP verifier that uses the tightest possible preactivation bounds by solving LP problems for each hidden neuron; its complexity is cubic with respect to the number of activations. We reimplemented this algorithm to work with $\ell_2$ adversaries, and then validated our implementation against that of the authors\footnote{\tiny\url{https://github.com/Hadisalman/robust-verify-benchmark}\label{refnote}} on $\ell_\infty$ adversaries. The preactivation bounds in \texttt{BM-Full} are set to coincide with those used in \texttt{LP-Full}. \textcolor{mycolor}{We also compare our methods against the state-of-the-art branch-and-bound verifier $\alpha,\beta$-\texttt{CROWN} \citep{zhang2018crown,wang2021beta,xu2021fast,zhang2022general}. The implementation of $\alpha,\beta$-\texttt{CROWN} are taken from authors' project page \footnote{\tiny\url{https://github.com/Verified-Intelligence/alpha-beta-CROWN}}. We set the timeout of $\alpha,\beta$-\texttt{CROWN} to be 300s.}

\begin{figure*}[t!]
    \centering
    \begin{subfigure}{0.333\textwidth}
      \centering
      \includegraphics[width=\linewidth]{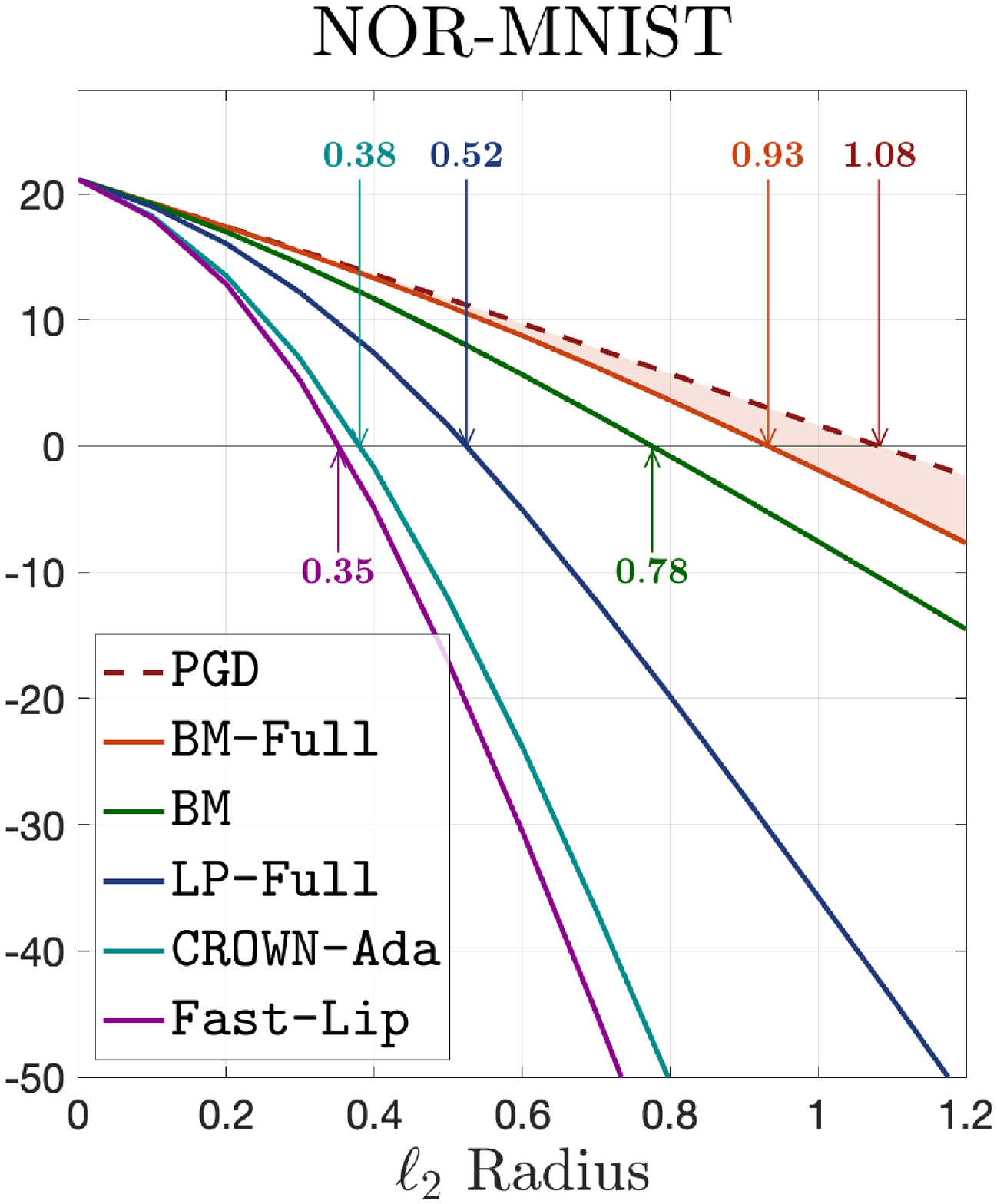}
    \end{subfigure}%
    \begin{subfigure}{0.333\textwidth}
      \centering
      \includegraphics[width=\linewidth]{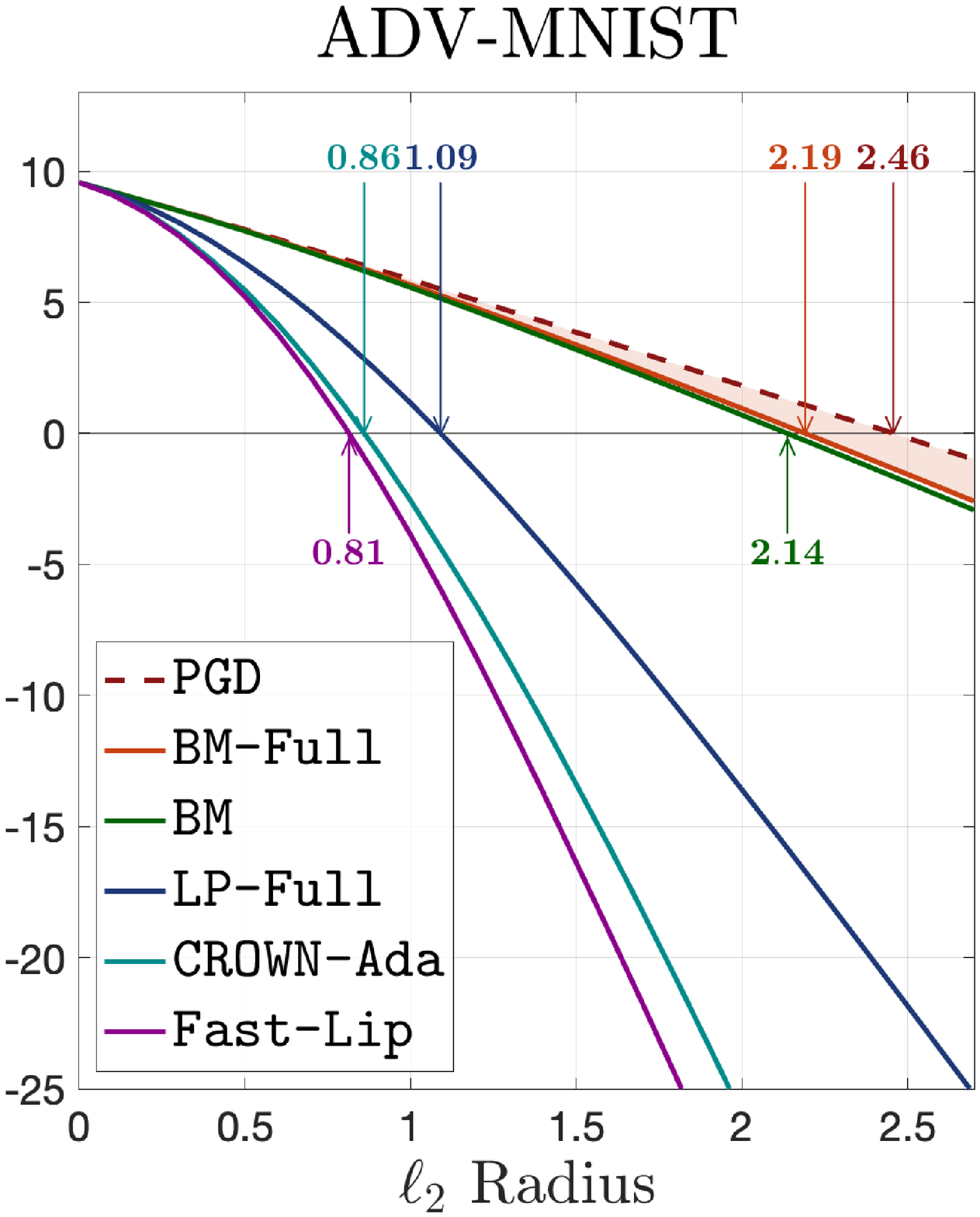}
    \end{subfigure}%
    \begin{subfigure}{0.333\textwidth}
      \centering
      \includegraphics[width=\linewidth]{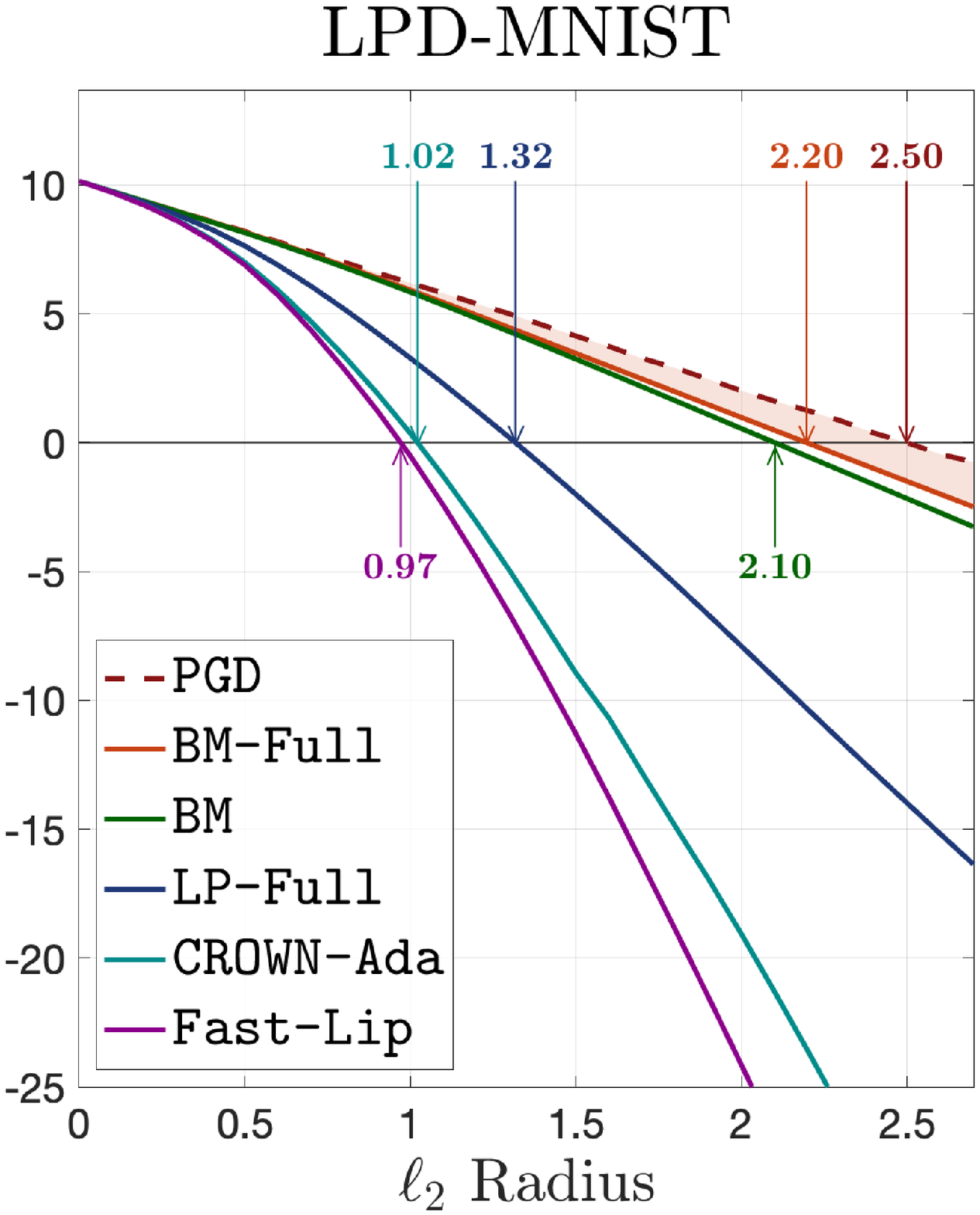}
    \end{subfigure}
    \captionsetup{font=footnotesize}
    \caption{\textbf{Lower bounds on the robustness margin}. We take \texttt{BM-Full}, \texttt{BM}, \texttt{LP-Full}, \texttt{CROWN-Ada} and \texttt{Fast-Lip} to compute their average lower bound on (\ref{eq:attack}), and then compare them to the average \texttt{PGD} upper bound on (\ref{eq:attack}) over a wide range of $\ell_2$ perturbation radius. \textbf{(Left.)} NOR-MNIST. \textbf{(Middle.)} ADV-MNIST. \textbf{(Right.)} LPD-MNIST.}
    \label{fig:tightness}
\end{figure*}
\paragraph{Setup.} 
We use an Apple laptop, running a silicon M1 pro chip with 10-core CPU, 16-core GPU, and 32GB of RAM. We implemented \texttt{BM} and \texttt{BM-Full} in MATLAB. The nonconvex problem (\ref{eq:bm}) is solved using the trust-region interior-point solver \texttt{knitro} \cite{byrd2006k} with a warm-start strategy.
\paragraph{Models.}
We perform simulation on three models: NOR-MNIST, ADV-MNIST and LPD-MNIST. All three models are trained by \citet{salman2019convex} and are taken directly from the authors' project page\footref{refnote}. In particular, the architecture and the numerical values of the weights for NOR-MNIST, ADV-MNIST and LPD-MNIST are identical to NOR-MLP-B, ADV-MLP-B and LPD-MLP-B respectively in \citet{salman2019convex}. All three models are fully-connected feedforward neural network models with 2 hidden layers of 100 neurons each, and were trained on the MNIST dataset with different training procedures. NOR-MNIST was trained normally using the cross-entropy loss and used as a control. ADV-MNIST was adversarially trained against the PGD attack using the method of \citet{madry2017towards}, with $\ell_\infty$ radius of 0.1. LPD-MNIST was robustly trained via the adversarial polytope perturbation of \citet{wong2018provable}, with the size of the adversarial polytope also set to 0.1.
 
\subsection{Robustness verification on neural network inputs.}
We certify the robustness of our three models against an $\ell_2$ adversary using our proposed method, and compare their performance against the existing state-of-the-art. 
In each trial, we fix an attack radius $\rho$, and mark a correctly-classified image $\hat x$ as \emph{robust} if the lower bound on (\ref{eq:attack}) is positive with respect to all incorrect classes, i.e. $0<\phi_{\lb}[c]\leq \phi[c]$ for all $c\neq \hat c$, and mark an image $\hat x$ as \emph{not robust} if an attack is found by \texttt{PGD}, i.e. $\phi[c]\leq \phi_{\ub}[c] < 0$ for some $c\neq \hat c$. We mark the image as \emph{status unknown} if a determination cannot be made either way. The lower bound for \texttt{BM} and \texttt{BM-Full} is obtained by \propref{dual}. 

\paragraph{Results and discussions.}
Table~\ref{table:certify_img} shows the number of images that are certified \emph{robust} within the first 1000 correctly classified images using \texttt{BM-Full}, \texttt{BM}, $\alpha,\beta$-\texttt{CROWN}, \texttt{LP-Full}, \texttt{CROWN-Ada} and  \texttt{Fast-Lip}. The average computation time per image for each verifier are also shown. In addition, to gauge the efficacy of our verifiers, we benchmark our results against the upper bound on the true number of \emph{robust} images (denoted as UB in the table), which is the number of images that does not get marked as \emph{not robust} by \texttt{PGD}. From the table, we see that our nonconvex verifiers are able to consistently outperform all the other verifiers under all cases, with time complexity only 5 to 10 times higher than \texttt{LP-Full}. Despite higher time complexity, our verifiers are the only verifiers that can still verify reasonable amount of images under larger perturbation radius. Notably, for small perturbation, our verifiers can nearly certify all images that cannot be attacked by \texttt{PGD}, leaving very few images as \emph{status unknown}.  

\subsection{Tightness of our lower bound} 
Since robust verification is an NP-hard problem, all relaxation methods must become loose for sufficiently large radius. Fortunately, robust verification is only needed when the \texttt{PGD} upper bound of (\ref{eq:attack}) is positive; therefore, we only need the relaxation to be tight when the \texttt{PGD} upper bound is still positive. In this experiment, we analyze the gap between our lower bound in Proposition~\ref{prop:dual} and the \texttt{PGD} upper bound over a wide range of perturbation radius. 
To accurately measure the gap between our lower bound and the \texttt{PGD} upper bound, we average each bound over all 9 incorrect classes of the first 10 correctly classified images in the test set, in total 90 samples are considered.

\paragraph{Results and discussions.}
Figures~\ref{fig:tightness} shows the average \texttt{PGD} upper bound, and the average lower bound on (\ref{eq:attack}) computed from \texttt{BM}, \texttt{BM-Full}, \texttt{LP-Full}, \texttt{CROWN-Ada} and \texttt{Fast-Lip}.
Our lower bounds are significantly tighter than all the other verifiers across a wide range of $\ell_2$ perturbation radius. Most importantly, our lower bounds are able to remain tight in regions where the \texttt{PGD} upper bound is still positive. We reiterate that it is not possible for our nonconvex verifiers to be exact with a large perturbation radius, because exact verification is NP-hard and our algorithm is polynomial-time. Nonetheless, so long as we can remain tight as the upper-bound crosses the zero line, our certification methods will be very close to exact.



\section{Conclusion}
In this work, we presented a neural network certification technique based on a nonconvex low-rank restricted SDP relaxation. Our experiments find that the method is able to overcome the convex relaxation barrier \citep{salman2019convex} with runtime only a small constant factor (5-10$\times$) worse than the existing state-of-the-art. Our results 
showed that even a basic nonconvex relaxation, \texttt{BM}, offers a significant reduction in relaxation gap, while augmenting with bound propagation, \texttt{BM-Full}, allows us to almost fully close the gap towards exact certification.

\section*{Acknowledgments}
\textcolor{mycolor}{The authors thank Zico Kolter for insightful discussions and pointers to the literature. Financial support for this work was provided by the NSF CAREER Award ECCS-2047462 and C3.ai Inc. and the Microsoft Corporation via the C3.ai Digital Transformation Institute.}
\newpage
\bibliography{arxiv.bib}
\bibliographystyle{icml2023}

\newpage
\appendix
\onecolumn
\section{Implementation details for \texttt{BM} and \texttt{BM-Full}}\label{app:implementation}
In this section, we present the implementation detail for our two proposed methods: \texttt{BM}, the nonconvex relaxation (\ref{eq:bm}) proposed in the main paper; and \texttt{BM-Full}, an extension of \texttt{BM} obtained by adding preactivation bounds on each hidden neuron in (\ref{eq:bm}). We focus our attention on how to efficiently implement both methods to verify $\ell_2$ and $\ell_\infty$ adversaries for neural networks trained on MNIST dataset. 

This section consists of three parts. First, we describe the valid input set constraint that we need to add into (\ref{eq:bm}) in order to certify MNIST images, and a few constraints in (\ref{eq:bm}) that can be simplified for improving efficiency. Second, in Appendix~\ref{sec:bm_l2} to \ref{sec:bm_full_linf}, we summarized the practical and efficient formulation for \texttt{BM} and \texttt{BM-Full} with respect to both $\ell_2$ and $\ell_\infty$ perturbation. Third, in Appendix~\ref{sec:algorithm}, we present more details on how to efficiently solve \texttt{BM} and \texttt{BM-Full} using the procedure described in Algorithm~\ref{alg:staircase}.

\paragraph{Valid input set constraints}
For model train on MNIST dataset, we add an extra constraint $0 \leq u_0\cdot u_1 \leq 1$ into (\ref{eq:bm}) because MNIST images are normalized to between 0 and 1 during training and testing. Notice that adding an extra inequality constraint $0 \leq u_0\cdot u_1 \leq 1$ does not alter any theoretical results in this paper as it only add an extra term to $s_1$ in the slack matrix $S(y,z)$.

\paragraph{Simplify constraints in (\normalfont\ref{eq:bm})}
In our practical implementation, we fix $u_0$ to $1$ in (\ref{eq:bm}). The reason is twofold. First, by fixing $u_0=1$, most constraints in (\ref{eq:bm}) become linear, and hence reduces the time complexity of our algorithm significantly. Second, the dual variable $z_0$, which is associated with the constraint $u_0=1$, can be solved via the KKT condition $S(x,y)U=0$.

\subsection{Efficient formulation of \texttt{BM} for $\ell_2$ norm}\label{sec:bm_l2}
We now turn to the practical aspect of implementing our proposed method \texttt{BM}. In particular, to verify $\ell_2$ adversaries of neural networks trained on MNIST dataset, we solve (\ref{eq:bm}) in the following form
\begin{alignat}{2}
\phi_{r}[c]=\min_{u,V}\quad &  w_{\ell}^{T} u_{\ell}\tag{BM-$\ell_2$} \label{eq:bm_l2}\\
\text{s.t. }\quad & \|u_{1}-\hat{x}\|^{2}+\|V_{1}\|^{2}\le \rho^{2}, & \qquad(y_{0})\nonumber \\
& u_{1}\ge0,\qquad u_{1}\le1 & \qquad(y_{0,1},y_{0,2})\nonumber \\
 & u_{k+1}\ge0,\qquad u_{k+1}-W_{k}u_{k}-b_{k}\ge 0, & (y_{k,1},y_{k,2})\nonumber \\
 & \diag\left[(u_{k+1}-W_{k}u_{k}-b_{k})u_{k+1}^{T}+(V_{k+1}-W_{k}V_{k})V_{k+1}^{T}\right]=0, & (z_{k})\nonumber\\
 & 1+\sum_{k=1}^{\ell-1}(\|u_{k}\|^{2}+\|V_{k}\|^{2})\le R^{2}, & (\mu)\nonumber
\end{alignat}
for $k\in\{1,\ldots,\ell-1\}$. Notice that we have substituted $u_0=1$, added the valid input set constraints $u_{1}\ge 0$ and $u_{1}\le1$, and assigned their associated dual variable $y_{0,1}$ and $y_{0,2}$. In Definition~\ref{def:bm_l2}, we summarized how to evaluate the slack matrices $S(y,z)$ and the dual variable $z_0$ in (\ref{eq:sdd}) using the primal and dual solution of (\ref{eq:bm_l2}) in order to calculate the bound in Proposition~\ref{prop:dual}.

\begin{definition} \label{def:bm_l2}
Let $y=(y_{0},\{y_{k,1},y_{k,2}\}_{k=0}^{\ell-1})$ , $z=(\{z_{k}\}_{k=1}^{\ell-1})$, $u = (u_1,\ldots,u_\ell)$ and $V=(V_1,\ldots,V_\ell)$ be any \emph{certifiably first-order optimal} point of (\ref{eq:bm_l2}). Each component in the slack matrices $S(y,z)$ and the dual variable $z_0$ in (\ref{eq:sdd}) can be evaluated as
\begin{gather*}
s_0=-\sum_{k=1}^\ell s_k^Tu_k, \quad z_{0}=y_{0}(\|\hat{x}\|^{2}-\rho^{2})-1^Ty_{0,2}+\sum_{k=1}^{\ell-1}b_{k}^{T}y_{k,2}-\frac{1}{2}s_{0},\\
s_{1}=W_{1}^{T}y_{1,2}-2\hat{x}y_{0}-y_{0,1}+y_{0,2},\quad s_{\ell}=w_{\ell}-\left[Z_{\ell-1}b_{\ell-1}+y_{\ell-1,1}+y_{\ell-1,2}\right],\\
s_{k+1}=W_{k+1}^{T}y_{k+1,2}-\left(Z_{k}b_{k}+y_{k,1}+y_{k,2}\right)\qquad\text{for }k\in\{1,\dots,\ell-2\},\\
S_{1,1}=2y_{0}I,\quad S_{k,k+1}=-W_{k}^{T}Z_{k},\quad S_{k+1,k+1}=2Z_{k}\qquad\text{for }k\in\{1,\dots,\ell-1\},
\end{gather*}
where $Z_{k}=\diag(z_{k})$ for all $k$.
\end{definition}

\subsection{Efficient formulation of \texttt{BM-Full} for $\ell_2$ norm}\label{sec:bm_full_l2}
We now describe the practical formulation for \texttt{BM-Full}. Let $lb_k$ and $ub_k$ denote the lower bound and the upper bound on preactivation neurons in the $k$-th layer. $lb_k$ and $ub_k$ gives us the postactivation bound constraints $\max\{lb_k,0\}\le x_k \le \max\{ub_k,0\}$ for each postactivation neuron $x_k$ in (\ref{eq:sdp}) . To incorporate these bound constraints into (\ref{eq:bm}), we first rewrite each of them into an elementwise $\ell_2$ constraint for which $\e_i^Tx_k$ is restricted in a $\ell_2$ norm ball centered at $\frac{1}{2}\e_i^T(\max\{ub_k,0\}+\max\{lb_k,0\})$ with radius $\frac{1}{2}\e_i^T(\max\{ub_k,0\}-\max\{lb_k,0\})$ as in
\[
\max\{lb_k,0\}\le x_k \le \max\{ub_k,0\} \quad\iff\quad \|\e_i^Tx_k-\e_i^T\hat x_k\|^2 \le \rho_k^2 \quad\forall\ i\in\{1,\ldots,n_k\}
\]
where $\hat x_k= \frac{1}{2}(\max\{ub_k,0\}+\max\{lb_k,0\})$ and $\rho_k=\frac{1}{2}(\max\{ub_k,0\}-\max\{lb_k,0\})$. The above elementwise $\ell_2$ constraint has the following Burer-Monteiro formulation
\[
\diag\left[\left(u_{k+1}-\hat x_{k+1}\right)\left(u_{k+1}-\hat x_{k+1}\right)^T+V_{k+1}V_{k+1}^T\right]\le \rho_{k+1}^2
\]
for $k\in\{1,\ldots,\ell-1\}$.
In turn, to verify neural networks trained on MNIST dataset with respect to $\ell_2$ perturbation, we add the above bound constraints into (\ref{eq:bm}) and solve the following
\begin{alignat}{2}
\phi_{r}[c]=\min_{u,V}\quad &  w_{\ell}^{T} u_{\ell}\tag{BM-Full-$\ell_2$}\label{eq:bm_full_l2}\\
\text{s.t. }\quad & \|u_{1}-\hat{x}\|^{2}+\|V_{1}\|^{2}\le \rho^{2}, & \qquad(y_{0})\nonumber \\
 & u_{1}\ge0,\qquad u_{1}\le1 & \qquad(y_{0,1},y_{0,2})\nonumber \\
 & \diag\left[\left(u_{k+1}-\hat x_{k+1}\right)\left(u_{k+1}-\hat x_{k+1}\right)^T+V_{k+1}V_{k+1}^T\right]\le \rho_{k+1}^2 & \qquad(y_{k})\nonumber \\
 & u_{k+1}-W_{k}u_{k}-b_{k}\ge 0, & (y_{k,2})\nonumber \\
 & \diag\left[(u_{k+1}-W_{k}u_{k}-b_{k})u_{k+1}^{T}+(V_{k+1}-W_{k}V_{k})V_{k+1}^{T}\right]=0, & (z_{k})\nonumber\\
 & 1+\sum_{k=1}^{\ell-1}(\|u_{k}\|^{2}+\|V_{k}\|^{2})\le R^{2}, & (\mu)\nonumber
\end{alignat}
for $k\in\{1,\ldots,\ell-1\}$. Notice that we delete the constraints $u_k \ge 0$ because they overlap with the bound constraints $\max\{lb_k,0\}\le u_k$. In Definition~\ref{def:bm_full_l2}, we summarized how to evaluate the slack matrices $S(y,z)$ and the dual variable $z_0$ in (\ref{eq:sdd}) using the primal and dual solution of (\ref{eq:bm_full_l2}) in order to calculate the bound in Proposition~\ref{prop:dual}.

\begin{definition} \label{def:bm_full_l2}
Let $y=(y_{0},y_{0,1},y_{0,2},\{y_{k},y_{k,2}\}_{k=1}^{\ell-1})$ , $z=(\{z_{k}\}_{k=1}^{\ell-1})$, $u = (u_1,\ldots,u_\ell)$ and $V=(V_1,\ldots,V_\ell)$ be any \emph{certifiably first-order optimal} point of (\ref{eq:bm_full_l2}). Each component in the slack matrices $S(y,z)$ and the dual variable $z_0$ in (\ref{eq:sdd}) can be evaluated as
\begin{gather*}
s_0=-\sum_{k=1}^\ell s_k^Tu_k, \quad z_{0}=y_{0}(\|\hat{x}\|^{2}-\rho^{2})+\sum_{k=1}^{\ell-1}y_{k}^T(\hat x_{(k+1)}^2-\rho_{(k+1)}^2)-1^Ty_{0,2}+\sum_{k=1}^{\ell-1}b_{k}^{T}y_{k,2}-\frac{1}{2}s_{0},\\
s_{1}=W_{1}^{T}y_{1,2}-2\hat{x}y_{0}-y_{0,1}+y_{0,2},\quad s_{\ell}=w_{\ell}-\left[Z_{\ell-1}b_{\ell-1}+2\hat X_{\ell}y_{\ell-1}+y_{\ell-1,2}\right],\\
s_{k+1}=W_{k+1}^{T}y_{k+1,2}-\left(Z_{k}b_{k}+2\hat X_{k+1}y_{k}+y_{k,2}\right)\qquad\text{for }k\in\{1,\dots,\ell-2\},\\
S_{1,1}=2y_{0}I,\quad S_{k,k+1}=-W_{k}^{T}Z_{k},\quad S_{k+1,k+1}=2(Z_{k}+\hat X_{k})\qquad\text{for }k\in\{1,\dots,\ell-1\},
\end{gather*}
where $Z_{k}=\diag(z_{k})$ and $\hat X_k=\diag(\hat x_k)$ for all $k$.
\end{definition}

\subsection{Efficient formulation of \texttt{BM} for $\ell_\infty$ norm}\label{sec:bm_linf}
We now describe how to implement \texttt{BM} for verifying $\ell_\infty$ adversaries. In the case of MNIST image, the $\ell_\infty$ norm ball constraint on the input $x_1$, i.e. $\|x_1-\hat x\|_\infty\leq \rho$, can be combined with the valid input set constraints $0\leq x_1 \leq 1$. Specifically, combining the two constraints yields: $\max\{0,\hat x-\rho\}\leq x_1\leq\min\{1,\hat x+\rho\}$. Similar to the postactivation bound constraints in \texttt{BM-Full}, this constraint can be written as an elementwise $\ell_2$ norm constraint as in
\[
	\max\{0,\hat x-\rho\}\leq x_1\leq\min\{1,\hat x+\rho\} \quad\iff\quad \|\e_i^Tx_1-\e_i^T\hat x_1\|^2 \leq \rho_1^2 \quad\forall\ i\in\{1,\ldots,n_1\}
\]
where $\hat x_1=\frac{1}{2}(\min\{1,\hat x+\rho\}+\max\{0,\hat x-\rho\})$ and $\rho_1=\frac{1}{2}(\min\{1,\hat x+\rho\}-\max\{0,\hat x-\rho\})$. The above constraint yields the following Burer-Monteiro formulation
\[
\diag\left[\left(u_{1}-\hat x_1\right)\left(u_{1}-\hat x_1\right)^T+V_{1}V_{1}^T\right]\le \rho_1^2
\]

In turn, to verify neural networks train on MNIST with respect to $\ell_\infty$ perturbation, we solve (\ref{eq:bm}) in the following form
\begin{alignat}{2}
\phi_{r}[c]=\min_{u,V}\quad &  w_{\ell}^{T} u_{\ell}\tag{BM-$\ell_\infty$} \label{eq:bm_linf}\\
\text{s.t. }\quad & \diag\left[\left(u_{1}-\hat x_1\right)\left(u_{1}-\hat x_1\right)^T+V_{1}V_{1}^T\right]\le \rho_1^2, & \qquad(y_{0})\nonumber \\
 & u_{k+1}\ge0,\qquad u_{k+1}-W_{k}u_{k}-b_{k}\ge 0, & (y_{k,1},y_{k,2})\nonumber \\
 & \diag\left[(u_{k+1}-W_{k}u_{k}-b_{k})u_{k+1}^{T}+(V_{k+1}-W_{k}V_{k})V_{k+1}^{T}\right]=0, & (z_{k})\nonumber\\
 & 1+\sum_{k=1}^{\ell-1}(\|u_{k}\|^{2}+\|V_{k}\|^{2})\le R^{2}, & (\mu)\nonumber
\end{alignat}
for $k\in\{1,\ldots,\ell-1\}$. Notice that the $\ell_\infty$ norm constraint, i.e. $\|u_1-\hat x\|_\infty\leq\rho$, has been combined with the valid input set constraint, i.e. $0\leq u_1 \leq 1$. In Definition~\ref{def:bm_linf}, we summarized how to evaluate the slack matrices $S(y,z)$ and the dual variable $z_0$ in (\ref{eq:sdd}) using the primal and dual solution of (\ref{eq:bm_linf}) in order to calculate the bound in Proposition~\ref{prop:dual}.

\begin{definition} \label{def:bm_linf}
Let $y=(y_{0},\{y_{k,1},y_{k,2}\}_{k=1}^{\ell-1})$ , $z=(\{z_{k}\}_{k=1}^{\ell-1})$, $u = (u_1,\ldots,u_\ell)$ and $V=(V_1,\ldots,V_\ell)$ be any \emph{certifiably first-order optimal} point of (\ref{eq:bm_linf}). Each component in the slack matrices $S(y,z)$ and the dual variable $z_0$ in (\ref{eq:sdd}) can be evaluated as
\begin{gather*}
s_0=-\sum_{k=1}^\ell s_k^Tu_k, \quad z_{0}=y_{0}^T(\hat{x}_1^2 -\rho_1^{2})+\sum_{k=1}^{\ell-1}b_{k}^{T}y_{k,2}-\frac{1}{2}s_{0},\\
s_{1}=W_{1}^{T}y_{1,2}-2Y_{0}\hat{x},\quad s_{\ell}=w_{\ell}-\left[Z_{\ell-1}b_{\ell-1}+y_{\ell-1,1}+y_{\ell-1,2}\right],\\
s_{k+1}=W_{k+1}^{T}y_{k+1,2}-\left(Z_{k}b_{k}+y_{k,1}+y_{k,2}\right)\qquad\text{for }k\in\{1,\dots,\ell-2\},\\
S_{1,1}=2Y_{0},\quad S_{k,k+1}=-W_{k}^{T}Z_{k},\quad S_{k+1,k+1}=2Z_{k}\qquad\text{for }k\in\{1,\dots,\ell-1\},
\end{gather*}
where $Z_{k}=\diag(z_{k})$ for all $k$. $Y_{0}=\diag(y_{0})$.
\end{definition}

\subsection{Efficient formulation of \texttt{BM-Full} for $\ell_\infty$ norm}\label{sec:bm_full_linf}
Combine the results in \ref{sec:bm_full_l2} and \ref{sec:bm_linf}, to verify $\ell_\infty$ adversaries via \texttt{BM-Full}, we solve (\ref{eq:bm}) in the following form
\begin{alignat}{2}
\phi_{r}[c]=\min_{u,V}\quad &  w_{\ell}^{T} u_{\ell}\tag{BM-Full-$\ell_\infty$}\label{eq:bm_full_linf}\\
\text{s.t. }\quad & \diag\left[\left(u_{1}-\hat x_1\right)\left(u_{1}-\hat x_1\right)^T+V_{1}V_{1}^T\right]\le \rho^2_1, & \qquad(y_{0})\nonumber \\
 & \diag\left[\left(u_{k+1}-\hat x_{k+1}\right)\left(u_{k+1}-\hat x_{k+1}\right)^T+V_{k+1}V_{k+1}^T\right]\le \rho_{k+1}^2 & \qquad(y_{k})\nonumber \\
 & u_{k+1}-W_{k}u_{k}-b_{k}\ge 0, & (y_{k,2})\nonumber \\
 & \diag\left[(u_{k+1}-W_{k}u_{k}-b_{k})u_{k+1}^{T}+(V_{k+1}-W_{k}V_{k})V_{k+1}^{T}\right]=0, & (z_{k})\nonumber\\
 & 1+\sum_{k=1}^{\ell-1}(\|u_{k}\|^{2}+\|V_{k}\|^{2})\le R^{2}, & (\mu)\nonumber
\end{alignat}
for $k\in\{1,\ldots,\ell-1\}$, where $\hat x_{1}$ and $\rho_1$ are defined in Appendix~\ref{sec:bm_linf}. $\hat x_{k}$ and $\rho_k$ for $k\in\{2,\ldots,\ell-1\}$ are defined in Appendix~\ref{sec:bm_full_l2}. Notice that we also delete the constraints $u_k \ge 0$ because they overlap with the bound constraints $\max\{lb_k,0\}\le u_k$. In Definition~\ref{def:bm_full_linf}, we summarized how to evaluate the slack matrices $S(y,z)$ and the dual variable $z_0$ in (\ref{eq:sdd}) using the primal and dual solution of (\ref{eq:bm_full_linf}) in order to calculate the bound in Proposition~\ref{prop:dual}.

\begin{definition} \label{def:bm_full_linf}
Let $y=(y_{0},\{y_{k},y_{k,2}\}_{k=1}^{\ell-1})$ , $z=(\{z_{k}\}_{k=1}^{\ell-1})$, $u = (u_1,\ldots,u_\ell)$ and $V=(V_1,\ldots,V_\ell)$ be any \emph{certifiably first-order optimal} point of (\ref{eq:bm_full_linf}). Each component in the slack matrices $S(y,z)$ and the dual variable $z_0$ in (\ref{eq:sdd}) can be evaluated as
\begin{gather*}
s_0=-\sum_{k=1}^\ell s_k^Tu_k, \quad z_{0}=\sum_{k=0}^{\ell-1}y_{k}^T(\hat x_{(k+1)}^2-\rho_{(k+1)}^2)+\sum_{k=1}^{\ell-1}b_{k}^{T}y_{k,2}-\frac{1}{2}s_{0},\\
s_{1}=W_{1}^{T}y_{1,2}-2Y_{0}\hat{x},\quad s_{\ell}=w_{\ell}-\left[Z_{(\ell-1)}b_{(\ell-1)}+2\hat X_{\ell}y_{\ell-1}+y_{(\ell-1),2}\right],\\
s_{k+1}=W_{k+1}^{T}y_{(k+1),2}-\left(Z_{k}b_{k}+2\hat X_{k+1}y_{k}+y_{k,2}\right)\quad\text{for }k\in\{1,\dots,\ell-2\},\\
S_{1,1}=2Y_{0},\;S_{k,(k+1)}=-W_{k}^{T}Z_{k},\;S_{(k+1),(k+1)}=2(Z_{k}+\hat X_{k})\quad\text{for }k\in\{1,\dots,\ell-1\},
\end{gather*}
where $Z_{k}=\diag(z_{k})$ and $\hat X_k=\diag(\hat x_k)$ for all $k$. $Y_{0}=\diag(y_{0})$.
\end{definition}

\subsection{Efficient algorithm for solving \texttt{BM} and \texttt{BM-Full}}\label{sec:algorithm}
The efficient formulations described in Appendix~\ref{sec:bm_l2} to \ref{sec:bm_full_linf} can be efficiently solved using a similar procedure described in Algorithm~\ref{alg:staircase}. In this section, we focus our attention on the practical and efficient algorithm for solving (\ref{eq:bm_l2}). We start by describing the initialization scheme for the primal variables $u_k$ and $V_k$ in (\ref{eq:bm_l2}), and then we summarize the efficient procedure for solving (\ref{eq:bm_l2}) in Algorithm~\ref{alg:bm_l2}. Notice that Algorithm~\ref{alg:bm_l2} can be easily extended for (\ref{eq:bm_full_l2}), (\ref{eq:bm_linf}) and (\ref{eq:bm_full_linf}).

\paragraph{Initialize the primal variables.} We initialize each $u_k$ and $V_k$ in (\ref{eq:bm_l2}) as close to their optimal as possible, which can be done as follows. First, apply \texttt{PGD} to estimate $x_1,\ldots,x_{\ell}$ in the following semi-targeted attack problem (\ref{eq:attack_l2}), which is the original semi-targeted attack problem (\ref{eq:attack}) with the valid input set constraint
\begin{align}\tag{A-$\ell_2$}\label{eq:attack_l2}
 	\phi[c]=\min_{x_{1},\dots,x_{\ell}}\ \ w_{\ell}^{T}x_{\ell}
 	\quad\text{s.t.}\quad & x_{k+1}=\max\{0,W_{k}x_{k}+b_{k}\},\quad
 	 0 \leq x_{1} \leq 1,\quad
 	 \|x_{1}-\hat{x}\|\le\rho.
\end{align}
Second, initialize each $u_{k}$ to $x_{k}$; notice that since (\ref{eq:bm_l2}) is a nonconvex relaxation of (\ref{eq:attack_l2}), $x_{k}$ would usually be a good initialization for $u_{k}$. Finally, initialize each $V_{k}$ to a random matrix that has small elements. The reason for applying small initialization to each $V_{k}$ is to reduce the degree of constraint violation at the initial point.

\begin{algorithm}[h!]
\caption{\label{alg:bm_l2} Efficient algorithm for (\ref{eq:bm_l2})}

\textbf{Input:} Initial relaxation rank $r\ge2$. Weights $W_{1},\dots,W_{\ell}$
and biases $b_{1},\dots,b_{\ell}$. Original
input $\hat{x}$, true label $\hat{c}$, target label $c$, and perturbation
size $\rho$. Variable radius bound $R$.

\textbf{Output:} Lower-bound $\phi_{\lb}[c]\le\phi[c]$ on the optimal
value of the semi-targeted attack problem (\ref{eq:attack_l2}).

\textbf{Initialization:} Initialize primal variable $x_{+}=(\{x_{k}\}_{k=1}^{\ell},\{\vec(M_{k})\}_{k=1}^{\ell})$ where $x_1,\ldots, x_\ell$ are estimated by solving (\ref{eq:attack_l2}) via \texttt{PGD}, and each $M_{k}$ is a random matrix of small elements that has the same shape as $V_{k}$. Initialize dual variables $y=(y_{0},\{y_{k,1},y_{k,2}\}_{k=0}^{\ell-1})=0$ and $z=(\{z_{k}\}_{k=1}^{\ell-1})=0.$

\textbf{Algorithm:}
\begin{enumerate}
\item (Solve rank-$r$ relaxation) Warm-start the nonlinear solver with the initial point $(x_{+},y,z)$, and then use the solver to solve (\ref{eq:bm_l2}) over $x=(\{u_{k}\}_{k=1}^{\ell},\{\vec(V_{k})\}_{k=1}^{\ell})$. After the solver converges, retrieve the corresponding dual multipliers $y$ and $z$. We choose \texttt{knitro} \cite{byrd2006k} as the nonlinear solver in our experiment.
\item (Check certifiable first-order optimality) Let $f(x)$, $g(x)$, and $h(x)$ denote the objective, the inequality constraints associated with $y$, and the equality constraints associated with $z$ in (\ref{eq:bm_l2}), respectively. If $\|\nabla f(x)+\nabla g(x)y+\nabla h(x)z\|$
is sufficiently small, and if $g(x)\le0$, $h(x)=0$ and $1+\|x\|<R$ hold to sufficient tolerance, then continue. Otherwise, return error due to solver's inability to achieve certifiable first-order optimality.
\item (Check dual feasibility) Compute $S(y,z)$ and $z_0$ using the formula in Definition~\ref{def:bm_l2}. If $\epsilon_{\feas}=-\lambda_{\min}[S(y,z)]$
is sufficiently small, then return $\phi_{\lb}[c]=z_{0}-\epsilon_{\feas}\cdot R^2.$
Otherwise, continue.
\item (Escape lifted saddle point) Compute the eigenvector $\xi=(0,\xi_{1},\dots,\xi_{\ell})$
satisfying $\|\xi\|=1$ and $\xi^{T}S(y,z)\xi=-\epsilon_{\feas}$.
Set up new primal initial point $x_{+}=(\{u_{k}\}_{k=1}^{\ell},\{\vec(V_{+,k})\}_{k=1}^{\ell})$ where $V_{+,k}=[V_{k},0]+\epsilon\cdot[0,\xi_{k}]$. Increment $r\gets r+1$ and repeat Step 1 with $(x_{+},y,z)$ as the initial point.
\end{enumerate}
\end{algorithm}

\section{Additional experiment for $\ell_2$ norm}\label{app:exp_l2}
\paragraph{Model architectures}
In this experiment, we consider three deepter feedforward ReLU networks trained on MNIST dataset: MLP-$4\!\times\!100$, MLP-$6\!\times\!100$ and MLP-$9\!\times\!100$. All three networks are adversarially trained using \cite{madry2017towards} with $\ell_\infty$ radius equals to 0.1\footnote{We train all three models using the code available at \url{https://github.com/locuslab/convex_adversarial/blob/master/examples/mnist.py}}. MLP-$4\!\times\!100$ has 4 hidden layers of 100 neurons each. MLP-$6\!\times\!100$ has 6 hidden layers of 100 neurons each. MLP-$9\!\times\!100$ has 9 hidden layers of 100 neurons each.

\subsection{Tightness plots for deeper neural networks}\label{app:tightness}
It is known that the relaxation gap generally increases along with the number of layers in the neural network. To demonstrate the performance of our proposed methods in deeper networks, in this experiment, we measure the relaxation gap of \texttt{BM-Full} and \texttt{BM} with respect to models of three different depths.

\paragraph{Results and discussions.}
Figure~\ref{fig:depth} plots the average bounds against $\ell_2$ perturbation radius for three MNIST networks with 4, 6, 9 hidden layers of 100 neurons each. Notably, \texttt{BM} become loose for MLP-$6\!\times\!100$ and become looser than \texttt{LP-Full} for MLP-$9\!\times\!100$. This result is expected and is consistent with \citet{zhang2020tightness}; in particular, the SDP relaxation for ReLU gate, without any bound constrains on preactivations, does become loose for multiple layers. On the other hand, \texttt{BM-Full} remain significantly tighter than \texttt{LP-Full} for all cases. Furthermore, since the preactivation bounds used in \texttt{BM-Full} and \texttt{LP-Full} are the same in this experiment, Figure~\ref{fig:tightness} and Figure~\ref{fig:depth} suggest that with the same quality of preactivation bounds, \texttt{BM-Full} would yield a tighter relaxation than \texttt{LP-Full}. Based on this finding, we note that the preactivation bounds for \texttt{BM-Full} can also be computed via nonconvex relaxation methods, which should yield a tighter bound on the preactivations and hence further reduces the relaxation gap for \texttt{BM-Full}; one example would be recursively apply \texttt{BM-Full} to compute the upper and lower bound on each neuron, however, such method can be extremely computational expensive for small to medium size networks. We leave \texttt{BM-Full} with better preactivation bounds to our future work.

\begin{figure}[h!]
    \centering
    \begin{subfigure}{0.33\textwidth}
      \centering
      \includegraphics[width=\linewidth]{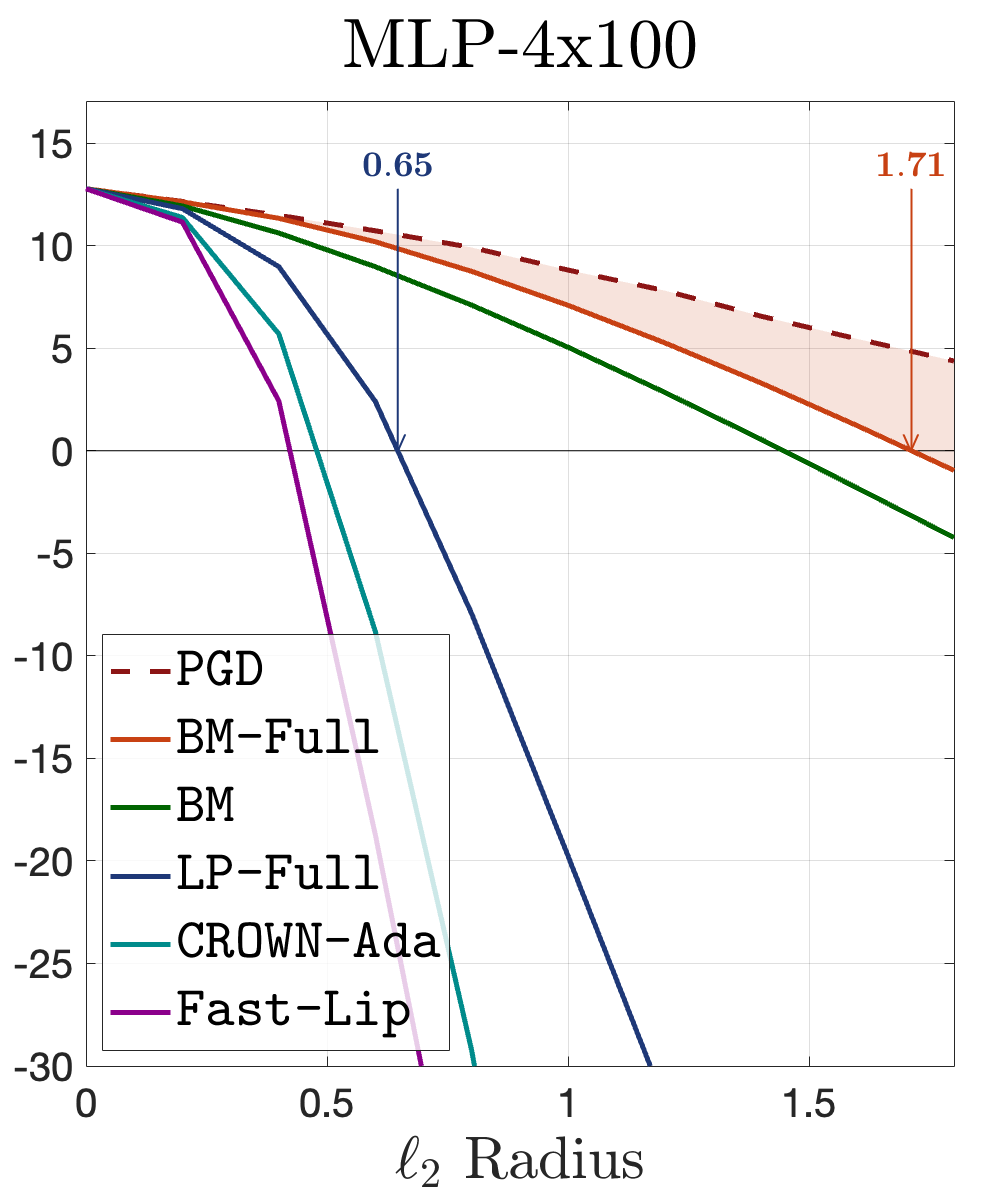}
    \end{subfigure}%
    \begin{subfigure}{0.33\textwidth}
      \centering
      \includegraphics[width=\linewidth]{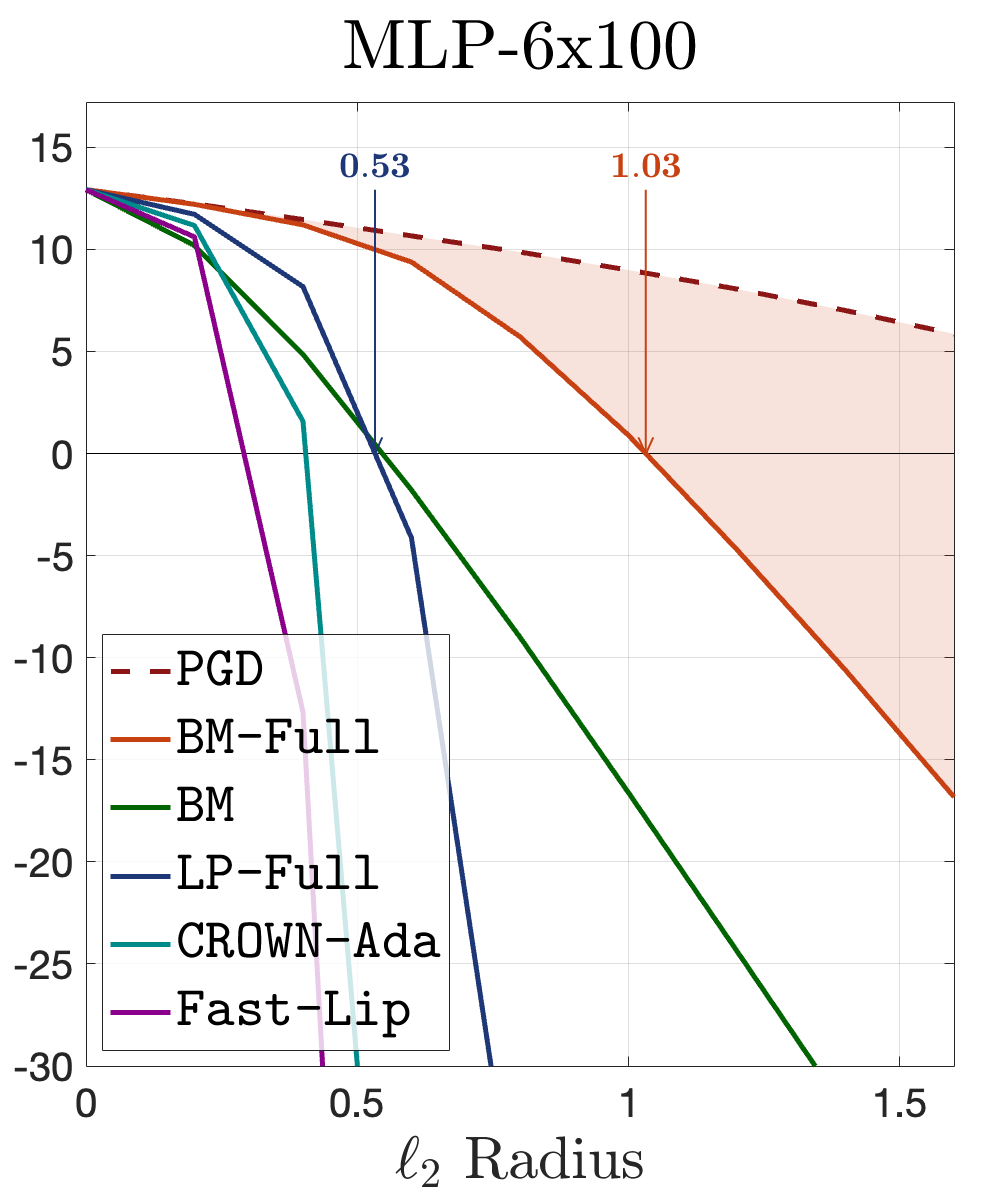}
    \end{subfigure}%
    \begin{subfigure}{0.33\textwidth}
      \centering
      \includegraphics[width=\linewidth]{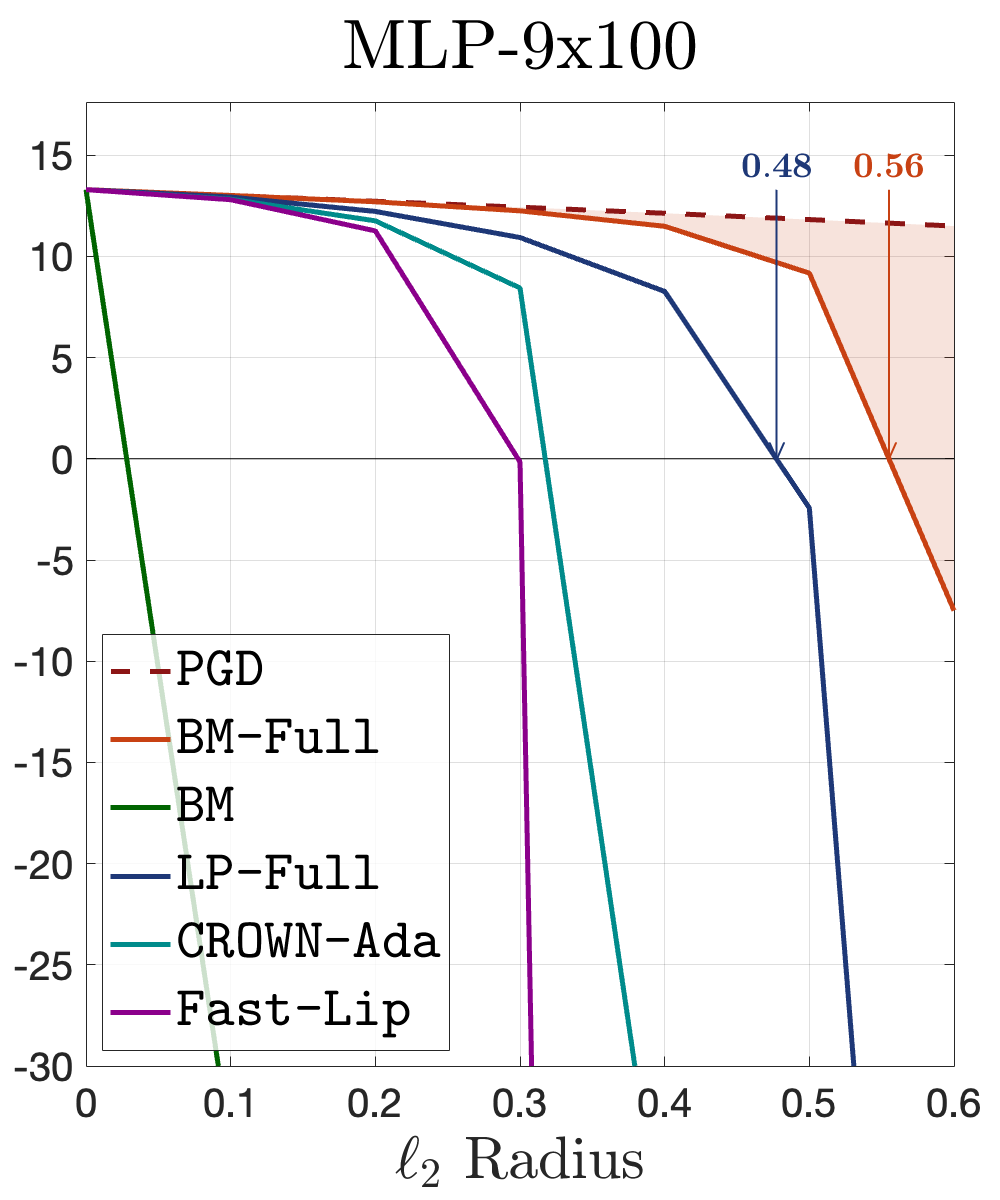}
    \end{subfigure}
    \caption{\textbf{Lower bound on (\ref{eq:attack}) with different network depth ($\ell_2$ norm).} We compute the average lower bound on (\ref{eq:attack}) for three models with 4, 6, 9 hidden layers of 100 neurons each, respectively. The upper bound on robustness margin is estimated via \texttt{PGD}. Observe that the gap between the \texttt{PGD} upper bound and lower bound from \texttt{BM-Full} are significantly smaller than that from \texttt{LP-Full}. We note that without bound propagations, \texttt{BM} does get loose when the number of hidden layers is more than 6. \textbf{(Left.)} MLP-$4\!\times\!100$. \textbf{(Middle.)} MLP-$6\!\times\!100$. \textbf{(Right.)} MLP-$9\!\times\!100$.}
    \label{fig:depth}
\end{figure}

\subsection{Visualizing adversarial attacks and robustness verification}
To illustrate why robustness verification is important in image classification, in this experiment, we perform a case study based on an image in the test set using the model ADV-MNIST. In particular, we focus on showing how would the $\ell_2$ adversaries look like in practice for four perturbation radius $\rho\in\{0.5, 1.0, 1.7,2.0\}$, as well as their corresponding lower bound on (\ref{eq:attack}) computed from our nonconvex and LP-based verifiers. We choose $\ell_2$ norm over $\ell_\infty$ norm for this experiment because $\ell_2$ norm allows perturbations to be concentrated on a small group of pixels, which produces adversaries that are more perceptually consistent to the original dataset when the perturbation radius is large.

\paragraph{Results and discussions.}
Figure~\ref{fig:case_study} shows: the adversarial attacks targeting the second most probable class of a MNIST image of class "1"; the probability of the top-2 classes for the attacked image (calculated using the softmax function); the \texttt{PGD} upper bound; and the lower bounds from different verifiers. For the MNIST image shown in the figure, the second most probable class is "7", and its adversarial attacks are computed via \texttt{PGD}.

The first and second column of Figure~\ref{fig:case_study} correspond to attacks within two small $\ell_2$ perturbation radius $\rho\in\{0.5, 1.0\}$. We see that every verifiers is able to verify robustness for $\rho=0.5$, however, only \texttt{BM-Full} and \texttt{BM} are able to verify robustness for $\rho=1.0$, all the other verifiers fail because their corresponding lower bounds become loose. In both cases, the adversaries look really similar to the original image.

The third and fourth column of Figure~\ref{fig:case_study} correspond to attack within two large $\ell_2$ perturbation radius $\rho\in\{1.7, 2.0\}$. Notice that \texttt{BM-Full} and \texttt{BM} can still verify robustness for $\rho=1.7$ even though the image is at the boundary of becoming not robust. In addition, the image is not robust for $\rho=2.0$ as the \texttt{PGD} upper bound is negative. Notably, both images start gaining features of the digit "7".

\begin{figure}[h!]
    \centering
    \includegraphics[width=\linewidth]{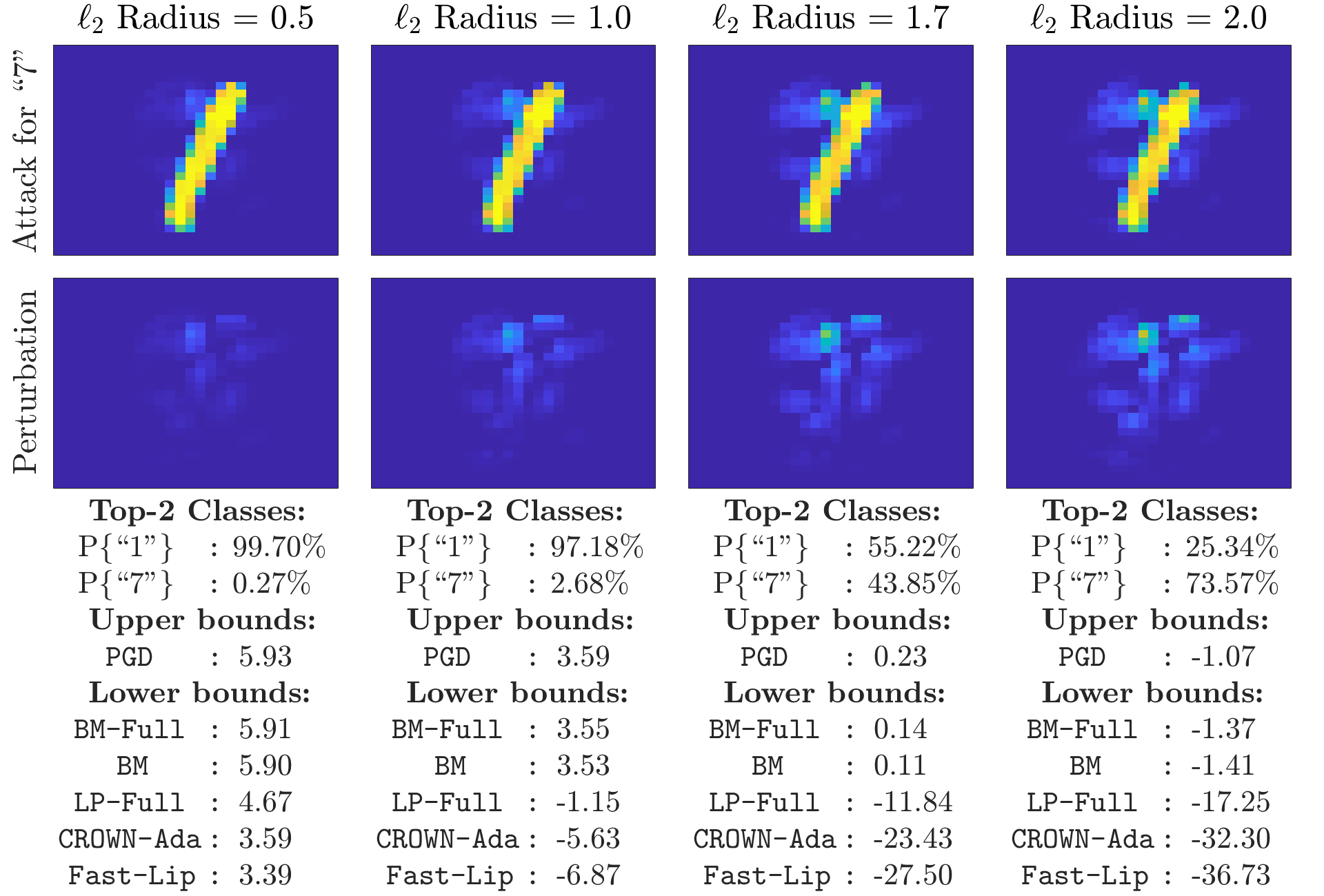}
    \caption{\textbf{Visualizing $\ell_2$ adversarial examples for ADV-MNIST.} We take an image in the test set to compare the capability of different verifiers for certifying robustness within four different $\ell_2$ radius $\rho\in\{0.5, 1.0, 1.7,2.0\}$. Each column in the figure shows (left to right): \textbf{(Column 1.)} The image is robust for $\rho=0.5$, and it can be certified by all verifiers. \textbf{(Column 2.)} The image is robust for $\rho=1.0$, but it can only be certified by our verifiers, \texttt{BM-Full} and \texttt{BM}. \textbf{(Column 3.)} The image is closed to become not robust for $\rho=1.7$, but it can still be certified by our verifiers. \textbf{(Column 4.)} The image is not robust for $\rho=2.0$.}
    \label{fig:case_study}
\end{figure}

\section{Additional experiment for $\ell_\infty$ norm}\label{app:exp_linf}
In this section, we apply \texttt{BM} and \texttt{BM-Full} to perform the same experiments in Section~\ref{sec:experiment} and Appendix~\ref{app:tightness} with respect to $\ell_\infty$ perturbation. Similar to the experimental results for $\ell_2$ perturbation, we set the preactivation bounds in \texttt{BM-Full} to be the same as those in \texttt{LP-Full}. The neural network models used in this section are defined in Section~\ref{sec:experiment} and Appendix~\ref{app:exp_l2}.

\subsection{Robust verification for $\ell_\infty$ adversaries}
We apply \texttt{BM} and \texttt{BM-Full} to verify robustness of the first 1000 correctly classified images. The simulation settings in this experiment are the same as those in Table~\ref{table:certify_img}.

\paragraph{Results and discussions.}
Table~\ref{table:certify_img_linf} shows the number of images certified as \emph{robust} within the first 1000 correctly classified images. We see that \texttt{BM-Full} is able to consistently outperform LP-based verifiers in all cases. \texttt{BM} outperforms \texttt{LP-Full} in ADV-MNIST and NOR-MNIST but achieves similar performance in LPD-MNIST. This is because LPD-MNIST is robustly trained by maximizing dual lower bound of LP relaxation over a convex outer polytope \citep{wong2018provable}; therefore, LP-based verifiers tend to work well for models that are robustly trained using this method. However, we note that even though models that are trained using \citet{wong2018provable} may be efficiently verified via LP-based verifiers, the robust training method of \citet{wong2018provable} is generally more conservative than the method of \citet{madry2017towards} as it is optimized over a convex outer polytope, which results in lower test accuracy in the final model. In this experiment, LPD-MNIST, the model that is trained using \citet{wong2018provable}, has test accuracy $95.91\%$, and ADV-MNIST, the model that is trained using \citet{madry2017towards} has accuracy $96.67\%$.
The performance of our verifiers, \texttt{BM-Full} and \texttt{BM}, is invariant to both robust training methods and achieve similar level of tightness with respect to all three model in Table~\ref{table:certify_img_linf}. We provide a thorough analysis on the tightness of \texttt{BM-Full} and \texttt{BM} in the next experiment.

\begin{table}[!h]
\caption{\textbf{Robustness verification for neural networks.} We compare the number of images certified as \emph{robust} by \texttt{BM-Full}, \texttt{BM}, $\alpha,\beta$\texttt{-CROWN}, \texttt{LP-Full}, \texttt{CROWN-Ada} and \texttt{Fast-Lip} within the first 1000 images for normally and robustly trained networks. The upper bound (denoted as UB) on the true number of robust images is obtained by \texttt{PGD}.}
\begin{center}
\resizebox{\linewidth}{!}{%
\begin{tabular}{
lcc c@{\hskip 5pt}c c@{\hskip 5pt}c c@{\hskip 5pt}c c@{\hskip 5pt}c c@{\hskip 5pt}c c@{\hskip 5pt}c
} 
\toprule  
  \multicolumn{1}{c}{Network} & $\ell_\infty$ Radius & \texttt{PGD}   & \multicolumn{2}{c}{\texttt{BM-Full}} & \multicolumn{2}{c}{\texttt{BM}} & \multicolumn{2}{c}{$\alpha,\beta$\texttt{-CROWN}} & \multicolumn{2}{c}{\texttt{LP-Full}} & \multicolumn{2}{c}{\texttt{CROWN-Ada}} & \multicolumn{2}{c}{\texttt{Fast-Lip}} \\
   \midrule
   &  & \footnotesize{UB} & \footnotesize{Robust} & \footnotesize{Time} & \footnotesize{Robust} & \footnotesize{Time}&\footnotesize{Robust} & \footnotesize{Time}&\footnotesize{Robust} & \footnotesize{Time}& \footnotesize{Robust}&\footnotesize{Time}& \footnotesize{Robust}&\footnotesize{Time}\\
\cmidrule(lr){3-3}\cmidrule(lr){4-5}\cmidrule(lr){6-7}\cmidrule(lr){8-9}\cmidrule(lr){10-11}\cmidrule(lr){12-13}\cmidrule(lr){14-15}
   ADV-MNIST & 0.10 & 831 & $\bf791$ &  87s & 760 & 124s & 791 &   2s & 314 & 16s &   8 & 10ms &   4 & 13ms\\
   ADV-MNIST & 0.13 & 731 & 632 & 102s & 574 & 144s & $\bf673$ &  11s &  46 & 17s &   1 &  9ms &   0 & 13ms\\
   ADV-MNIST & 0.15 & 626 & 484 & 127s & 366 & 125s & $\bf535$ &  22s &  11 & 15s &   0 &  9ms &   0 & 14ms\\
   \midrule
   LPD-MNIST & 0.10 & 868 & $\bf855$ & 125s & 828 & 163s & 818 & 0.2s & 829 & 13s & 589 & 12ms & 540 & 10ms\\
   LPD-MNIST & 0.13 & 791 & $\bf768$ & 104s & 713 & 126s & 743 & 0.2s & 689 & 13s & 154 & 10ms & 120 & 11ms\\
   LPD-MNIST & 0.15 & 727 & $\bf672$ & 120s & 597 & 132s & 672 & 0.3s & 545 & 12s &  43 &  9ms &  32 & 12ms\\
   \midrule
   NOR-MNIST & 0.02 & 910 & $\bf898$ &  99s & 859 & 116s & 881 & 1.4s & 686 &  3s & 130 &  9ms &  88 & 12ms\\
   NOR-MNIST & 0.03 & 775 & $\bf729$ & 143s & 617 & 107s & 713 &  16s & 278 &  4s &   8 &  8ms &   3 & 12ms\\
   NOR-MNIST & 0.05 & 401 & $\bf267$ & 229s & 127 & 154s & 238 &  43s &  10 &  5s &   0 & 12ms &   0 & 17ms\\
\bottomrule
\end{tabular}
}
\end{center}
\label{table:certify_img_linf}
\end{table}

\subsection{Tightness plots}
We apply \texttt{BM-Full} and \texttt{BM} to compute the average lower bound on (\ref{eq:attack}) with respect to $\ell_\infty$ perturbation radius. The simulation settings in this experiment are the same as those in Figure~\ref{fig:tightness}.

\paragraph{Results and discussions.}
As demonstrated in Figure~\ref{fig:tightness_linf}, both \texttt{BM-Full} and \texttt{BM} achieve tighter lower bound than the LP-based verifiers for all three models, across a wide range of $\ell_\infty$ perturbation radius. However, the gap between \texttt{BM} (green line) and \texttt{LP-Full} (blue line) is a lot smaller in LPD-MNIST compared to NOR-MNIST and ADV-MNIST, especially within the interval of $0.1$ to $0.15$. This explains why \texttt{BM} and \texttt{LP-Full} achieve similar performance for LPD-MNIST in Table~\ref{table:certify_img_linf}.

\begin{figure}[!h]
    \centering
    \begin{subfigure}{0.333\textwidth}
      \centering
      \includegraphics[width=\linewidth]{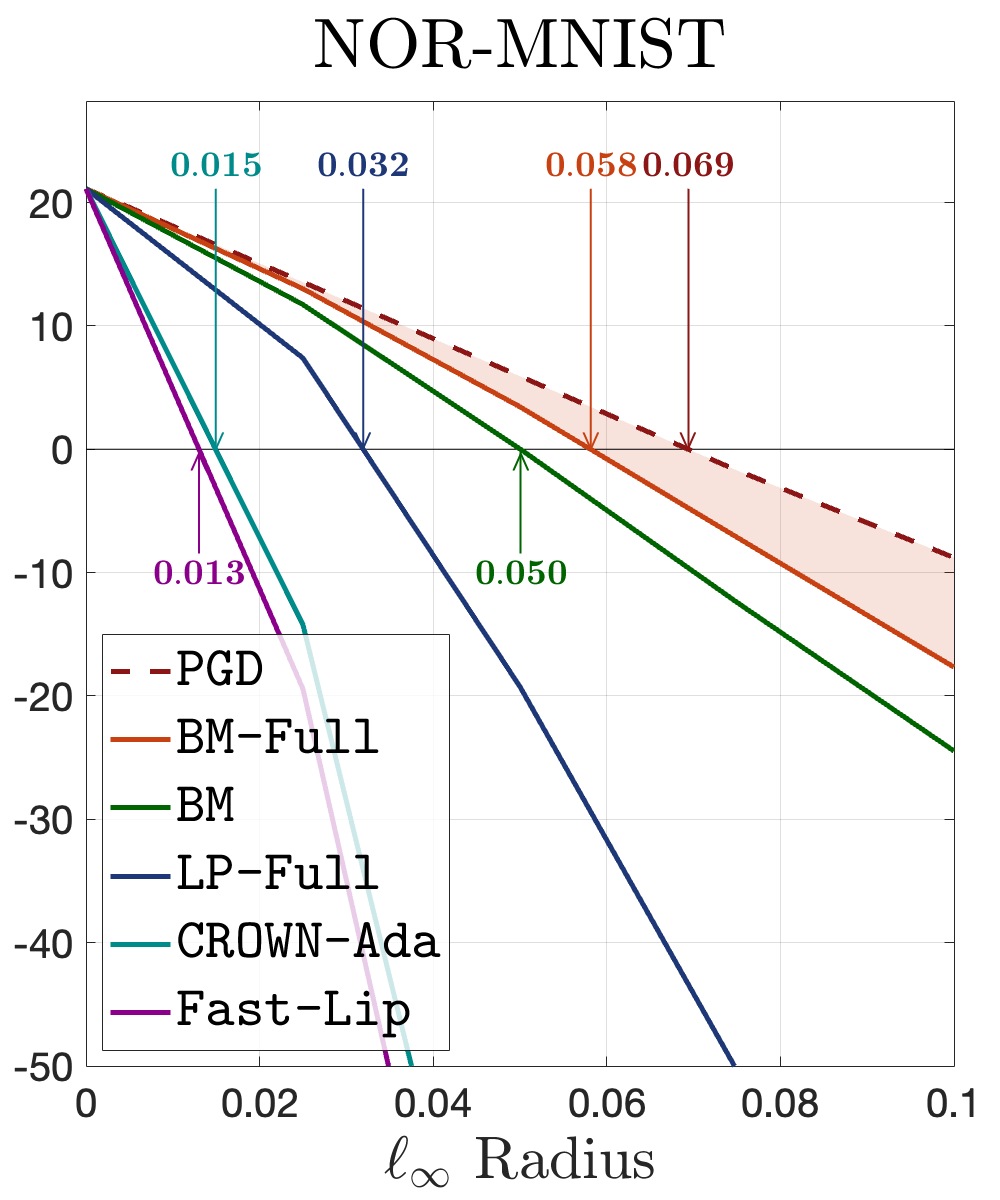}
    \end{subfigure}%
    \begin{subfigure}{0.333\textwidth}
      \centering
      \includegraphics[width=\linewidth]{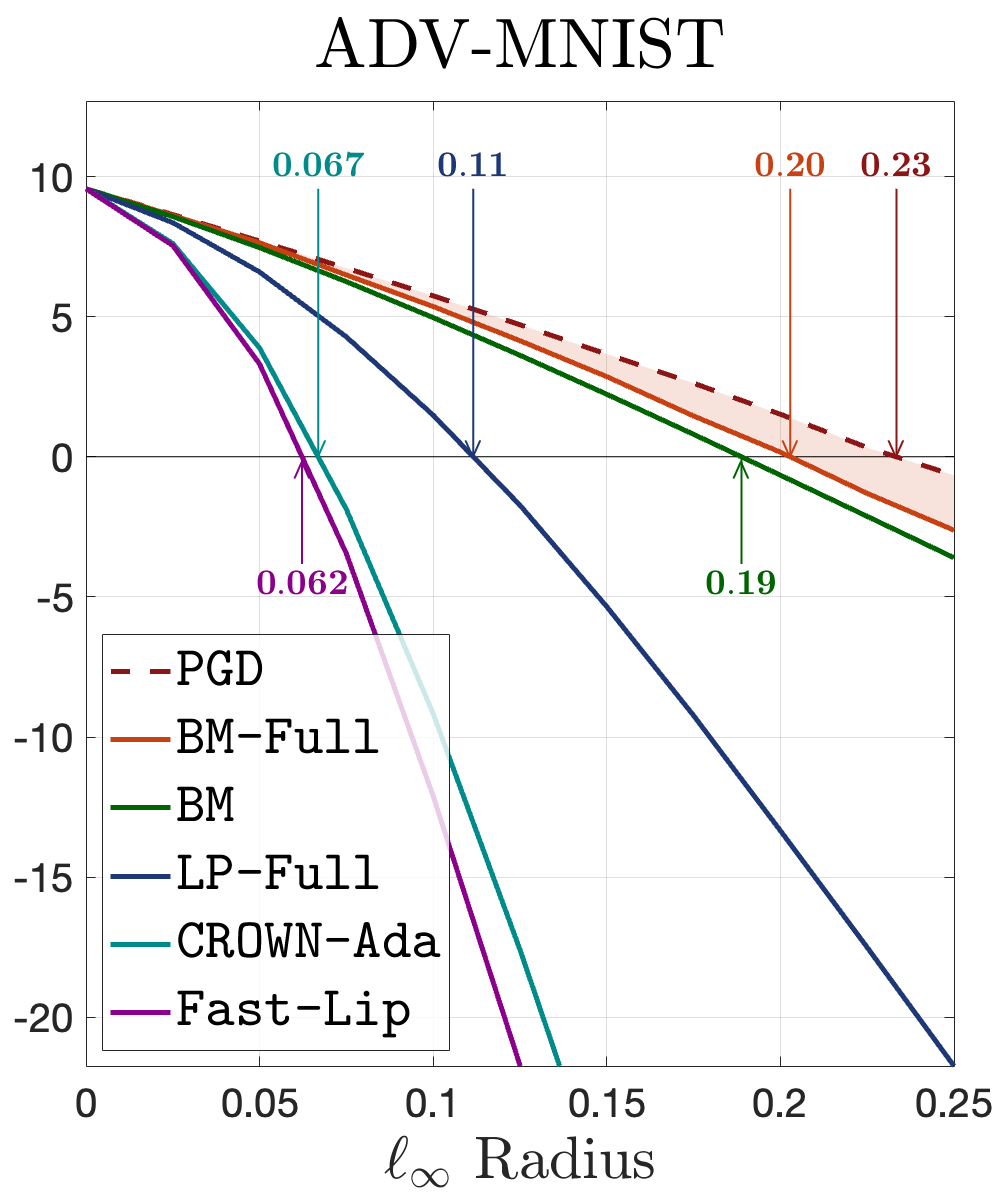}
    \end{subfigure}%
    \begin{subfigure}{0.333\textwidth}
      \centering
      \includegraphics[width=\linewidth]{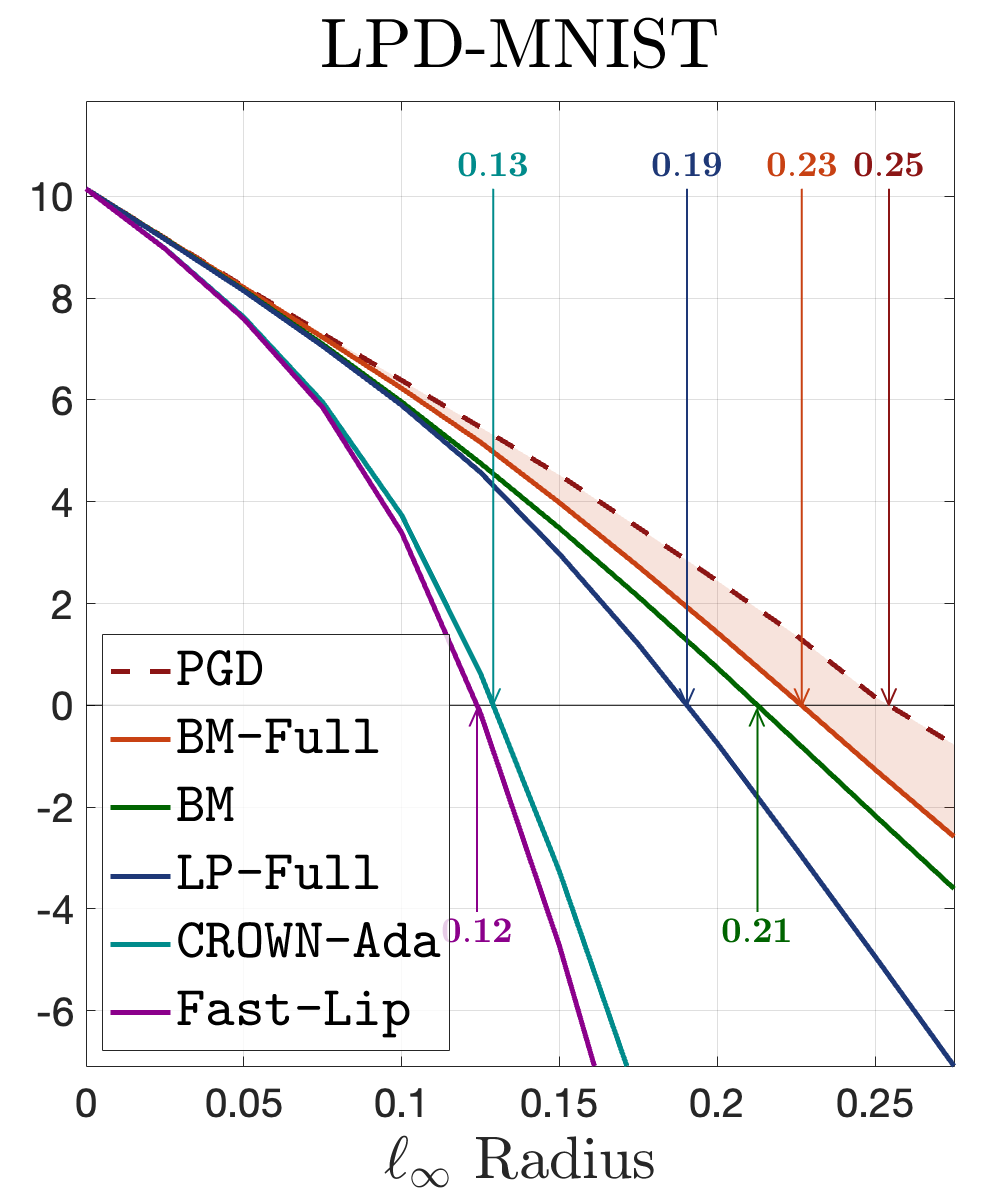}
    \end{subfigure}
    \caption{\textbf{Lower bounds on the robustness margin}. We take \texttt{BM-Full}, \texttt{BM}, \texttt{LP-Full}, \texttt{CROWN-Ada} and \texttt{Fast-Lip} to compute their average lower bound on (\ref{eq:attack}), and then compare them to the average \texttt{PGD} upper bound on (\ref{eq:attack}) over a wide range of $\ell_\infty$ perturbation radius. \textbf{(Left.)} NOR-MNIST. \textbf{(Middle.)} ADV-MNIST. \textbf{(Right.)} LPD-MNIST.}
    \label{fig:tightness_linf}
    \vspace{-0.5em}
\end{figure}

\subsection{Tightness plots for deeper neural networks}
We apply \texttt{BM-Full} and \texttt{BM} to compute the average lower bound on (\ref{eq:attack}) for three robustly trained MNIST models of different depths. The simulation settings in this experiment are the same as those in Figure~\ref{fig:depth}.

\paragraph{Results and discussions.}
We plot the average lower bound on (\ref{eq:attack}) for three MNIST models of different depths. Similar to the results in Figure~\ref{fig:depth}, we see that \texttt{BM} becomes loose when the network has more than 6 layers; this is expected as SDP relaxation, without any bound propagation, does become loose for multiple layers \citep{zhang2020tightness}. Though \texttt{BM} becomes loose in deeper networks, \texttt{BM-Full} remains significantly tighter than \texttt{LP-Full} in all cases. 
We again emphasize that the preactivation bounds used in \texttt{BM-Full} are the same as those in \texttt{LP-Full} in this experiment, and those bounds could be tighten by using the nonconvex relaxation techniques proposed in this paper. We defer it to our future work.

\begin{figure}[h!]
    \centering
    \begin{subfigure}{0.33\textwidth}
      \centering
      \includegraphics[width=\linewidth]{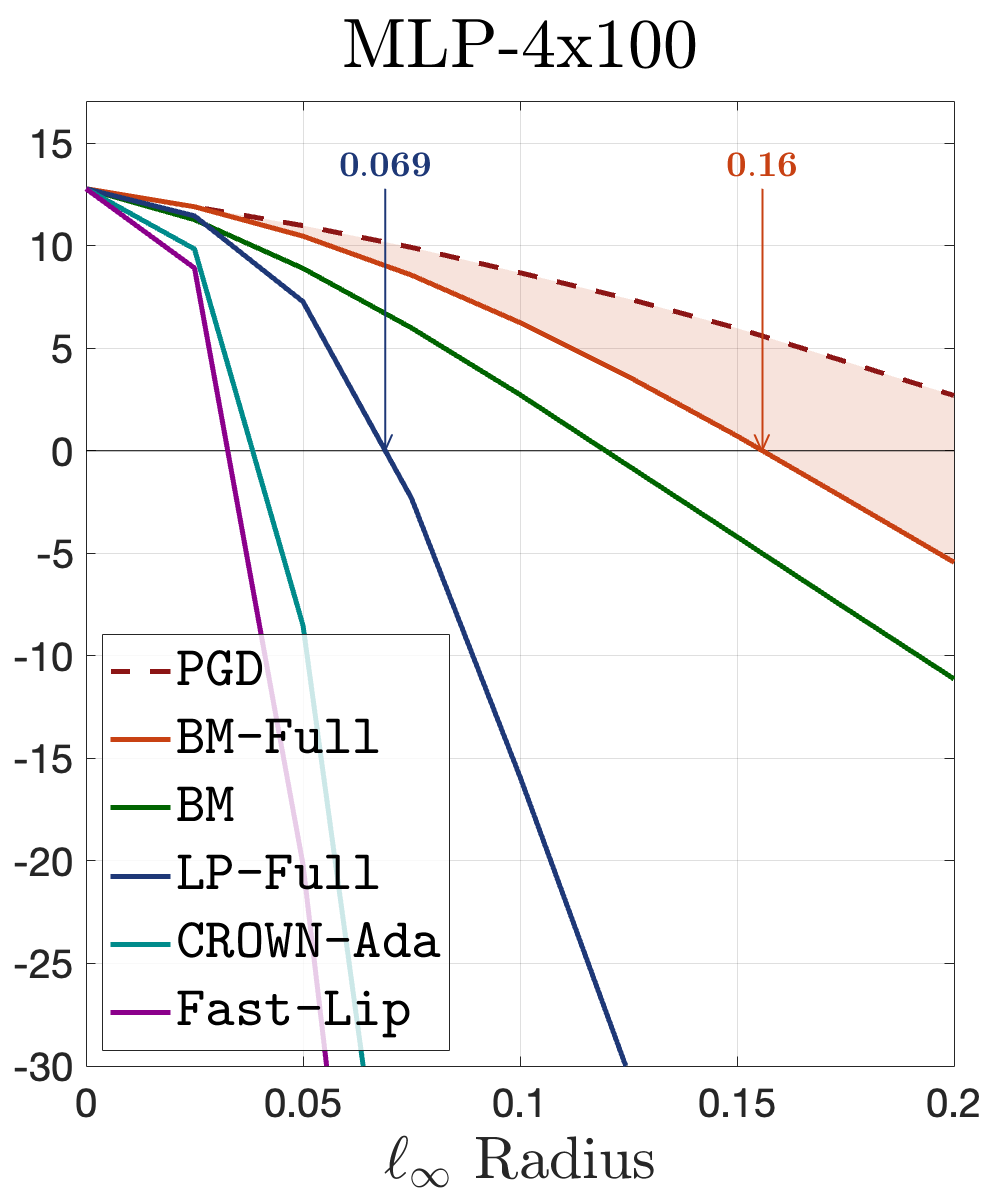}
    \end{subfigure}%
    \begin{subfigure}{0.33\textwidth}
      \centering
      \includegraphics[width=\linewidth]{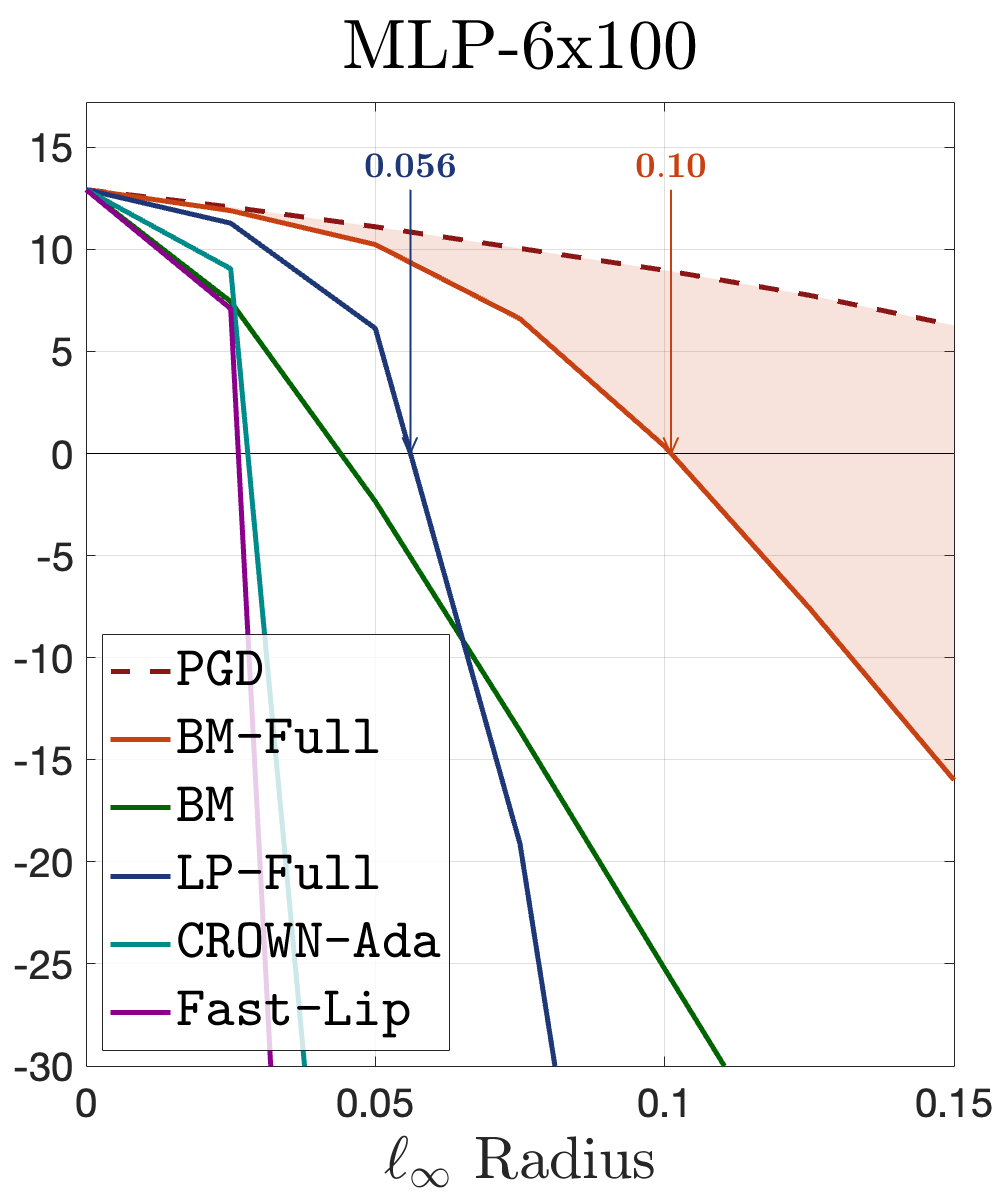}
    \end{subfigure}%
    \begin{subfigure}{0.33\textwidth}
      \centering
      \includegraphics[width=\linewidth]{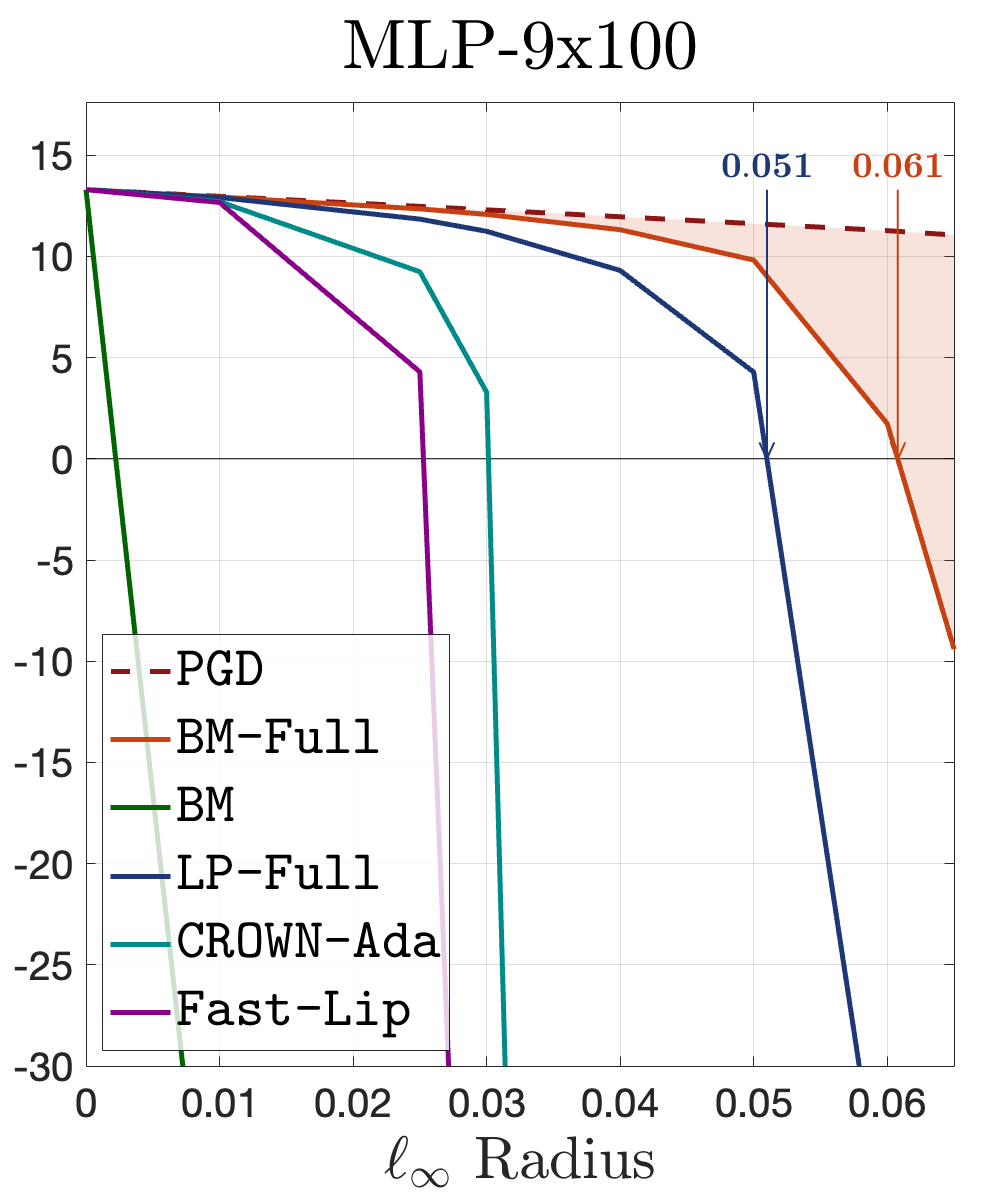}
    \end{subfigure}
    \caption{\textbf{Lower bound on (\ref{eq:attack}) with different network depth ($\ell_\infty$ norm).} We compute the average lower bound on (\ref{eq:attack}) for three models with 4, 6, 9 hidden layers of 100 neurons each, respectively. Observe that \texttt{BM-Full} is significantly tighter than \texttt{LP-Full} in all cases. We note that without bound propagations, \texttt{BM} does become loose when the number of hidden layers is more than 6. \textbf{(Left.)} MLP-$4\!\times\!100$. \textbf{(Middle.)} MLP-$6\!\times\!100$. \textbf{(Right.)} MLP-$9\!\times\!100$.}
    \label{fig:depth_linf}
    \vspace{-2em}
\end{figure}

\newpage
\section{Derivation of the dual problem (\ref{eq:sdd})}
Recall that primal problem (\ref{eq:sdp}) is written (with the corresponding
Lagrange multipliers given in parantheses) as the following
\begin{alignat*}{2}
\phi_{r}[c]=\min_{X\in\S^{n+1}}\quad & w_{\ell}^{T}x_{\ell} + w_{0}x_{0} & \tag{SDP-\ensuremath{r}}\\
\text{s.t. }\ \quad 
 & \tr(X_{1,1})-2x_{1}^{T}\hat{x}+x_{0}\|\hat{x}_{1}\|^{2}\le x_{0}\rho^{2},\quad x_{0}=1 & \qquad(y_{0},z_{0})\\
 & x_{k+1}\ge0,\qquad x_{k+1}\ge W_{k}x_{k}+b_{k}x_{0}, &  \qquad(y_{k,1},y_{k,2})\\
 & \diag(X_{k+1,k+1}-W_{k}X_{k,k+1}-b_{k}x_{k+1}^{T})=0 &  (z_{k})\\
 & \tr(X)\le R^{2}, &  (\mu)\\
 & X=\left[\begin{array}{c|ccc}
	x_{0} & x_{1}^{T} & \cdots & x_{\ell}^{T}\\
	\hline x_{1} & X_{1,1} & \cdots & X_{1,\ell}\\
	\vdots & \vdots & \ddots & \vdots\\
	x_{\ell} & X_{1,\ell}^{T} & \cdots & X_{\ell,\ell}
	\end{array}\right]\succeq0,\quad\rank(X)\le r. &  (S)
\end{alignat*}
for all $k\in\{1,\ldots,\ell-1\}$. This problem can be viewed as a generic instance of the following primal problem
\begin{align*}
 \min_{X\succeq0,\ \rank(X)\le r}\ \inner FX\quad\text{s.t.}\quad
\G_{0}(X)\le0,\quad\G_{k}(X)\le0,\quad
\H_{0}(X)+1=0,\quad\H_{k}(X)=0,\quad \tr(X)\le R^2
\end{align*}
for all $k\in\{1,\dots,\ell-1\}$. Here, we implicitly define $F$ to satisfy $\inner FX=w_{\ell}^{T}x_{\ell}+ w_{0}x_{0}$ and the linear constraint operators are respectively
\begin{gather*}
\G_{0}(X)=\tr(X_{1,1})-2x_1^{T}\hat{x}+(\|\hat{x}\|^{2}-\rho^{2})x_{0},\quad
\G_{k}(X)=\begin{bmatrix}-x_{k+1}\\W_{k}x_{k}+b_{k}x_{0}-x_{k+1}\end{bmatrix},\\
\H_{0}(X)=-x_{0},\quad
\H_{k}(X)=\diag(X_{k+1,k+1}-W_{k}X_{k,k+1}-b_{k}x_{k+1}^{T}).
\end{gather*}
Notice that $S$ is the dual variable associated with constraint $X\succeq 0$ that satisfies $S\succeq 0$ and $\rank(X)+\rank(S)\le n+1$ at optimality. Setting $y=(y_0,\{y_{k,1},y_{k,2}\}_{k=1}^{\ell-1})>0$, $z=(\{z_{k}\}_{k=1}^{\ell-1})$ and $\mu \le 0$, the Lagrangian of (\ref{eq:sdp}) reads
\begin{align*}
\L(X,y,z,\mu,S) & = \inner FX+\inner{z_{0}}{\H_{0}(X)+1}+\inner{y_{0}}{\G_{0}(X)}+\sum_{k=1}^{\ell-1}\left[\inner{\begin{bmatrix}y_{k,1}\\y_{k,2}\end{bmatrix}}{\G_{k}(X)}+\inner{z_{k}}{\H_{k}(X)}\right]\\
	& \qquad -\mu(\tr X-R^2)-\inner SX \\
	& = z_0 + R^2\mu + \inner{F+\G_{0}^{T}(y_{0})+\H_{0}^{T}(z_{0})+\sum_{k=1}^{\ell-1}\left[\G_{k}^{T}(y_{k,1},y_{k,2})+\H_{k}^{T}(z_{k})\right]-S-\mu I}{X}\\
	& = z_0 + R^2\mu + \inner{S(x,y)-S-\mu I}{X}
\end{align*}
where we use the superscript ``$T$'' to indicate the adjoint
operators. Setting $S=S(x,y)-\mu I\succeq 0$ yields the dual problem
\begin{align*}
\max_{y\ge0,\ z,\ \mu\le0}\ z_{0}+R^{2}\mu\quad \text{s.t.}\quad S(y,z)\succeq\mu I. \tag{SDD}
\end{align*}
Finally, we derive an explicit expression for the slack matrix
\[
S(y,z)\equiv\frac{1}{2}\left[\begin{array}{c|cccc}
s_{0} & s_{1}^{T} & s_{2}^{T} & \cdots & s_{\ell}^{T}\\
\hline s_{1} & S_{1,1} & S_{1,2}\\
s_{2} & S_{1,2}^{T} & S_{2,2} & \ddots\\
\vdots &  & \ddots & \ddots & S_{\ell-1,\ell}\\
s_{\ell} &  &  & S_{\ell-1,\ell}^{T} & S_{\ell,\ell}
\end{array}\right]
=F+\G_{0}^{T}(y_{0})+\H_{0}^{T}(z_{0})+\sum_{k=1}^{\ell-1}\left[\G_{k}^{T}(y_{k,1},y_{k,2})+\H_{k}^{T}(z_{k})\right].
\]
Here, we write out the adjoint operators in terms of the scalar, vector, and
matrix components of $X$:
\begin{alignat*}{3}
\inner XF & = x_{0}\cdot w_{0} &&+\inner{x_{\ell}}{w_{\ell}} &&\\
\inner X{\H_{0}^{T}(z_{0})} & =x_{0}\cdot(-z_{0})&& &&\\
\inner X{\G_{0}^{T}(y_{0})} & =x_{0}\cdot y_{0}(\|\hat{x}\|^{2}-\rho^{2}) &&+\inner{x_{1}}{-2y_{0}\hat{x}} &&+\inner{X_{1,1}}{y_{0}I}\\
\inner X{\G_{k}^{T}(y_{k,1},y_{k,2})} & =x_{0}\cdot y_{k,2}^{T}b_{k} &&+ \inner{\begin{bmatrix}x_{k}\\x_{k+1}\end{bmatrix}}{\begin{bmatrix}W_{k}^{T}y_{k,2}\\-(y_{k,1}+y_{k,2})\end{bmatrix}} && \\
\inner X{\H_{k}^{T}(z_{k})} & = &&+\inner{x_{k+1}}{-Z_{k}b_{k}} &&+\inner{\begin{bmatrix}X_{k,(k+1)}\\X_{(k+1),(k+1)}\end{bmatrix}}{\begin{bmatrix}-W_{k}^{T}Z_{k}\\Z_{k}\end{bmatrix}}
\end{alignat*}
where $Z_{k}=\diag(z_{k})$. 
Isolating the scalar terms $x_{0}$, we obtain the expression for $s_0$
\[
s_{0}=2\left[w_0+y_{0}(\|\hat{x}\|^{2}-\rho^{2})+\sum_{k=1}^{\ell-1}b_{k}^{T}y_{k,2}-z_{0}\right]
\]
Isolating the vector terms $x_{1},x_{2},\dots,x_{\ell}$, we see that the following indeed holds for $s_1,\ldots,s_\ell$
\begin{gather*}
s_{1}=W_{1}^{T}y_{1,2}-2\hat{x}y_{0},\quad s_{\ell}=w_{\ell}-\left[Z_{(\ell-1)}b_{(\ell-1)}+y_{(\ell-1),1}+y_{(\ell-1),2}\right],\\
s_{k+1}=W_{k+1}^{T}y_{(k+1),2}-\left(Z_{k}b_{k}+y_{k,1}+y_{k,2}\right)\quad\text{for }k\in\{1,\dots,\ell-2\}.
\end{gather*}
Finally, isolating the matrix terms $X_{i,j}$ for $i,j\in \{1,\ldots,\ell\}$, we have
\[
S_{1,1}=2y_{0}I,\quad S_{k,k+1}=-W_{k}^{T}Z_{k},\quad S_{k+1,k+1}=2Z_{k} \quad\text{for }k\in\{1,\dots,\ell-1\}.
\]

\section{Proof of the Main Results}
\subsection{Proof of Proposition~\ref{prop:dual}: the dual lower bounds for (\normalfont\ref{eq:sdp})}
Recall from the previous section, we have rewritten (\ref{eq:sdp}) into the following generic form
\begin{align*}
 \min_{X\succeq0,\ \rank(X)\le r}\ \inner FX\quad\text{s.t.}\quad
\G_{0}(X)\le0,\quad\G_{k}(X)\le0,\quad
\H_{0}(X)+1=0,\quad\H_{k}(X)=0,\quad \tr(X)\le R^2
\end{align*}
Now we are ready to prove \propref{dual}.
\begin{proof}
Let $X^{\star}$ denote the global solution of
(\ref{eq:sdp}) with rank $r\ge1$ and $\tr(X)\le R^2$. Then, for any dual multipliers $y=(y_{0},\{y_{k,1},y_{k,2}\}_{k=1}^{\ell-1})$ and $z=(z_{0},\{z_{k}\}_{k=1}^{\ell-1})$ that satisfy $y\ge0$, we have
\begin{align*}
	\phi[c]\ge \phi_{r}[c] &= \inner F{X^\star}\\
	&=\inner {S(x,y)-\G_{0}^{T}(y_{0})-\H_{0}^{T}(z_{0})-\sum_{k=1}^{\ell-1}\left[\G_{k}^{T}(y_{k,1},y_{k,2})+\H_{k}^{T}(z_{k})\right]}{X^\star}\\
	&=\inner{S(x,y)}{X^\star} - \underbrace{\inner{\begin{bmatrix}\G_{0}(X^\star)\\\vdots\\\G_{\ell-1}(X^\star)\end{bmatrix}}{y}}_{\leq 0} - \underbrace{\inner{\begin{bmatrix}\H_{0}(X^\star)\\\vdots\\\H_{\ell-1}(X^\star)\end{bmatrix}}{z}}_{=-z_0}\\
	&\ge z_{0} + \inner{S(x,y)}{X^\star}\\
	&\ge z_{0}+R^{2}\cdot\min\{0,\lambda_{\min}[S(y,z)]\}.
\end{align*}
\end{proof}

\subsection{Proof of the dual lower bound for (\normalfont\ref{eq:bm})}
we start by putting (\ref{eq:bm}) into the standard-form of nonlinear program (\ref{eq:nlp}). In the previous section, we have rewritten the original SDP problem (\ref{eq:sdp}) in the following generic form 
\begin{align*}
 \min_{X\succeq0,\ \rank(X)\le r}\ \inner FX\quad\text{s.t.}\quad
\G_{0}(X)\le0,\quad\G_{k}(X)\le0,\quad
\H_{0}(X)+1=0,\quad\H_{k}(X)=0,\quad \tr(X)\le R^2
\end{align*}
Since every $(n+1)\times (n+1)$ positive semidefinite matrix $X$ of rank at most $r$ admits an $(n+1)\times n+1$ lower-triangular Cholesky factorization $U$ that satisfies $X=UU^{T}$.  Substituting
\[
X=\begin{bmatrix}u_{0}^{2} & u_{0}\cdot u^{T}\\
u_{0}\cdot u & uu^{T}+VV^{T}
\end{bmatrix}=
\left[\begin{array}{c|c}
u_{0} & 0\\
\hline u_{1} & V_{1}\\
\vdots & \vdots\\
u_{\ell} & V_{\ell}
\end{array}\right]\left[\begin{array}{c|c}
u_{0} & 0\\
\hline u_{1} & V_{1}\\
\vdots & \vdots\\
u_{\ell} & V_{\ell}
\end{array}\right]^{T}=UU^{T},
\]
as in (\ref{eq:bmpart}), we obtain the generic form of our proposed Burer--Monteiro formulation (\ref{eq:bm})
\[
\min_{\|U\|\le R}\ \inner{F}{UU^{T}} \quad \text{s.t.}\quad\G_0(UU^{T})\le0,\quad \G_k(UU^{T})\le0,\quad\H_0(UU^{T})+1=0,\quad\H_k(UU^{T})=0.
\] 
To solve (\ref{eq:bm}) as an instance of the standard-form nonlinear program (\ref{eq:nlp}). Define 
\[
x \equiv \begin{bmatrix}
	u_0 \\ u_1 \\\vdots \\u_\ell \\ \vec(V_1) \\ \vdots \\\vec(V_\ell)
\end{bmatrix},\quad
f(x)\equiv \inner{F}{UU^{T}},\quad
g(x)\equiv \underbrace{\begin{bmatrix}
	\G_0(UU^{T}) \\ \G_1(UU^{T}) \\ \vdots \\ \G_{\ell-1}(UU^{T})
\end{bmatrix}}_{\G(UU^T)},\quad
h(x)\equiv\underbrace{\begin{bmatrix}
	\H_0(UU^{T}) \\ \H_1(UU^{T}) \\ \vdots \\ \H_{\ell-1}(UU^{T})
\end{bmatrix}}_{\H(UU^T)}+\underbrace{\begin{bmatrix}1\\0\\\vdots\\0\end{bmatrix}}_{h_0}
\]
we obtain the standard-form of nonlinear program (\ref{eq:nlp})
\[
\min_{\|x\|\le R}\ f(x)\quad\text{s.t.}\quad g(x)\le0,\quad h(x)=0.
\]
Similar to the previous section, let $y\ge 0$, $z$ and $\mu \le 0$ denote the Lagrangian multiplier associated with $g(x)\le 0$, $h(x) = 0$ and $\|x\|^2\le R^2$, respectively. The corresponding Lagrangian function reads
\begin{equation}\label{eq:langrange_BM}
\L(x,y,z,\mu)=f(x)+y^{T}g(x)+z^{T}h(x)-\mu(\|x\|^2-R^2)=z_0+\mu R^2+\inner{S(x,y)-\mu I}{U(x)U(x)^T}
\end{equation}
where $S(y,z)=F+\G^{T}(y)+\H^{T}(z)$ and $U(x)$ is a matricization operator 
\[U(x)\equiv U(u_{0},u,\vec(V))=\begin{bmatrix}u_{0} & 0\\u & V\end{bmatrix}.\]
General-purpose nonlinear programming solvers work by computing a
feasible primal point $x$ and dual multipliers $y,z,\mu$ that
satisfy the \emph{first-order optimality} condition (known as the
Karush--Kuhn--Tucker (KKT) conditions)
\begin{equation}
\nabla_{x}\L(x,y,z,\mu)=0,\quad y\ge0,\quad y\odot g(x)=0,\quad \mu\le0,\quad \mu\cdot (\|x\|^2-R^2)=0,\tag{FOC}\label{eq:foc}
\end{equation}
and also attempt to achieve the \emph{second-order
optimality} condition (known as the projected Hessian condition)
\begin{equation}
\dot{x}^{T}\nabla_{xx}^{2}\L(x,y,z,\mu)\dot{x}\ge0\quad\text{for all }\dot{x}\in\C(x,y)\tag{SOC}\label{eq:soc}
\end{equation}
in which the \emph{critical cone} is defined 
\begin{equation}
\mathcal{C}(x,y)=\left\{ \dot{x}:\begin{array}{rl}
\nabla g_{i}(x)^{T}\dot{x} & \ge0\quad\text{for all }i\text{ with }g_{i}(x)=0,\\
\nabla g_{i}(x)^{T}\dot{x} & =0\quad\text{for all }i\text{ with }y_{i}>0,\\
\nabla h_{i}(x)^{T}\dot{x} & =0\quad\text{for all }j,\\
2x^T\dot x & \ge 0\quad\text{if $\|x\|^2=R^2$},\\
2x^T\dot x & = 0\quad\text{if $\mu < 0$}.\\
\end{array}\right\} \label{eq:critic}
\end{equation}
However, in the constrained optimization setting, a local minimum
$x^{\star}$ does not need to satisfy (\ref{eq:foc}) and (\ref{eq:soc})\footnote{For an explicit counterexample, consider $f(x)=x$ and $g(x)=x^{2}$
and $h(x)=0$.}. Solvers tend to fail catastrophically when converging
towards a point that does not satisfy (\ref{eq:foc}) and (\ref{eq:soc}),
either by diverging to infinity or cycling through nonsensical solutions. 
Instead, a notion of \emph{constraint qualification }is required
to ensure convergence. Of all possibilities, the LICQ is one of the
stronger conditions that allow strong guarantees to be made.

\begin{definition}[LICQ]
We say that a given $x$ satisfies the \emph{linear independence
constraint qualification} (LICQ) if the following holds
\begin{equation}
\nabla g(x)y+\nabla h(x)z + 2\mu\cdot x = 0,\quad g(x)\odot y=0,\quad \mu\cdot(\|x\|^2-R^2)=0\quad\iff\quad y=0,\quad z=0,\quad \mu=0. \tag{LICQ}\label{eq:licq}
\end{equation}
\end{definition}

One of our main theoretical contribution in this paper is to state the conditions for a point $x$ to satisfy (\ref{eq:licq}). 

\begin{lemma}[LICQ for (\ref{eq:bm})]\label{lem:licq}
If $x$ satisfy the nonzero preactivation condition (\ref{eq:npcq}). Then, $x$ satisfies (\ref{eq:licq}).
\end{lemma}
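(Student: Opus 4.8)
The plan is to show the contrapositive-free direction of (\ref{eq:licq}): assuming the nonzero-preactivation condition (\ref{eq:npcq}) holds at $x=(u_0,u,\vec(V))$, I want to show that the only multipliers $(y,z,\mu)$ solving the stationarity system
\[
\nabla g(x)y+\nabla h(x)z+2\mu x=0,\quad y\odot g(x)=0,\quad \mu\cdot(\|x\|^2-R^2)=0
\]
are $y=0$, $z=0$, $\mu=0$. First I would write out $\nabla g(x)$ and $\nabla h(x)$ block by block, using the explicit form of the constraint functions $g_0,g_{k,1},g_{k,2},h_0,h_k$ from (\ref{eq:bm}), differentiated with respect to the primal blocks $u_0$, $u_1,\dots,u_\ell$, and $V_1,\dots,V_\ell$. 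The key structural observation is that the gradient of $h_k$ (the $\diag$-of-products equality constraint coupling layers $k$ and $k+1$) with respect to the columns of $V_{k+1}$ has the form $z_k\odot(\e_i^T W_k V_k)$-type terms, so that (\ref{eq:npcq})'s second clause $\e_i^T W_k V_k\neq 0$ is exactly what makes the $V_{k+1}$-block of the stationarity equation force $z_k=0$ — provided we already know $\mu$ does not interfere. I would first argue $\mu=0$: the $V$-block of $2\mu x$ is $2\mu\vec(V)$, and combined with the $V$-contributions of the constraints, the trace/norm constraint is the only one active through $\mu$; a short argument (e.g. looking at the top-left scalar block $u_0$, where the objective and the $x_0=1$ constraint interact, or using that the $V_1$ column from $g_0$ is $2V_1 y_0$) isolates $\mu$ and shows it must vanish.

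The second and main step is a layer-by-layer (likely backward, from $k=\ell-1$ down to $k=1$) elimination. With $\mu=0$ in hand, the stationarity equation restricted to the columns of $V_{k+1}$ reads roughly
\[
2 V_{k+1}\,\mathrm{(diag\ term\ from\ }z_k)+2 V_{k+1} y_k - W_k^T(\text{diag term from }z_k)V_{k+1}\cdots = 0,
\]
but the cleanest handle is that the $i$-th diagonal entry of $h_k$ differentiated in the direction "$i$-th row of $V_{k+1}$" produces a coefficient proportional to $\e_i^T(W_k V_k)$ acting on $z_k[i]$; since this is nonzero by (\ref{eq:npcq}), and since no other multiplier couples into that particular scalar equation, we get $z_k[i]=0$ for every $i$, hence $Z_k=0$. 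Once $z_k=0$, the $u_{k+1}$-block of stationarity collapses to an equation only in $y_{k,1},y_{k,2}$ (and terms inherited from layer $k+1$, which by the backward induction are already zero), and here the first clause of (\ref{eq:npcq}), $\e_i^T(W_k u_k+b_k u_0)\neq 0$, together with complementarity $y_{k,1}\odot g_{k,1}=0$, $y_{k,2}\odot g_{k,2}=0$, rules out both multipliers being simultaneously nonzero and then the gradient equation forces each to be zero componentwise. Concretely: at a given neuron $i$, the preactivation $\e_i^T(W_ku_k+b_ku_0)$ and the postactivation $u_{k+1}[i]$ cannot both be zero, so at most one of $g_{k,1}[i]$, $g_{k,2}[i]$ is active, so at most one of $y_{k,1}[i],y_{k,2}[i]$ is nonzero; the stationarity row then reads $\pm y[i]\cdot(\text{nonzero}) = 0$, giving $y[i]=0$. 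Propagating this down to $k=1$, and finally using the $u_1$ and $u_0$ blocks (where $y_0$ and $z_0$ live), I recover $y_0=0$ and $z_0=0$, completing the argument.

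The step I expect to be the main obstacle is getting the block structure of $\nabla g,\nabla h$ exactly right — in particular, tracking how the equality constraint $h_k$ (which lives "between" layers $k$ and $k+1$ and involves both $u$ and $V$ bilinearly) contributes to \emph{both} the $u_{k+1}$-block and the $V_{k+1}$-block of the stationarity system, and confirming that the $V_{k+1}$-block genuinely decouples so that (\ref{eq:npcq}) can be invoked to kill $z_k$ cleanly before dealing with the inequality multipliers. I would handle this by mimicking the adjoint-operator bookkeeping already carried out in the derivation of (\ref{eq:sdd}): differentiating $\inner{F}{UU^T}$, $\G_k(UU^T)$, $\H_k(UU^T)$ in $U$ just multiplies the corresponding adjoint-operator matrix by $U$ on the right, so the $V$-columns of the stationarity equation are precisely $(S(y,z)-\mu I)V$ restricted to the appropriate rows, and $(S(y,z)-\mu I)u$ restricted to the appropriate rows gives the $u$-columns; reading off the slack-matrix blocks $S_{k,k+1}=-W_k^TZ_k$, $S_{k+1,k+1}=2Z_k$ etc. from the earlier section makes the coefficients of $z_k$ visible, and the row involving $\e_i^TW_kV_k$ versus $\e_i^T(W_ku_k+b_ku_0)$ falls out directly. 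The remaining work after that is the routine componentwise case analysis described above, which I would not spell out in full.
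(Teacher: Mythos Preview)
Your overall architecture---handle $\mu$ first, then run a backward layer-by-layer induction, and finish with $y_0,z_0$---is exactly what the paper does, via its single-neuron $\to$ single-layer $\to$ multiple-layer lemmas. However, the key step where you claim that the $V_{k+1}$-block alone kills $z_k$ has a concrete error.

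The $i$-th component of $h_k$ is $\overline{\alpha}_i(\overline{\alpha}_i-\alpha_i-\gamma_i u_0)+\overline{\beta}_i^{T}(\overline{\beta}_i-\beta_i)$ with $\overline{\beta}_i=\e_i^{T}V_{k+1}$ and $\beta_i=\e_i^{T}W_kV_k$. Differentiating in $\overline{\beta}_i$ gives the coefficient $2\overline{\beta}_i-\beta_i$ on $z_k[i]$, \emph{not} something proportional to $\beta_i$. The condition (\ref{eq:npcq}) only tells you $\beta_i\ne 0$; it does \emph{not} rule out $2\overline{\beta}_i=\beta_i$. And this can actually happen at a feasible point: take $\overline{\beta}_i=\tfrac12\beta_i$, then $h_k[i]=0$ forces $\overline{\alpha}_i(\overline{\alpha}_i-\alpha_i-\gamma_i u_0)=\tfrac14\|\beta_i\|^2>0$, which is perfectly compatible with $g_{k,1}[i]>0$ and $g_{k,2}[i]>0$. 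In that situation the $V_{k+1}$-row places no constraint on $z_k[i]$, and your order ``first $z_k$, then $y_k$'' breaks down.

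The paper closes this gap with two ingredients you are missing. First, completing the square in the feasibility equation $h_k[i]=0$ yields the identity
\[
\bigl\|\bigl(2\overline{\alpha}_i-\alpha_i-\gamma_i u_0,\;2\overline{\beta}_i-\beta_i\bigr)\bigr\|=\bigl\|\bigl(\alpha_i+\gamma_i u_0,\;\beta_i\bigr)\bigr\|,
\]
so (\ref{eq:npcq}) guarantees the \emph{joint} vector on the left is nonzero, though neither component need be. Second, the paper does a per-neuron case split on which of $g_{k,1}[i],g_{k,2}[i]$ is active (they cannot both be, since their difference is $u_0(\alpha_i+\gamma_i u_0)\ne 0$). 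When one is active, complementarity zeros the other $y$, the identity above reduces to $\|2\overline{\beta}_i-\beta_i\|=\|\beta_i\|>0$, and the $V$-row kills $z_k[i]$; when neither is active, complementarity zeros both $y$'s first, and then the \emph{combined} $(u_{k+1},V_{k+1})$-row, whose coefficient is the full nonzero vector above, kills $z_k[i]$. Your sketch needs both the completing-the-square identity and this interleaved case analysis; the fixed order you propose does not go through.
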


We defer the proof of \lemref{licq} to the Appendix~\ref{app:LICQ}. \lemref{licq} implies that for every local minimum $x^{\star}$, there exists a unique set of dual variables $(y^{\star},z^{\star},\mu^{\star})$ such that $(x^{\star},y^{\star},z^{\star}\mu^{\star})$ is guaranteed to satisfy (\ref{eq:foc}) and (\ref{eq:soc}); therefore, the nonlinear programming solvers are guaranteed to work.

\begin{corollary}[Dual lower bound of (\ref{eq:bm})]
	Let $x^\star=(u_{0},\{u_k\}_{k=1}^{\ell-1},\{\vec(V_k)\}_{k=1}^{\ell-1})$ denote a local minimum for the Burer--Monteiro problem (\ref{eq:bm}) that satisfy nonzero activation (\ref{eq:npcq}). Then, there exists an unique dual multipliers $y=(y_{0},\{y_{k,1},y_{k,2}\}_{k=1}^{\ell-1})$
and $z=(z_{0},\{z_{k}\}_{k=1}^{\ell-1})$ that satisfy $y\ge0$ provide
the following lower-bound
\[
\phi[c]\ge\phi_{r}[c]\ge z_{0}+R^{2}\cdot\min\{0,\lambda_{\min}[S(y,z)]\}.
\]
\end{corollary}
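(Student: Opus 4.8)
The plan is to read this off from \lemref{licq}, the standard first-order optimality theory for nonlinear programs, and \propref{dual}; the only substantive ingredient is \lemref{licq}, and the remaining steps are bookkeeping.

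First I would note that $x^{\star}$ satisfies the nonzero-preactivation condition (\ref{eq:npcq}) by hypothesis, so \lemref{licq} gives that $x^{\star}$ satisfies (\ref{eq:licq}). Applying the first-order necessary conditions for constrained minimization \citep[Theorem 12.1]{nocedal2006numerical} to the standard-form program (\ref{eq:nlp}) built from (\ref{eq:bm}): since $x^{\star}$ is a local minimum at which (\ref{eq:licq}) holds, there exist Lagrange multipliers $(y,z,\mu)$ satisfying the KKT system (\ref{eq:foc}), and in particular $y\ge0$. Uniqueness of $(y,z,\mu)$, hence of $(y,z)$, follows from (\ref{eq:licq}): the difference of any two KKT multiplier vectors solves the homogeneous linear system on the left-hand side of (\ref{eq:licq})---the complementarity relations in (\ref{eq:foc}) confine its support to the active constraints---and (\ref{eq:licq}) forces that system to have only the trivial solution.

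With such $(y,z)$ in hand I would invoke \propref{dual}, whose conclusion holds for \emph{any} multipliers with $y\ge0$: the feasible matrix $X^{\star}$ of (\ref{eq:sdp}) enters only through $\phi_{r}[c]=\inner F{X^{\star}}$ and the standing hypothesis $\tr(X^{\star})<R^{2}$, both of which are in force. Therefore $y\ge0$ gives
\[
\phi[c]\ge\phi_{r}[c]\ge z_{0}+R^{2}\cdot\min\{0,\lambda_{\min}[S(y,z)]\},
\]
which is the claim; the first inequality is the monotonicity $\phi[c]=\phi_{1}[c]\ge\phi_{r}[c]$ recorded just after (\ref{eq:sdp}).

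The main difficulty is already discharged by \lemref{licq}: without a constraint qualification a local minimum of (\ref{eq:bm}) need not admit \emph{any} KKT multipliers---the footnote's $f(x)=x$, $g(x)=x^{2}$, $h(x)=0$ is precisely such a case---and, even when they exist, they need not be unique, so the uniqueness assertion would be groundless. The one bookkeeping item worth checking is that the multiplier blocks $(y,z)$ produced by the KKT system of (\ref{eq:nlp}) are indexed compatibly with the slack matrix $S(y,z)$ of \propref{dual}; this was arranged in the derivation of (\ref{eq:langrange_BM}), where the Lagrangian of (\ref{eq:nlp}) was shown to equal $z_{0}+\mu R^{2}+\inner{S(y,z)-\mu I}{U(x)U(x)^{T}}$, so the very multipliers that certify first-order optimality of $x^{\star}$ are exactly those entering \propref{dual}.
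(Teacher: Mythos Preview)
Your proposal is correct and follows essentially the same route as the paper's proof: invoke \lemref{licq} to get (\ref{eq:licq}) from (\ref{eq:npcq}), use standard first-order necessary conditions at the local minimum to obtain KKT multipliers with $y\ge0$ (uniqueness coming from (\ref{eq:licq})), and then plug these multipliers into \propref{dual}. Your write-up is in fact more careful than the paper's---you make explicit the appeal to \citep[Theorem~12.1]{nocedal2006numerical}, the uniqueness argument, and the compatibility of the multiplier indexing with $S(y,z)$---but the argument is the same.
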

\begin{proof}
If $x^\star$ is a local minimum for (\ref{eq:bm}) that satisfy nonzero activation (\ref{eq:npcq}). Then, it follows from \lemref{licq} that the point $x$ is first-order optimal, and the corresponding KKT equations (\ref{eq:foc}) yields a unique set of dual multipliers $(y,z,\mu)$ that certify the above lower-bound on the global minimum.
\end{proof}

\subsection{Proof for \thmref{escape}: Escape lifted saddle point.}
Now, let us explain how LICQ leads to our desired results in \thmref{escape}. We start by proving two technique lemmas regarding the first- and second-optimality of (\ref{eq:bm}).

\begin{lemma}[First-order optimality]\label{lem:first_order_opt}
Let $x=(u_{0},\{u_k\}_{k=1}^{\ell-1},\{\vec(V_k)\}_{k=1}^{\ell-1})$ and $y,z,\mu$ satisfy $\nabla_{x}\L(x,y,z,\mu)=0$.
Then, the slack matrix $S(y,z)=F+\G^{T}(y)+\H^{T}(z)$ satisfies the
following:
\begin{itemize}
\item $S(y,z)U(x)=0$.
\item $S(y,z)-\mu I=\begin{bmatrix}u^{T}/u_{0}\\
-I
\end{bmatrix}(S_{2}-\mu I)\begin{bmatrix}u^{T}/u_{0}\\
-I
\end{bmatrix}^{T}$ for some matrix $S_{2}$. 
\end{itemize}
\end{lemma}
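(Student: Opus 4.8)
The plan is to write the Lagrangian as a single quadratic form in the factor $U(x)$, read off the content of first-order stationarity given the rigid sparsity pattern of $U(x)$, and then upgrade that partial information to the two claimed identities using only symmetry of $S(y,z)$.

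First I would record the Lagrangian in the form \eqref{eq:langrange_BM}, namely $\L(x,y,z,\mu)=\inner{M}{U(x)U(x)^{T}}+z_{0}+\mu R^{2}$ with $M\equiv S(y,z)-\mu I$ symmetric and independent of $x$; this uses $\|x\|^{2}=\tr(U(x)U(x)^{T})$ together with the adjoint identities $y^{T}g(x)=\inner{\G^{T}(y)}{U(x)U(x)^{T}}$ and $z^{T}h(x)=\inner{\H^{T}(z)}{U(x)U(x)^{T}}+z_{0}$. Since $\inner{M}{UU^{T}}=\tr(U^{T}MU)$ and $M=M^{T}$, the partial derivative of $\L$ in the $(a,b)$ entry of the factor is $2(MU(x))_{ab}$. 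The coordinates of $x=(u_{0},u,\vec(V))$ populate \emph{every} entry of $U(x)$ except the structural zeros in its first row (positions $(1,2),\dots,(1,r)$), so $\nabla_{x}\L=0$ forces $MU(x)$ to vanish outside its first row: $MU(x)=\e_{1}w^{T}$ for some $w$ whose first entry is $0$ (stationarity in $u_{0}$ controls the $(1,1)$ entry).

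Next I would upgrade this to $MU(x)=0$. The first row of $U(x)$ equals $u_{0}\e_{1}^{T}$, hence $U(x)^{T}\e_{1}=u_{0}\e_{1}$, so $U(x)^{T}MU(x)=U(x)^{T}(\e_{1}w^{T})=u_{0}\e_{1}w^{T}$. Because $M$ is symmetric, $U(x)^{T}MU(x)$ is symmetric, which forces $\e_{1}w^{T}=w\e_{1}^{T}$ (here $u_{0}\ne0$, since the constraint $u_{0}^{2}=1$ is active at any feasible point); comparing entries and using $w_{1}=0$ gives $w=0$, so $(S(y,z)-\mu I)U(x)=0$. Under the standing assumption that the trace bound is slack, $\mu=0$, and this is exactly $S(y,z)U(x)=0$.

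For the second identity, $MU(x)=0$ puts the first column $(u_{0},u)$ of $U(x)$ into $\ker M$, so symmetry of $M$ gives $\operatorname{range}(M)\subseteq(u_{0},u)^{\perp}$. I would then observe that $N\equiv\begin{bmatrix}u^{T}/u_{0}\\ -I\end{bmatrix}$ has full column rank $n$ and is annihilated by $(u_{0},u)^{T}$, so $\operatorname{range}(N)=(u_{0},u)^{\perp}$; consequently $M=P_{N}MP_{N}$ with $P_{N}=N(N^{T}N)^{-1}N^{T}$ the orthogonal projector onto $\operatorname{range}(N)$, and expanding $P_{N}$ gives $M=N\tilde{S}N^{T}$ with $\tilde{S}\equiv(N^{T}N)^{-1}(N^{T}MN)(N^{T}N)^{-1}$ symmetric. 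Taking $S_{2}\equiv\tilde{S}+\mu I$ yields $S(y,z)-\mu I=N(S_{2}-\mu I)N^{T}$. I expect the main obstacle to be the third paragraph: because the factorization $X=U(x)U(x)^{T}$ is restricted to \emph{lower-triangular} $U(x)$, stationarity does not directly give $2MU(x)=0$, only that $MU(x)$ is supported in its first row, and recovering the full equality needs the symmetry trick; the remaining steps are routine linear algebra.
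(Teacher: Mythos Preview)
Your argument is correct but follows a different route from the paper's. The paper proceeds by direct block computation: it partitions $S(y,z)=\begin{bmatrix}s_{0}&s_{1}^{T}\\ s_{1}&S_{2}\end{bmatrix}$, writes out the three Jacobians $\partial\L/\partial u_{0}$, $\partial\L/\partial u$, $\partial\L/\partial V$ explicitly, and then verifies $(S(y,z)-\mu I)U(x)=0$ entry by entry; the only nontrivial block is $s_{1}^{T}V$, which vanishes because $\partial\L/\partial u=0$ gives $s_{1}=-(S_{2}-\mu I)u/u_{0}$ and $\partial\L/\partial V=0$ gives $(S_{2}-\mu I)V=0$. For the second item the paper simply substitutes these same expressions for $s_{0}$ and $s_{1}$ back into $S(y,z)-\mu I$ and reads off the factorization, thereby identifying $S_{2}$ concretely as the lower-right $n\times n$ block of $S(y,z)$. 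Your approach replaces both computations with structural arguments: the symmetry trick on $U^{T}MU$ recovers the ``missing'' first-row entries of $MU$ without ever writing out blocks, and the projection identity $M=P_{N}MP_{N}$ gives the factorization abstractly. What the paper's approach buys is that $S_{2}$ is named explicitly (and this specific $S_{2}$ reappears in the next lemma); your $\tilde S+\mu I$ in fact coincides with it, since $N$ has full column rank and so the representation $M=N A N^{T}$ is unique, but you do not make this identification. What your approach buys is that it works uniformly for any lower-triangular factor shape and never needs the block structure of $S$; the symmetry step in particular is a clean way around the obstacle you flagged.
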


\begin{proof}
Let $S(y,z)=\begin{bmatrix}s_0 & s_1^T \\ s_1 & S_2\end{bmatrix}$, the Lagrangian (\ref{eq:langrange_BM}) can be written as the following 
\begin{align*}
\L(x,y,z,\mu) & = z_0+\mu R^2+\inner{\begin{bmatrix}u_{0} & 0\\
u & V
\end{bmatrix}\begin{bmatrix}u_{0} & 0\\
u & V
\end{bmatrix}^{T}}{\begin{bmatrix}s_{0}-\mu & s_{1}^{T}\\
s_{1} & S_{2}-\mu I
\end{bmatrix}}\\
 & =z_0+\mu (R^2-u_0^2)+u_{0}^{2}s_{0}+2u_{0}s_{1}^{T}u+\inner{uu^{T}+VV^{T}}{S_{2}-\mu I}.
\end{align*}
The condition $\nabla_{x}\L(x,y,z,\mu)=0$ is equivalent to setting the
following three Jacobians to zero:
\[
\frac{\partial\L}{\partial u_{0}}=2(u_{0}(s_{0}-\mu)+s_{1}^{T}u),\quad\frac{\partial\L}{\partial u}=2(u_{0}s_{1}^{T}+u^{T}(S_{2}-\mu I)),\quad\frac{\partial\L}{\partial V}=2V^{T}(S_{2}-\mu I).
\]
It follows by substituting the above into the equation below that
\[
S(y,z)U(x)=\begin{bmatrix}s_{0}-\mu & s_{1}^{T}\\
s_{1} & S_{2}-\mu I
\end{bmatrix}\begin{bmatrix}u_{0} & 0\\
u & V
\end{bmatrix}=\begin{bmatrix}u_{0}(s_{0}-\mu)+s_{1}^{T}u & s_{1}^{T}V\\
s_{1}u_{0}+(S_{2}-\mu I)u & (S_{2}-\mu I)V
\end{bmatrix}=0,
\]
where $s_{1}^{T}V=0$ because $s_{1}^{T}=-u^{T}(S_{2}-\mu I)/u_{0}$ and $(S_{2}-\mu I)V=0$.

Similarly, substituting $s_{0}=-s_{1}^{T}u/u_{0}+\mu$ and $s_{1}=-(S_{2}-\mu I)u/u_{0}$
yields
\[
S(x,y)=\begin{bmatrix}s_{0} & s_{1}^{T}\\
s_{1} & S_{2}
\end{bmatrix}=\begin{bmatrix}u^{T}(S_{2}-\mu I)u/u_{0}^{2}+\mu & -u^{T}(S_{2}-\mu I)/u_{0}\\
-(S_{2}-\mu I)u/u_{0} & S_{2}
\end{bmatrix}=\begin{bmatrix}u^{T}/u_{0}\\
-I
\end{bmatrix}(S_{2}-\mu I)\begin{bmatrix}u^{T}/u_{0}\\
-I
\end{bmatrix}^{T}+\mu I.
\]
\end{proof}
\begin{lemma}[Rank-deficient second-order optimality]
Given $y,z$, let $x=(u_{0},\{u_k\}_{k=1}^{\ell-1},\{\vec(V_k)\}_{k=1}^{\ell-1})$ satisfy $\nabla_{x}\L(x,y,z,\mu)=0$.
If there exists unit vectors $\psi\in\R^{r},\|\psi\|=1$ and $(\xi_{0},\xi_{1})\in\R\times\R^{n},\|(\xi_{0},\xi_{1})\|=1$
such that
\[
V\psi=0,\qquad 2\begin{bmatrix}\xi_{0}\\
\xi_{1}
\end{bmatrix}^{T}(S(y,z)-\mu I)\begin{bmatrix}\xi_{0}\\
\xi_{1}
\end{bmatrix}=-\gamma<0
\]
where $S(y,z)=F+\G^{T}(y)+\H^{T}(z),$ then the vector $\dot{x}=(0,0_{n},\{\vec(\dot{V_k})\}_{k=1}^{\ell-1})$
with $\dot{V_k}=[(u_k/u_{0})\xi_{0}-\xi_{1}]\psi^{T}$ satisfies 
\[
\nabla g(x)^{T}\dot{x}=0,\qquad\nabla h(x)^{T}\dot{x}=0, \qquad \dot{x}^{T}\nabla_{xx}^{2}\L(x,y,z)\dot{x}=-\gamma.
\]
\end{lemma}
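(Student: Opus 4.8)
The plan is to reduce the statement to two short computations by exploiting the matricized form of the Lagrangian already exposed in the preceding (first-order-optimality) lemma. With $U(x)=\begin{bmatrix}u_{0}&0\\u&V\end{bmatrix}$ and the dual variables $y,z$ (together with the accompanying $\mu$) held at the given first-order point, the Lagrangian is a \emph{fixed affine functional of} $X=U(x)U(x)^{T}$,
\[
\L(x,y,z,\mu)=z_{0}+\mu R^{2}+\inner{S(y,z)-\mu I}{U(x)U(x)^{T}}.
\]
Since $x\mapsto U(x)$ is linear, along any direction $\dot x$ with $\dot U:=U(\dot x)$ such a functional has first directional derivative $2\inner{\cdot}{U(x)\dot U^{T}}$ and second directional derivative $2\inner{\cdot}{\dot U\dot U^{T}}$. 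For the proposed $\dot x=(0,0,\{\vec(\dot V_{k})\})$ with $\dot V_{k}=[(u_{k}/u_{0})\xi_{0}-\xi_{1,k}]\psi^{T}$ (reading $\xi_{1,k}$ for the $k$-th block of $\xi_{1}$), we have $\dot U=\begin{bmatrix}0&0\\0&w\psi^{T}\end{bmatrix}$, where $w=(w_{1},\dots,w_{\ell})$ with $w_{k}=(u_{k}/u_{0})\xi_{0}-\xi_{1,k}$; everything then follows from this.

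First I would handle the constraint gradients. Each scalar component of $g$ and $h$ has the form $c(X)=\inner{A_{c}}{X}+b_{c}$ (this is precisely how $\G_{k},\H_{k}$ were defined), so composing with $X=U(x)U(x)^{T}$ gives $\nabla c(x)^{T}\dot x=2\inner{A_{c}}{U(x)\dot U^{T}}$. But
\[
U(x)\dot U^{T}=\begin{bmatrix}u_{0}&0\\u&V\end{bmatrix}\begin{bmatrix}0&0\\0&\psi w^{T}\end{bmatrix}=\begin{bmatrix}0&0\\0&(V\psi)w^{T}\end{bmatrix}=0
\]
since $V\psi=0$ by hypothesis. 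Hence $\nabla g(x)^{T}\dot x=0$ and $\nabla h(x)^{T}\dot x=0$; likewise $2x^{T}\dot x=2\inner{V}{\dot V}=2\inner{V\psi}{w}=0$, so in fact $\dot x\in\C(x,y)$.

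Next I would compute the curvature. Writing $\xi=(\xi_{0},\xi_{1})$ and using the chain-rule remark above on the fixed functional with kernel $S(y,z)-\mu I$,
\[
\dot x^{T}\nabla_{xx}^{2}\L\,\dot x=2\inner{S(y,z)-\mu I}{\dot U\dot U^{T}}=2\inner{S_{2}-\mu I}{ww^{T}}=2\,w^{T}(S_{2}-\mu I)w,
\]
where $\dot U\dot U^{T}=\begin{bmatrix}0&0\\0&ww^{T}\end{bmatrix}$ because $\|\psi\|=1$, so only the lower-right block $S_{2}$ of $S(y,z)$ enters. To finish, I would invoke the congruence identity from the first-order-optimality lemma, $S(y,z)-\mu I=P(S_{2}-\mu I)P^{T}$ with $P=\begin{bmatrix}u^{T}/u_{0}\\-I_{n}\end{bmatrix}$, together with $w=(u/u_{0})\xi_{0}-\xi_{1}=P^{T}\xi$, to obtain $w^{T}(S_{2}-\mu I)w=\xi^{T}P(S_{2}-\mu I)P^{T}\xi=\xi^{T}(S(y,z)-\mu I)\xi$. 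Combining with the hypothesis $2\,\xi^{T}(S(y,z)-\mu I)\xi=-\gamma$ yields $\dot x^{T}\nabla_{xx}^{2}\L\,\dot x=-\gamma$, as claimed.

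I do not expect a genuine obstacle here: once the structural facts of the first-order-optimality lemma are available (that $\L$ is affine in $X=UU^{T}$, that $S(y,z)U(x)=0$, and the congruence $S(y,z)-\mu I=P(S_{2}-\mu I)P^{T}$), the remainder is bookkeeping. The only points needing care are (i) keeping the layer-wise partitions of $u$, $\xi_{1}$ and $w$ consistent (the statement's ``$\xi_{1}$'' in $\dot V_{k}$ must be read as its $k$-th block $\xi_{1,k}$), and (ii) applying the congruence with the correct transposes so that $w=P^{T}\xi$ lines up with $S(y,z)-\mu I=P(S_{2}-\mu I)P^{T}$ — the choice $\dot V=w\psi^{T}$ with $w=P^{T}\xi$ and $V\psi=0$ is exactly what makes $\dot x$ simultaneously tangent to every constraint and a carrier of the negative curvature of $S(y,z)-\mu I$.
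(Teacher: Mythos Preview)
Your proposal is correct and follows essentially the same route as the paper: show $U(x)\dot U^{T}=0$ via $V\psi=0$ to kill all constraint gradients, then compute $\dot x^{T}\nabla_{xx}^{2}\L\,\dot x=2\inner{S(y,z)-\mu I}{\dot U\dot U^{T}}$ and use the congruence $S(y,z)-\mu I=P(S_{2}-\mu I)P^{T}$ together with $w=P^{T}\xi$ to identify it with $2\xi^{T}(S(y,z)-\mu I)\xi=-\gamma$. The paper writes the same chain with $\xi_{2}$ in place of your $w$ and expands the congruence step via $(0,\xi_{2})$ rather than naming $P$ explicitly, but the argument is identical.
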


\begin{proof}
Note that in general, a function of the form $f(x)=\inner F{UU^T}$
with $U(x)=\begin{bmatrix}u_{0} & 0\\
u & V
\end{bmatrix}$ has directional derivatives 
\[
\nabla f(x)^{T}\dot{x}=\inner F{U(x){U(\dot{x})}^{T}+U(\dot{x})U(x)^{T}},\quad\dot{x}^{T}\nabla^{2}f(x)\dot{x}=2\inner F{U(\dot{x})U(\dot{x})^{T}}.
\]
For our specific choice of $\dot{x}$, we can verify that
\[
U(x)U(\dot{x})^{T}=\begin{bmatrix}u_{0} & 0\\
u & V
\end{bmatrix}\begin{bmatrix}0 & 0\\
0 & [(u/u_{0})\xi_{0}-\xi_{1}]\psi^{T}
\end{bmatrix}^{T}=\begin{bmatrix}0 & 0\\
0 & V\psi[(u/u_{0})\xi_{0}-\xi_{1}]^{T}
\end{bmatrix}=0.
\]
Since $g_{i}(x)=\inner{G_{i}}{UU^{T}}$ and $h_{j}(x)=h_{0,j}+\inner{H_{j}}{UU^{T}}$ for some matrix $G_{i}, H_{j}$ and scalar $h_{0,j}$,  it then follows
that $\nabla g_{i}(x)^{T}\dot{x}=0$ and $\nabla h_{j}(x)^{T}\dot{x}=0$
for all $i$ and $j$. 

Next, given that the Lagrangian (\ref{eq:langrange_BM}) is written
\[
\L(x,y,z,\mu)=\inner{h_{0}}z+\mu R^2 + \inner{S(x,y)-\mu I}{U(x)U(x)^{T}},
\]
For our specific choice of $\dot{x}$, the second-order directional derivative reads 
\[
\dot{x}^{T}\nabla_{xx}^{2}\L(x,y,z,\mu)\dot{x}=2\inner {S(x,y)-\mu I}{U(\dot{x})U(\dot{x})^{T}}=2\begin{bmatrix}0\\
\xi_{2}
\end{bmatrix}^{T}(S(x,y)-\mu I)\begin{bmatrix}0\\
\xi_{2}
\end{bmatrix}.
\]
where $\xi_{2}=(u/u_{0})\xi_{0}-\xi_{1}$. It follows from \lemref{first_order_opt} that for $\nabla_{x}\L(x,y,z,\mu)=0$, the slack matrix satisfies\\ $S(x,y)-\mu I=\begin{bmatrix}u^{T}/u_{0}\\
-I
\end{bmatrix}(S_{2}-\mu I)\begin{bmatrix}u^{T}/u_{0}\\
-I
\end{bmatrix}^{T}$ and therefore
\begin{align*}
2\begin{bmatrix}0\\\xi_{2}\end{bmatrix}^{T}
(S(x,y)-\mu I)
\begin{bmatrix}0\\\xi_{2}\end{bmatrix}
& =2\begin{bmatrix}0\\\xi_{2}\end{bmatrix}^{T}
\begin{bmatrix}u^{T}/u_{0}\\-I\end{bmatrix}
(S_{2}-\mu I)
\begin{bmatrix}u^{T}/u_{0}\\-I\end{bmatrix}^{T}
\begin{bmatrix}0\\\xi_{2}\end{bmatrix}\\
& =2\begin{bmatrix}\xi_{0}\\\xi_{1}\end{bmatrix}^{T}
\begin{bmatrix}u^{T}/u_{0}\\-I\end{bmatrix}
(S_{2}-\mu I)
\begin{bmatrix}u^{T}/u_{0}\\-I\end{bmatrix}^{T}
\begin{bmatrix}\xi_{0}\\\xi_{1}\end{bmatrix}\\
&=2\begin{bmatrix}\xi_{0}\\\xi_{1}\end{bmatrix}^{T}
(S(x,y)-\mu I)
\begin{bmatrix}\xi_{0}\\\xi_{1}\end{bmatrix}\\
&=-\gamma.
\end{align*}
\end{proof}
\begin{lemma}[Critical cone]\label{lem:licq-2}
Let $\Omega$ denote a set of feasible points of (\ref{eq:bm}) where (\ref{eq:licq}) hold. If $x\in\Omega$, and $(y,z,\mu)$ satisfy (\ref{eq:foc}), then there
exists a continuously differentiable path $x(t)$ with initial position
$x(0)=x$ that satisfies
\[
f(x(t))=\L(x(t),y,z),\quad x(t)\in\Omega\quad\text{for all }t\in[0,\epsilon)
\]
if and only if $\dot{x}(0)\in\C(x,y)$. Moreover, if $g(x)$ and $h(x)$
are $k$-times continuously differentiable, then $x(t)$ is also $k$-times
continuously differentiable.
\end{lemma}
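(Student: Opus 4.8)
The crux is to see what the identity $f(x(t))=\L(x(t),y,z)$ demands of a feasible path. Writing $\L(x',y,z)=f(x')+y^{T}g(x')+z^{T}h(x')-\mu(\|x'\|^{2}-R^{2})$ and using feasibility $g(x(t))\le 0$, $h(x(t))=0$, $\|x(t)\|\le R$, the identity holds iff $y_{i}g_{i}(x(t))=0$ for every $i$ and $\mu(\|x(t)\|^{2}-R^{2})=0$; since $y_{i}>0$ forces $g_{i}(x(t))=0$ while $y_{i}=0$ kills the corresponding term, this is equivalent to saying that the path keeps the \emph{strongly active} constraints $\AA^{+}=\{i:y_{i}>0\}$ frozen at zero, keeps every equality $h_{j}$ at zero, keeps $\|x(t)\|=R$ whenever $\mu<0$, and merely lets the \emph{weakly active} constraints $\AA^{0}=\{i:g_{i}(x)=0,\ y_{i}=0\}$ (and, when $\mu=0$ but $\|x\|^{2}=R^{2}$, the trust region) relax into the interior. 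Collecting the frozen functions into one map $c$, namely $c(x')=(h(x'),\,(g_{i}(x'))_{i\in\AA^{+}},\,\|x'\|^{2}-R^{2}\text{ when }\mu<0)$, the lemma reduces to: for a prescribed direction $\dot x$, there is a one-sided $C^{1}$ (resp.\ $C^{k}$) arc on $\{c=0\}$ that stays feasible and in $\Omega$ and has velocity $\dot x$ at $t=0$, if and only if $\dot x\in\C(x,y)$.

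\emph{Necessity} is just differentiation: along such an arc the frozen equalities give $\nabla h_{j}(x)^{T}\dot x(0)=0$, $\nabla g_{i}(x)^{T}\dot x(0)=0$ for $i\in\AA^{+}$, and $2x^{T}\dot x(0)=0$ when $\mu<0$, while one-sided differentiation of the active inequalities $g_{i}(x(t))\le 0$ ($i\in\AA^{0}$) and, if applicable, $\|x(t)\|^{2}\le R^{2}$ supplies the remaining sign conditions, so $\dot x(0)\in\C(x,y)$.

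\emph{Sufficiency} is the substantive part. Since $x\in\Omega$, (\ref{eq:licq}) says exactly that the active-constraint gradients $\{\nabla h_{j}(x)\}\cup\{\nabla g_{i}(x):g_{i}(x)=0\}\cup\{2x:\|x\|^{2}=R^{2}\}$ are linearly independent; in particular $Dc(x)$ has full row rank, so a standard implicit-function-theorem construction (solve $c(x+Bs+Dc(x)^{T}\lambda)=0$ for $\lambda=\lambda(s)$, where $B$ is a basis of $\ker Dc(x)$) produces a $C^{k}$ local parametrization $\Phi$ of $\{c=0\}$ near $x$ with $\Phi(0)=x$ and $D\Phi(0)=B$, inheriting the smoothness of $g,h$. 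Because the defining relations of $\C(x,y)$ include $Dc(x)\dot x=0$, we may write $\dot x=B\hat s_{0}$. The one remaining obstacle, and the heart of the argument, is a weakly active constraint $g_{i}$ ($i\in\AA^{0}$) that is also tangent to $\dot x$, i.e.\ $\nabla g_{i}(x)^{T}\dot x=0$, since a purely linear path could then violate $g_{i}\le 0$ at second order; here I would invoke linear independence once more, noting that the functionals $v\mapsto\nabla g_{i}(x)^{T}v$ over the tangent-weakly-active index set, restricted to $\ker Dc(x)$, are themselves linearly independent (any dependence would express a combination of the $\nabla g_{i}(x)$ in $\mathrm{range}(Dc(x)^{T})$, contradicting the LICQ independence), hence that map is surjective and there exists $w=B\hat w_{0}\in\ker Dc(x)$ with $\nabla g_{i}(x)^{T}w=-1$ for each such $i$ (and $2x^{T}w=-1$ in the borderline $\mu=0$, $\|x\|^{2}=R^{2}$ case). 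Setting $x(t)=\Phi(t\hat s_{0}+t^{2}\hat w_{0})$ then gives a $C^{k}$ arc on $\{c=0\}$ with $\dot x(0)=\dot x$, hence $f(x(t))=\L(x(t),y,z)$; a short expansion shows $g_{i}(x(t))=t\,\nabla g_{i}(x)^{T}\dot x+O(t^{2})<0$ when $\nabla g_{i}(x)^{T}\dot x<0$ and $g_{i}(x(t))=-t^{2}+O(t^{3})<0$ in the tangent case for $t\in(0,\epsilon)$, with inactive constraints staying negative by continuity, so $x(t)$ is feasible; and since its active set is contained in that of $x$ while linear independence of a fixed finite gradient set is an open condition, (\ref{eq:licq}) persists along the arc, giving $x(t)\in\Omega$. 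The $C^{k}$ claim is immediate because $\Phi$ is $C^{k}$ and $t\mapsto t\hat s_{0}+t^{2}\hat w_{0}$ is polynomial. The main difficulty, to reiterate, is precisely the tangent-weakly-active case together with the fact that only a one-sided arc is available (and needed); everything else is bookkeeping around the implicit function theorem and LICQ.
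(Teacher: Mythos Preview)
The paper states this lemma without proof, treating it as a standard nonlinear-programming fact (it is essentially the feasible-arc lemma behind the proofs of \cite[Theorems~12.1 and~12.5]{nocedal2006numerical}). Your proposal is the textbook construction and is correct: reduce the Lagrangian identity to ``strongly active constraints stay frozen,'' then under LICQ parametrize the equality manifold $\{c=0\}$ by the implicit function theorem, and finally add a quadratic correction $t^{2}\hat w_{0}$ to push the tangent weakly-active inequalities strictly into the interior. Your surjectivity argument for the existence of $w$ is the right use of LICQ (a nontrivial dependence among the restricted functionals would place a combination of the weakly-active gradients in $\mathrm{range}(Dc(x)^{T})$, contradicting linear independence of the full active set), and the persistence of LICQ along the arc by openness gives $x(t)\in\Omega$. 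One cosmetic remark: the paper's critical-cone display uses $\ge 0$ where the $g\le 0$ convention of (\ref{eq:nlp}) would give $\le 0$; this is an internal sign inconsistency in the paper, not an issue with your argument.
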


We are now ready to prove the escape result. 
\begin{proof}
Let $x$ be first-order optimal for (\ref{eq:bm})
with dual multipliers $(y,z,\mu)$. If $x$ satisfies nonzero activation,
and $\gamma=-\lambda_{\min}[S(y,z)-\mu I]>0$, then the eigenvector
$\xi=(\xi_{0},\xi_{1})$ that satisfies $\xi^{T}(S(y,z)-\mu I)\xi=-\gamma\|\xi\|^{2}$
implicitly defines an escape path 
\[
u_{k,+}(t)=u_{k}+O(t^{2}),\qquad V_{k,+}(t)=[V_{k},0]+t\cdot[0,(u_{k}/u_{0})\xi_{0}-\xi_{k}]+O(t^{2})
\]
so that $x_{+}(t)=(u_{0},u(t),V(t))$ is feasible with sufficiently
small $t\ge0$, and for which the objective makes a decrement as follows
\[
w_{\ell}^{T}u_{\ell,+}(t)=w_{\ell}^{T}u_{\ell}-2t^{2}\gamma+O(t^{3})\text{ for all }t\in[0,\epsilon).
\]
\end{proof}

\section{Proof of Constraint Qualification}\label{app:LICQ}
In this section, we provide the proof for \lemref{licq}. Specifically, we show that the (\ref{eq:licq}) holds for (\ref{eq:bm}) under the assumption of nonzero preactivation (\ref{eq:npcq}). Throughout this section, we assume $R$ is chosen large enough such that $\|x\|>R$ holds at optimality with a \emph{strict inequality}, which in turn implies the corresponding dual variable $\mu=0$.

We start by showing that (\ref{eq:licq}) holds for single neuron and single layer ReLU networks.
\begin{lemma}[Single neuron]\label{lem:one-neuron} 
Define $\alpha_0,\alpha,\overline{\alpha}\in\R$ and $\beta,\overline{\beta},\in\R^{r-1}$. Let $x=(\alpha_0,\alpha,\overline{\alpha},\beta,\overline{\beta})$ satisfy $g_{1}(x)\ge0$, $g_{2}(x)\ge0$ and $h_1(x)=0$, where 
\[
g_{1}(x)=\alpha_{0}\overline{\alpha},\quad 
g_{2}(x)=\alpha_{0}(\overline{\alpha}-\alpha-\gamma\alpha_0),\quad 
h_{1}(x)=\overline{\alpha}(\overline{\alpha}-\alpha-\gamma\alpha_0)+\inner{\overline{\beta}}{\overline{\beta}-\beta}.
\]
Suppose that $\alpha_{0}^2\neq 1$, $\alpha+\gamma\alpha_0\ne0$ and $\beta\ne0$.
Then, the following holds
\begin{subequations}
\begin{gather}
\nabla g_{1}(x)y_{1}+\nabla g_{2}(x)y_{2}+ \nabla h_1(x)z_1=0,\label{eq:one-neuron-a}\\ 
g_{1}(x)\odot y_{1}=g_{2}(x)\odot y_{2}=0,\label{eq:one-neuron-b}
\end{gather}
\end{subequations}
if and only if $y_{1}=y_{2}=z_{1}=0$.
\end{lemma}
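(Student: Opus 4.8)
The plan is to write out the single stationarity identity \eqref{eq:one-neuron-a} coordinate by coordinate, grouped according to the five blocks of variables $\alpha_0,\alpha,\overline{\alpha},\beta,\overline{\beta}$, and then to strip off the multipliers one block at a time. First I would record the three gradients. With $x=(\alpha_0,\alpha,\overline{\alpha},\beta,\overline{\beta})$, direct differentiation gives
\[
\nabla g_1=(\overline{\alpha},\ 0,\ \alpha_0,\ 0,\ 0),\qquad \nabla g_2=(\overline{\alpha}-\alpha-2\gamma\alpha_0,\ -\alpha_0,\ \alpha_0,\ 0,\ 0),
\]
\[
\nabla h_1=(-\gamma\overline{\alpha},\ -\overline{\alpha},\ 2\overline{\alpha}-\alpha-\gamma\alpha_0,\ -\overline{\beta},\ 2\overline{\beta}-\beta).
\]
The reverse implication of the equivalence is immediate, since $y_1=y_2=z_1=0$ trivially satisfies both \eqref{eq:one-neuron-a} and \eqref{eq:one-neuron-b}; all the content is in the forward implication.

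For the forward implication I would proceed in two stages. The first stage eliminates $z_1$: only $\nabla h_1$ has nonzero entries in the $\beta$- and $\overline{\beta}$-blocks, so the $\beta$-coordinates of \eqref{eq:one-neuron-a} read $z_1\overline{\beta}=0$ and the $\overline{\beta}$-coordinates read $z_1(2\overline{\beta}-\beta)=0$; subtracting twice the first from the second cancels $\overline{\beta}$ and leaves $z_1\beta=0$, and since $\beta\neq 0$ this forces $z_1=0$. The second stage eliminates $y_1$ and $y_2$: after substituting $z_1=0$, the $\alpha$-coordinate of \eqref{eq:one-neuron-a} collapses to $\alpha_0 y_2=0$ and the $\overline{\alpha}$-coordinate to $\alpha_0(y_1+y_2)=0$ (the $\alpha_0$-coordinate reduces to $\overline{\alpha}y_1+(\overline{\alpha}-\alpha-2\gamma\alpha_0)y_2=0$, which is then automatically consistent). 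The hypothesis on $\alpha_0$ --- in the application $\alpha_0=u_0$ satisfies $u_0^2=1$, so $\alpha_0\neq 0$ --- lets me read off $y_2=0$ from the first equation and then $y_1=0$ from the second. With $y_1=y_2=z_1=0$ in hand, the complementary-slackness relations \eqref{eq:one-neuron-b} hold vacuously, closing the argument.

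Once the gradients are in hand this is largely a bookkeeping computation, so I do not anticipate a genuine obstacle; if there is a delicate point, it is in the first stage, where one must notice that the $\overline{\beta}$-block of $\nabla h_1$ is $2\overline{\beta}-\beta$, not $-\beta$, so the clean conclusion $z_1\beta=0$ only appears after combining the $\beta$- and $\overline{\beta}$-equations rather than from either one alone. It is worth noting that, in the form I have sketched, the reduction uses only $\beta\neq 0$ and $\alpha_0\neq 0$, and invokes neither $\alpha+\gamma\alpha_0\neq 0$ nor the feasibility relations $g_1(x)\ge0$, $g_2(x)\ge0$, $h_1(x)=0$; the full nonzero-preactivation hypothesis \eqref{eq:npcq} is instead what is needed when this single-neuron statement is stitched together, over all neurons and all layers, inside the proof of \lemref{licq}, in order to propagate injectivity of the Jacobian through the network.
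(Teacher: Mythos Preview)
Your argument is correct and notably more direct than the paper's own proof. You observe that the $\beta$- and $\overline{\beta}$-blocks of \eqref{eq:one-neuron-a} involve only $z_1$, and combining them yields $z_1\beta=0$; once $z_1=0$, the $\alpha$-block gives $\alpha_0 y_2=0$ and the $\overline{\alpha}$-block gives $\alpha_0(y_1+y_2)=0$, finishing the proof using only $\beta\neq0$ and $\alpha_0\neq0$. You correctly note that your route never touches the complementary-slackness relations \eqref{eq:one-neuron-b}, the feasibility constraints, or the hypothesis $\alpha+\gamma\alpha_0\neq0$.

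The paper takes a genuinely different path: it assembles the stationarity and complementary-slackness rows into a single $7\times3$ matrix, then \emph{discards} precisely the $\alpha_0$-, $\alpha$-, and $\beta$-rows you rely on, keeping only the $\overline{\alpha}$-row, the $\overline{\beta}$-row, and the two complementary-slackness rows. That reduced $4\times3$ system no longer kills $z_1$ directly, so the paper must invoke $h_1(x)=0$ (to deduce $\|2\overline{\beta}-\beta\|=\|\beta\|$ when one of $g_1,g_2$ vanishes) and $\alpha+\gamma\alpha_0\neq0$ (to rule out $g_1(x)=g_2(x)=0$), followed by a three-way case split. The reason for this seemingly roundabout choice becomes clear only in \lemref{multiple-layers}: in the multi-layer block-tridiagonal system, the $u_k$- and $V_k$-rows (the analogues of your $\alpha$- and $\beta$-rows) couple layer $k$ with layer $k-1$ through the blocks $B_k$ and $D_k$, so they cannot be used to isolate $\lambda_k$. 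The paper's reduced system $\mathbf{M}_k=[C_k;E_k;F_k]$ uses only the output-side rows and the complementary-slackness rows, which decouple cleanly layer by layer, and it is exactly the $4\times3$ matrix analyzed here. So your proof is shorter for the single-neuron lemma as stated, but the paper's proof is the one that is reused verbatim in the inductive step of the multi-layer argument; your closing remark anticipates this, though the specific mechanism is the $B_k,D_k$ coupling rather than a need for the full (\ref{eq:npcq}) at the stitching stage.
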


\begin{proof}
Explicitly computing the gradient terms in (\ref{eq:one-neuron-a}), we can write (\ref{eq:one-neuron-a}) and (\ref{eq:one-neuron-b}) as the following
\begin{equation*}
\begin{bmatrix}
\overline{\alpha} & \overline{\alpha}-\alpha-2\gamma\alpha_0 & -\overline{\alpha}\gamma\\
0 &  -\alpha_0 & -\overline{\alpha} \\
\alpha_0 & \alpha_0  & 2\overline{\alpha} - \alpha -\gamma\alpha_0 \\
0 &  0  & -\overline{\beta} \\
0 &  0  & 2\overline{\beta}-\beta\\
\alpha_0\overline{\alpha} & 0 & 0 \\
0 & \alpha_{0}(\overline{\alpha}-\alpha-\gamma\alpha_0) & 0
\end{bmatrix}
\begin{bmatrix}
y_{1}\\y_{2}\\z_1
\end{bmatrix}=0
\end{equation*}
The goal is to show that the above matrix has full column rank. To simplify our proof, we delete the first,  the second and the fourth row as these rows are obviously dependent to the the rest of the rows. Deleting those three rows reveals the desired claim as equivalent to the following
\begin{equation}\label{eq:claimneuron}
\begin{bmatrix}
\alpha_0 & \alpha_0  & 2\overline{\alpha} - \alpha -\gamma\alpha_0 \\
0 &  0  & 2\overline{\beta}-\beta\\
\alpha_0\overline{\alpha} & 0 & 0 \\
0 & \alpha_{0}(\overline{\alpha}-\alpha-\gamma\alpha_0) & 0
\end{bmatrix}
\begin{bmatrix}
y_{1}\\y_{2}\\z_1
\end{bmatrix}=0
\qquad\iff\qquad
\begin{bmatrix}
y_{1}\\y_{2}\\z_1
\end{bmatrix}=0.
\end{equation}
Next, completing the square $h_1(x)=\overline{\alpha}(\overline{\alpha}-\alpha-\gamma\alpha_0)+\inner{\overline{\beta}}{\overline{\beta}-\beta}=\|(\overline{\alpha},\overline{\beta})-\tfrac{1}{2}(\alpha+\gamma\alpha_0,\beta)\|^{2}-\|\tfrac{1}{2}(\alpha+\gamma\alpha_0,\beta)\|^{2}$
reveals that 
\begin{equation}\label{eq:h1}
h_1(x)=0\quad\implies\quad\|(2\overline{\alpha}-\alpha-\gamma\alpha_0,2\overline{\beta}-\beta)\|=\|(\alpha+\gamma\alpha_0,\beta)\|
\end{equation}
If additionally $\alpha_0\overline{\alpha}=\max\{0,\alpha_0(\alpha+\gamma\alpha_0)\}$, i.e. when $g_1(x)=0$ or $g_2(x)=0$, or $g_1(x)=g_2(x)=0$, then substituting
$g_1(x)g_2(x)=\alpha_0^2\overline{\alpha}(\overline{\alpha}-\alpha-\gamma\alpha_0)=\overline{\alpha}(\overline{\alpha}-\alpha-\gamma\alpha_0)=0$ into (\ref{eq:h1}) further yields
\begin{equation}\label{eq:h2}
\alpha_0\overline{\alpha}=\max\{0,\alpha_0(\alpha+\gamma\alpha_0)\},\quad 
h_1(x)=0\quad\implies\quad
\|2\overline{\beta}-\beta\|=\|\beta\|.
\end{equation}
Finally, from $|g_{1}(x)-g_{2}(x)|=|\alpha_0(\alpha+\gamma\alpha_0)|=|\alpha+\gamma\alpha_0|>0$,
it follows that we cannot jointly have both $g_{1}(x)=0$ and $g_{2}(x)=0$
at the same time. We proceed by analyzing that (\ref{eq:claimneuron}) holds true for the other three cases one at a time:
\begin{itemize}
\item If $g_{1}(x)=0$ and $g_{2}(x)>0$, then $y_{2}=0$.
Substituting $y_{2}=0$ into (\ref{eq:claimneuron}), it follows from $\|2\overline{\beta}-\beta\|=\|\beta\|>0$ via (\ref{eq:h2}) and $\alpha_0^2 = 1$ that
\[
\begin{bmatrix}
\alpha_0   & 2\overline{\alpha} - \alpha -\gamma\alpha_0 \\
0 & 2\overline{\beta}-\beta\\
\alpha_0\overline{\alpha} & 0
\end{bmatrix}
\begin{bmatrix}
y_{1}\\z_1
\end{bmatrix}=0\quad\iff\quad
\begin{bmatrix}
y_{1}\\z_1
\end{bmatrix}=0.
\]
\item If $g_{1}(x)>0$ and $g_{2}(x)=0$, then $y_{1}=0$.
Substituting $y_{1}=0$ into (\ref{eq:claimneuron}), it again follows from $\|2\overline{\beta}-\beta\|=\|\beta\|>0$ via (\ref{eq:h2}) and $\alpha_0^2 = 1$ that the following holds
\[
\begin{bmatrix}
\alpha_0  & 2\overline{\alpha} - \alpha -\gamma\alpha_0 \\
0  & 2\overline{\beta}-\beta\\
\alpha_{0}(\overline{\alpha}-\alpha-\gamma\alpha_0) & 0
\end{bmatrix}
\begin{bmatrix}
y_{2}\\z_1
\end{bmatrix}=0\quad\iff\quad
\begin{bmatrix}
y_{2}\\z_1
\end{bmatrix}=0.
\]
\item Finally, if $g_{1}(x)>0$ and $g_{2}(x)>0$,
then both $y_{1}=y_{2}=0$. Substituting $y_{1}=y_{2}=0$ into (\ref{eq:claimneuron}),
it follows from $\quad\|(2\overline{\alpha}-\alpha-\gamma\alpha_0,2\overline{\beta}-\beta)\|=\|(\alpha+\gamma\alpha_0,\beta)\| > 0$ via (\ref{eq:h1}) and $\alpha_0^2 = 1$ that
\[
\begin{bmatrix}
2\overline{\alpha} - \alpha -\gamma\alpha_0 \\
2\overline{\beta}-\beta
\end{bmatrix}
z_1=0\quad\iff\quad 
z_1=0.
\]
\end{itemize}
\end{proof}
\begin{lemma}[Single layer]\label{lem:one-layer}
Define $u_0\in\R$, $u\in\R^{n}$, $\overline{u}\in\R^{\overline{n}}$, $V\in\R^{n\times(r-1)}$ and $\overline{V}\in\R^{\overline{n}\times(r-1)}$. Let $x=(u_0,u,\overline{u},\vec(V),\vec(\overline{V}))$ satisfy $g_{1}(x)\ge0$, $g_{2}(x)\ge0$ and $h_1(x)=0$, where
\[
g_{1}(x)=u_0\cdot\overline{u},\quad 
g_{2}(x)=u_0\cdot(\overline{u}-Wu-bu_0),\quad
h(x)=\diag[(\overline{u}-Wu-bu_0)\overline{u}^{T}+(\overline{V}-WV)\overline{V}^{T}].
\]
Suppose that $u_0^2=1$, $\e_{i}^{T}(Wu+bu_0)\ne0$ and $\e_{i}^{T}WV\ne0$
hold for all $i\in\{1,2,\dots,\overline{n}\}$. Then, the following holds
\begin{subequations}
\begin{gather}
\nabla g_{1}(x)y_{1}+\nabla g_{2}(x)y_{2}+ \nabla h_1(x)z_1=0, \label{eq:one-layer-a}\\ 
g_{1}(x)\odot y_{1}=g_{2}(x)\odot y_{2}=0, \label{eq:one-layer-b}
\end{gather}
\end{subequations}
if and only if $y_{1}=y_{2}=z_{1}=0$.
\end{lemma}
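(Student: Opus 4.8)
\emph{Proof strategy.} The plan is to decouple the stationarity condition (\ref{eq:one-layer-a}) across the $\overline{n}$ output neurons and reduce to \lemref{one-neuron}, applied once per neuron. The structural fact that makes this work is that the activation $\overline{u}[i]$ and the lifting row $\e_i^{T}\overline{V}$ are \emph{private} to neuron $i$: because $\overline{u}-Wu-bu_{0}$ and $\overline{V}-WV$ act row-wise on $\overline{u}$ and $\overline{V}$, these quantities enter only through the $i$-th components
\begin{gather*}
g_{1}(x)[i]=u_{0}\overline{u}[i],\qquad g_{2}(x)[i]=u_{0}(\overline{u}[i]-\e_i^{T}(Wu+bu_{0})),\\
h(x)[i]=(\overline{u}[i]-\e_i^{T}(Wu+bu_{0}))\,\overline{u}[i]+\inner{\e_i^{T}\overline{V}}{\e_i^{T}(\overline{V}-WV)}.
\end{gather*}
Consequently, reading off the $\overline{u}[i]$- and $\e_i^{T}\overline{V}$-blocks of (\ref{eq:one-layer-a}), together with the $i$-th entry of the complementarity (\ref{eq:one-layer-b}) and the $i$-th entry of $h(x)=0$, isolates a self-contained system in $(y_{1,i},y_{2,i},z_{1,i})$ with no cross terms from the other neurons.

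First I would make the identification $\alpha_{0}=u_{0}$, $\overline{\alpha}=\overline{u}[i]$, $\alpha=\e_i^{T}Wu$, $\gamma=\e_i^{T}b$, $\overline{\beta}=(\e_i^{T}\overline{V})^{T}$ and $\beta=(\e_i^{T}WV)^{T}$, so that $\alpha+\gamma\alpha_{0}=\e_i^{T}(Wu+bu_{0})$. Under this identification the three restricted constraints above coincide verbatim with the single-neuron constraints $\alpha_{0}\overline{\alpha}$, $\alpha_{0}(\overline{\alpha}-\alpha-\gamma\alpha_{0})$ and $\overline{\alpha}(\overline{\alpha}-\alpha-\gamma\alpha_{0})+\inner{\overline{\beta}}{\overline{\beta}-\beta}$, and the two blocks I isolated are exactly the $\partial/\partial\overline{\alpha}$ and $\partial/\partial\overline{\beta}$ rows of the matrix appearing in the proof of \lemref{one-neuron}. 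That proof discards the $\partial/\partial\alpha_{0}$, $\partial/\partial\alpha$ and $\partial/\partial\beta$ rows as redundant and deduces full column rank from the remaining rows alone, so the coupling of the shared variables $u_{0},u,V$ through $W$ in the other blocks of (\ref{eq:one-layer-a}) never enters, and nothing is lost by discarding it. The three nonzero hypotheses also transfer directly: $u_{0}^{2}=1\Rightarrow\alpha_{0}^{2}=1$, $\e_i^{T}(Wu+bu_{0})\neq0\Rightarrow\alpha+\gamma\alpha_{0}\neq0$, and $\e_i^{T}WV\neq0\Rightarrow\beta\neq0$.

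Then I would invoke \lemref{one-neuron} once for each $i\in\{1,\dots,\overline{n}\}$ to conclude $y_{1,i}=y_{2,i}=z_{1,i}=0$; collecting over $i$ gives $y_{1}=y_{2}=z_{1}=0$, which is the forward implication of the claim, and the converse is immediate. I expect the only real work to lie in the bookkeeping that secures the decoupling --- verifying that for every $j\neq i$ none of $g_{1}(x)[j]$, $g_{2}(x)[j]$, $h(x)[j]$ depends on $\overline{u}[i]$ or $\e_i^{T}\overline{V}$, so that the isolated blocks of (\ref{eq:one-layer-a}) genuinely contain no contribution from neuron $j$ --- and in confirming that the proof of \lemref{one-neuron} really does establish its conclusion from this reduced subsystem rather than from the full LICQ matrix.
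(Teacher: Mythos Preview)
Your proposal is correct and follows essentially the same route as the paper: the paper also writes out the full Jacobian, discards the rows corresponding to the shared variables $u_{0},u,V$ (since the proof of \lemref{one-neuron} shows those rows are redundant), observes that the surviving $\overline{u}$-, $\overline{V}$-, and complementarity rows are block-diagonal across output neurons, and then applies \lemref{one-neuron} componentwise under the same identification $\alpha_{0}=u_{0}$, $\alpha=\e_i^{T}Wu$, $\gamma=\e_i^{T}b$, $\overline{\alpha}=\overline{u}[i]$, $\beta^{T}=\e_i^{T}WV$, $\overline{\beta}^{T}=\e_i^{T}\overline{V}$. Your articulation of why the decoupling is legitimate---that $\overline{u}[i]$ and $\e_i^{T}\overline{V}$ are private to neuron $i$, and that \lemref{one-neuron} already succeeds on the reduced subsystem without the shared-variable rows---is exactly the structural point the paper exploits.
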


\begin{proof}
Let $V=\begin{bmatrix}v_1&\cdots v_{r-1}\end{bmatrix}$ and $\overline{V}=\begin{bmatrix}\overline{v}_1&\cdots \overline{v}_{r-1}\end{bmatrix}$. Explicitly write out the gradient terms in (\ref{eq:one-layer-a}) and stack it together with (\ref{eq:one-layer-b}), we have
\begin{equation*}
\begin{bmatrix}
\overline{u}^T & (\overline{u}-Wu-2bu_0)^T & -\overline{u}^T\diag(b)\\
0 & -u_0\cdot W^T & -W^T\diag(\overline{u})\\
u_0\cdot I & u_0\cdot I & \diag(2\overline{u}-Wu-bu_0)\\
0 & 0 & -W^T\diag(\overline{v}_{1})\\
\vdots & \vdots & \vdots\\
0 & 0 & -W^T\diag(\overline{v}_{r-1})\\
0 & 0 & \diag(2\overline{v}_{1}-Wv_1)\\
\vdots & \vdots & \vdots\\
0 & 0 & \diag(2\overline{v}_{r-1}-Wv_{r-1})\\
u_0\cdot\diag(\overline u) & 0 & 0 \\
0 & u_0\cdot\diag(\overline{u}-Wu-bu_0) & 0 \\
\end{bmatrix}
\begin{bmatrix}
y_{1}\\y_{2}\\z_1
\end{bmatrix}=0
\end{equation*}
Similar to \lemref{one-neuron}, deleting dependent rows allows us to restate the desired claim as the following
\begin{equation}\label{eq:one-layer-claim1}
\begin{bmatrix}
u_0\cdot I & u_0\cdot I & \diag(2\overline{u}-Wu-bu_0)\\
0 & 0 & \diag(2\overline{v}_{1}-Wv_1)\\
\vdots & \vdots & \vdots\\
0 & 0 & \diag(2\overline{v}_{r-1}-Wv_{r-1})\\
u_0\cdot\diag(\overline u) & 0 & 0 \\
0 & u_0\cdot\diag(\overline{u}-Wu-bu_0) & 0 \\
\end{bmatrix}
\begin{bmatrix}
y_{1}\\y_{2}\\z_1
\end{bmatrix}=0
\quad\iff\quad\begin{bmatrix}
y_{1}\\y_{2}\\z_1
\end{bmatrix}=0.
\end{equation}
Collecting rows correspond to each $(\e_i^Ty_1,\e_i^Ty_2,\e_i^Tz_1)=(y_{1,i},y_{2,i},z_{1,i})$, we see that (\ref{eq:one-layer-claim1}) holds true if and only if the following holds true for all $i\in\{1,2,\dots,\overline{n}\}$:
\begin{equation}\label{eq:one-layer-claim2}
\begin{bmatrix}
\alpha_0 & \alpha_0 & 2\overline{\alpha}_{i} - \alpha_{i} -\gamma_{i}\alpha_0 \\
0 &  0 & 2\overline{\beta}_{i}-\beta_{i}\\
\alpha_0\overline{\alpha}_{i} & 0 & 0 \\
0 & \alpha_{0}(\overline{\alpha}_{i}-\alpha_{i}-\gamma_{i}\alpha_0) & 0
\end{bmatrix}
\begin{bmatrix}
y_{1,i}\\y_{2,i}\\z_{1,i}
\end{bmatrix}=0
\qquad\iff\qquad
\begin{bmatrix}
y_{1,i}\\y_{2,i}\\z_{1,i}
\end{bmatrix}=0.
\end{equation}
where 
\[
\alpha_0=u_0,\quad \alpha_{i}=\e_{i}^{T}Wu,\quad \overline{\alpha}_{i}=\e_{i}^{T}\overline{u},\quad \beta_{i}^{T}=\e_{i}^{T}WV,\quad \overline{\beta}_{i}^{T}=\e_{i}^{T}\overline{V},\quad \gamma_{i} = \e_{i}^Tb.
\]
By hypothesis, $u_0^2=\alpha_0^2=1$, $\e_{i}^{T}(Wu+bu_0)=\alpha_{i}+\gamma_{i}\alpha_0\ne0$ and $\e_{i}^{T}WV=\beta_{i}^T\ne0$, it then follows from \lemref{one-neuron} that (\ref{eq:one-layer-claim2}) holds true for all $i$. This proves the lemma.
\end{proof}
The results from \lemref{one-neuron} and \lemref{one-layer} can be easily extended to the multiple layers case.
\begin{lemma}[Multiple layers]\label{lem:multiple-layers}
Define $u_0\in\R$, $u=(u_{1},\dots,u_{\ell})\in\R^{n}$, $V=(V_{1},\dots,V_{\ell})\in\R^{n\times(r-1)}$. Let $x=(u_0,\{u_k\}_{k=1}^{\ell-1},\{\vec(V_k)\}_{k=1}^{\ell})$ satisfy $g_{k,1}(x)\ge0$, $g_{k,2}(x)\ge0$ and $h_{k}(x)=0$ for all $k\in\{1,\ldots,\ell-1\}$, where 
\begin{gather*}
g_{k,1}(x)=u_0\cdot u_{k+1},\qquad g_{k,2}(x)=u_0\cdot(u_{k+1}-W_{k}u_{k}-b_{k}u_0),\\
h_{k}(x)=\diag[(u_{k+1}-W_{k}u_{k}-b_{k}u_0)u_{k+1}^{T}+(V_{k+1}-WV_{k})V_{k+1}^{T}].
\end{gather*}
Suppose that $u_0^2 = 1$, $\e_{i}^{T}(W_{k}u_{k}+b_{k}u_0)\ne0$ and $\e_{i}^{T}W_{k}V_{k}\ne0$ hold for all $k\in\{1,\ldots,\ell-1\}$ and $i\in\{1,2,\dots,n_{k+1}\}$.
Then, the following holds
\begin{subequations}
\begin{gather}
\sum_{k=1}^{\ell-1}\left[\nabla g_{k,1}(x)y_{k,1}+\nabla g_{k,2}(x)y_{k,2}+\nabla h_{k}(x)z_{k}\right]=0,\label{eq:multiple-layer-a} \\
g_{k,1}(x)\odot y_{k,1}=g_{k,2}(x)\odot y_{k,2}=0\quad\text{for all }k\in\{1,2,\dots,\ell-1\}, \label{eq:multiple-layer-b}
\end{gather}
\end{subequations}
if and only if $y_{k,1}=y_{k,2}=z_{k}=0$ for all $k\in\{1,2,\dots,\ell-1\}$.
\end{lemma}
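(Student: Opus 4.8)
The plan is to run a backward induction on the layer index $k$ from $\ell-1$ down to $1$, peeling off one layer at a time and invoking \lemref{one-layer}. The single structural fact that makes this work is that the postactivation variables $(u_{k+1},V_{k+1})$ enter exactly two constraint groups: they are the \emph{output} variables (playing the role of $\overline u,\overline V$ in \lemref{one-layer}) in $g_{k,1},g_{k,2},h_k$, and they are the \emph{input} variables (playing the role of $u,V$) in $g_{k+1,1},g_{k+1,2},h_{k+1}$ whenever $k+1\le\ell-1$. In particular $(u_\ell,V_\ell)$ appears only in the layer-$(\ell-1)$ constraints. I would record at the outset the explicit gradients $\nabla_{u_{k+1}}g_{k,1}=\nabla_{u_{k+1}}g_{k,2}=u_0 I$, $\nabla_{u_{k+1}}h_k=\diag(2u_{k+1}-W_ku_k-b_ku_0)$, $\nabla_{v_{k+1,j}}h_k=\diag(2v_{k+1,j}-W_kv_{k,j})$, together with the complementary-slackness relations, so that the resulting linear system matches \lemref{one-layer} term for term.

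The base case isolates $(y_{\ell-1,1},y_{\ell-1,2},z_{\ell-1})$: restricting the stationarity equation (\ref{eq:multiple-layer-a}) to its $u_\ell$- and $V_\ell$-components and adjoining (\ref{eq:multiple-layer-b}) for $k=\ell-1$ yields precisely the system that \lemref{one-layer} proves is invertible --- indeed, the proof of \lemref{one-layer} already deletes the input-variable rows and keeps only the output-variable and complementary-slackness rows, which are exactly the ones available here. The hypotheses of \lemref{one-layer} for the pair $(W_{\ell-1},b_{\ell-1})$, namely $u_0^2=1$, $\e_i^T(W_{\ell-1}u_{\ell-1}+b_{\ell-1}u_0)\ne0$ and $\e_i^TW_{\ell-1}V_{\ell-1}\ne0$ for all $i\in\{1,\dots,n_\ell\}$, are exactly (\ref{eq:npcq}) for $k=\ell-1$, so $y_{\ell-1,1}=y_{\ell-1,2}=z_{\ell-1}=0$.

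For the inductive step, assume $y_{j,1}=y_{j,2}=z_j=0$ for all $j>k$. I would look at the $u_{k+1}$- and $V_{k+1}$-components of (\ref{eq:multiple-layer-a}). Each receives a contribution from layer $k$ (outputs) and from layer $k+1$ (inputs), but the layer-$(k+1)$ contributions are weighted by $y_{k+1,1},y_{k+1,2},z_{k+1}$ and hence vanish by hypothesis; what survives, together with (\ref{eq:multiple-layer-b}) for $k$, is again the invertible system of \lemref{one-layer} for the pair $(W_k,b_k)$, whose hypotheses hold by (\ref{eq:npcq}). Hence $y_{k,1}=y_{k,2}=z_k=0$, completing the induction; the converse implication is immediate. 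I expect the only mildly delicate point to be cleanly splitting $\nabla_{u_{k+1}}$ into a layer-$k$ part and a layer-$(k+1)$ part and confirming that no third constraint group touches $(u_{k+1},V_{k+1})$; once that is in hand the argument is purely mechanical, since all the genuine rank computations were already carried out in \lemref{one-neuron} and \lemref{one-layer}.
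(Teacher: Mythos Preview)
Your proposal is correct and follows essentially the same approach as the paper: both arguments exploit the block tri-diagonal structure arising from the fact that $(u_{k+1},V_{k+1})$ appears only in the layer-$k$ and layer-$(k+1)$ constraint groups, and both run a backward elimination from $k=\ell-1$ down to $k=1$, invoking \lemref{one-layer} (via its output-variable and complementary-slackness rows, which the paper packages as $\mathbf{M}_k=[C_k;E_k;F_k]$) at each step to conclude $\lambda_k=(y_{k,1},y_{k,2},z_k)=0$. The paper presents this by writing out the explicit block matrix and peeling off block-rows, whereas you describe it as a formal induction, but the content is identical.
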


\begin{proof}
Let us assume $r=2$ without loss of generality. Similar to the proof in \lemref{one-layer}, we start by writing (\ref{eq:multiple-layer-a}) and (\ref{eq:multiple-layer-b}) into a matrix-vector product. Let $\frac{\partial f(x)}{\partial y}$ denote the gradient of $f(x)$ with respect to variable $y$ and let $x_k=(u_0,u_k,u_{k+1},\vec(V_k),\vec(V_{k+1}))$. Analogous to the one layer case, for each $k\in\{1,\ldots,\ell-1\}$, we define the following block matrix
\begin{equation*}
\renewcommand\arraystretch{2}
\begin{bmatrix}
\displaystyle\frac{\partial g_{1,k}(x)}{\partial x_k} & \displaystyle\frac{\partial g_{2,k}(x)}{\partial x_k} & \displaystyle\frac{\partial h_k(x)}{\partial x_k}\\
\diag(g_{1,k}(x)) & 0 & 0 \\
0 & \diag(g_{2,k}(x)) & 0
\end{bmatrix}=
\renewcommand\arraystretch{1.2}
\begin{bmatrix}
	a_k^T \\B_k \\C_k \\D_k \\E_k \\F_k
\end{bmatrix},\qquad
\mathbf{M}_{k}=\begin{bmatrix}
	C_k\\E_k\\F_k
\end{bmatrix}.
\end{equation*}
Notice that $\mathbf{M}_{k}$ has the same structure as in (\ref{eq:one-layer-claim1}). Each block above is assigned as the following
\newcommand{\colwidth}{12em}
\newcommand{\colone}{7em}
\newcommand{\coltwo}{12em}
\newcommand{\colthr}{12em}
\begin{align*}
a_k^T=\frac{\partial (g_{k,1},g_{k,2},h_k)}{\partial u_{0}}=&
\begin{bmatrix}\begin{array}{C{\colone}C{\coltwo}C{\colthr}}
	u_{k+1}^T & (u_{k+1}-Wu_{k}-2b_{k}u_0)^T & -u_{k+1}^T\diag(b)
\end{array}\end{bmatrix},\\
B_k=\frac{\partial (g_{k,1},g_{k,2},h_k)}{\partial u_{k}}=&
\begin{bmatrix}\begin{array}{C{\colone}C{\coltwo}C{\colthr}}
	0 & -u_0\cdot W_k^T & -W_k^T\diag(u_{k+1})
\end{array}\end{bmatrix},\\
C_k=\frac{\partial (g_{k,1},g_{k,2},h_k)}{\partial u_{k+1}}=&
\begin{bmatrix}\begin{array}{C{\colone}C{\coltwo}C{\colthr}}
	u_0\cdot I & u_0\cdot I & \diag(2u_{k+1}-W_{k}u_{k}-b_{k}u_0)
\end{array}\end{bmatrix},\\
D_k=\frac{\partial (g_{k,1},g_{k,2},h_k)}{\partial \vec(V_{k})}=&
\begin{bmatrix}\begin{array}{C{\colone}C{\coltwo}C{\colthr}}
	0 & 0 & -W_k^T\diag(v_{k+1,1}) \\ 
	\vdots & \vdots & \vdots \\ 
	0 & 0 & -W_k^T\diag(v_{k+1,r-1})
\end{array}\end{bmatrix},\\
E_k=\frac{\partial (g_{k,1},g_{k,2},h_k)}{\partial \vec(V_{k+1})}=&
\begin{bmatrix}\begin{array}{C{\colone}C{\coltwo}C{\colthr}}
	0 & 0 & \diag(2v_{k+1,1}-W_{k}v_{k,1}) \\ 
	\vdots & \vdots & \vdots \\ 
	0 & 0 & \diag(2v_{k+1,r-1}-W_{k}v_{k,r-1})
\end{array}\end{bmatrix},\\
F_k=&\begin{bmatrix}\begin{array}{C{\colone}C{\coltwo}C{\colthr}}
	u_0\cdot\diag(u_{k+1}) & 0 & 0 \\ 
	0 & u_0\cdot\diag(u_{k+1}-W_{k}u_{k}-b_{k}u_0) & 0
\end{array}\end{bmatrix}.
\end{align*}

Since each $g_{k,1}(x),g_{k,2}(x),h_k(x)$ depends only on $x_k$. It follows that (\ref{eq:multiple-layer-a}) and (\ref{eq:multiple-layer-b}) can be written as a matrix-vector product in which the corresponding matrix admit a block tri-diagonal structure. This allows us to restated the desire claim as the following
\begin{equation}\label{eq:blockdiag}
\begin{bmatrix}
	a_1^T & a_2^T & \cdots & a_{\ell-1}^T\\
    B_1 \\
 	C_1 & B_2 \\
 	    & C_2 & \ddots \\
        &     & \ddots & B_{\ell-1} \\
        &     &        & C_{\ell-1} \\
    D_1 \\
 	E_1 & D_2 \\
  	    & E_2 & \ddots \\
        &     & \ddots & D_{\ell-1} \\
        &     &        & E_{\ell-1} \\
    F_1 \\
        & F_2 \\
  	    &     & \ddots \\
        &     &        & F_{\ell-1} \\
\end{bmatrix}
\begin{bmatrix}
\lambda_{1}\\\lambda_{2}\\\vdots\\\lambda_{\ell-1}
\end{bmatrix}=0
\qquad\iff\qquad
\begin{bmatrix}
\lambda_{1}\\\lambda_{2}\\\vdots\\\lambda_{\ell-1}
\end{bmatrix}=0,
\end{equation}
where $\lambda_{k}=(y_{k,1},y_{k,2},z_{k})$.
 
We now process to show that (\ref{eq:blockdiag}) is true. Focusing our attention on the ($\ell-1$)-th block-row. Observe that the blocks $C_{\ell-1}$, $E_{\ell-1}$ and $F_{\ell-1}$ are the only nonzero blocks in their row; therefore, the left-hand side of (\ref{eq:blockdiag}) implies $\mathbf{M}_{\ell-1}\lambda_{\ell-1}=0$.
Given that $\e_{i}^{T}(W_{\ell-1}u_{\ell-1}+b_{\ell-1}u_0)\ne0$ and $\e_{i}^{T}W_{\ell-1}V_{\ell-1}\ne0$ hold for all $i\in\{1,2,\dots,n_{\ell}\}$ by hypothesis, it then follows from \lemref{one-layer} that $\mathbf{M}_{\ell-1}\lambda_{\ell-1}=0 \iff \lambda_{\ell-1}=0$.

Next, substituting $\lambda_{\ell-1}=0$ back to the left-hand side of (\ref{eq:blockdiag}) to eliminate the ($\ell-1$)-th block-row. Repeat the same process for the ($\ell-2$)-th block-row all the way down to the first block-row to show that $\mathbf{M}_{\ell-2}\lambda_{\ell-2}=0\iff\lambda_{\ell-2}=0,\ \mathbf{M}_{\ell-3}\lambda_{\ell-3}=0\iff\lambda_{\ell-3}=0,\ldots,\ \mathbf{M}_{1}\lambda_{1}=0\iff\lambda_{1}=0$. This proves that the left-hand side of (\ref{eq:blockdiag}) does indeed imply the right-hand side under our stated hypotheses,
as desired.

\end{proof}
\begin{theorem}[LICQ for (\ref{eq:bm})]\label{thm:licq_bm}
Define $u_0\in\R$, $u=(u_{1},\dots,u_{\ell})\in\R^{n}$, $V=(V_{1},\dots,V_{\ell})\in\R^{n\times(r-1)}$. Let $x=(u_0,\{u_k\}_{k=1}^{\ell-1},\{\vec(V_k)\}_{k=1}^{\ell})$ satisfy constraints in (\ref{eq:bm}), $g_0(x)\ge 0$, $g_{k,1}(x)\ge0$, $g_{k,2}(x)\ge0$, $h_0(x)=0$ and $h_{k}(x)=0$ for all $k\in\{1,\ldots,\ell-1\}$, where 
\begin{gather*}
g_{0}(x)=\rho^{2}-\|u_{1}-\hat{x}u_{0}\|^{2}-\|V_{1}\|^{2},\qquad h_{0}(x)=u_{0}^2-1,\\
g_{k,1}(x)=u_{0}\cdot u_{k+1},\qquad g_{k,2}(x)=u_{0}\cdot\left(u_{k+1}-W_{k}u_{k}-b_{k}u_{0}\right),\\
h_{k}(x)=\diag[(u_{k+1}-W_{k}u_{k}-b_{k}u_0)u_{k+1}^{T}+(V_{k+1}-WV_{k})V_{k+1}^{T}].
\end{gather*}
Suppose that $x$ satisfies (\ref{eq:npcq}), i.e. $\e_{i}^{T}(W_{k}u_{k}+b_{k}u_0)\ne0$ and $\e_{i}^{T}W_{k}V_{k}\ne0$ hold for all $k\in\{1,2,\dots,\ell-1\}$ and $i\in\{1,2,\dots,n_{k+1}\}$.
Then, $x$ satisfies (\ref{eq:licq}) stated as the following:
\begin{gather}
\nabla g_{0}(x)y_{0}+\nabla h_{0}(x)z_{0}+\sum_{k=1}^{\ell-1}\left[\nabla g_{k,1}(x)y_{k,1}+\nabla g_{k,2}(x)y_{k,2}\right]=0,\tag{LICQ-a}\label{eq:licq-a} \\
g_{0}\odot y_0=0,\qquad g_{k,1}(x)\odot y_{k,1}=g_{2}(x)\odot y_{k,2}=0\quad\text{for all }k\in\{1,\dots,\ell-1\},\tag{LICQ-b}\label{eq:licq-b}
\end{gather}
if and only if $y_0=z_0=0$ and $y_{k,1}=y_{k,2}=z_{k}=0$ \textup{for all }$k\in\{0,1,\dots,\ell-1\}$.
\end{theorem}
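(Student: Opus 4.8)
The plan is to view \eqref{eq:licq-a}--\eqref{eq:licq-b} as the block linear system already analyzed in the proof of \lemref{multiple-layers}, enlarged by the two extra primal constraints $g_0$ (the input $\ell_2$-ball) and $h_0$ (the normalization $u_0^2=1$). Write \eqref{eq:licq-a} as a matrix acting on the multiplier vector $(y_0,z_0,\{y_{k,1},y_{k,2},z_k\}_{k=1}^{\ell-1})$, with rows grouped by primal variable ($u_0$; each $u_k$; each $\vec(V_k)$) and one complementary-slackness row from \eqref{eq:licq-b} appended per inequality. Compared with the matrix in \eqref{eq:blockdiag}, two columns are new. Since $h_0(x)=u_0^2-1$ depends on $u_0$ alone, $\nabla h_0(x)$ is nonzero only in the $u_0$-coordinate, so the $z_0$-column sits entirely in the $u_0$-row. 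Since $g_0(x)=\rho^2-\|u_1-\hat{x}u_0\|^2-\|V_1\|^2$ depends only on $(u_0,u_1,V_1)$, the $y_0$-column is supported only on the $u_0$-row (entry $2\hat{x}^T(u_1-\hat{x}u_0)$), the $u_1$-row (entry $-2(u_1-\hat{x}u_0)$), the $\vec(V_1)$-row (entry $-2\vec(V_1)$), and the new row encoding $g_0\odot y_0=0$.

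First I would delete the $u_0$-row: this is harmless, because $z_0$ occurs in no other row, and it leaves a system in $y_0$ and $\{y_{k,1},y_{k,2},z_k\}_{k=1}^{\ell-1}$ only. Now observe that the rows for $u_2,\dots,u_\ell$, the rows for $\vec(V_2),\dots,\vec(V_\ell)$, and the complementary-slackness rows for $\{g_{k,1},g_{k,2}\}_{k=1}^{\ell-1}$ are precisely the block-rows used in the bottom-up elimination $\mathbf{M}_{\ell-1}\lambda_{\ell-1}=0,\dots,\mathbf{M}_1\lambda_1=0$ in the proof of \lemref{multiple-layers} — and, crucially, none of them contains $y_0$, because $\nabla g_0$ has no component along $u_k$ or $V_k$ for $k\ge 2$. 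Hence, under \eqref{eq:npcq} (the hypothesis of \lemref{one-layer}/\lemref{multiple-layers}), that same elimination runs verbatim and forces $y_{k,1}=y_{k,2}=z_k=0$ for all $k$. Substituting these zeros back, the $u_1$-row collapses to $-2(u_1-\hat{x}u_0)\,y_0=0$ and the $\vec(V_1)$-row to $-2\vec(V_1)\,y_0=0$.

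It remains to dispose of $y_0$ and $z_0$. Split on complementary slackness for $g_0$: if $g_0(x)>0$ then $g_0\odot y_0=0$ gives $y_0=0$ at once; if $g_0(x)=0$ then $\|u_1-\hat{x}u_0\|^2+\|V_1\|^2=\rho^2>0$ (we take $\rho>0$, since $\rho=0$ makes the attack problem trivial), so $(u_1-\hat{x}u_0,V_1)\ne 0$ and the two relations above again force $y_0=0$. With every multiplier except possibly $z_0$ now zero, the deleted $u_0$-row reads $2u_0 z_0=0$, and $h_0(x)=0$ gives $u_0^2=1\ne 0$, hence $z_0=0$; the reverse implication is immediate. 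The only genuine content beyond bookkeeping is the structural remark that $\nabla g_0$ and $\nabla h_0$ are supported on $\{u_0,u_1,V_1\}$ and $\{u_0\}$ respectively — this is what decouples $z_0$ and lets the multi-layer elimination of \lemref{multiple-layers} be reused untouched — and I expect the one point needing care to be confirming that the block-rows invoked from that proof are exactly the $y_0$-free ones, so that no hidden coupling with the new input-constraint multiplier is introduced.
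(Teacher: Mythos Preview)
Your proposal is correct and follows essentially the same approach as the paper's proof: both augment the block tri-diagonal system of \lemref{multiple-layers} with the two new columns for $y_0$ and $z_0$, observe that the bottom-up elimination $\mathbf{M}_{\ell-1}\lambda_{\ell-1}=0,\dots,\mathbf{M}_1\lambda_1=0$ is untouched because $\nabla g_0$ and $\nabla h_0$ are supported only on $(u_0,u_1,V_1)$ and $u_0$ respectively, and then dispose of $(y_0,z_0)$ by a case split on $g_0(x)=0$ versus $g_0(x)>0$ together with $u_0^2=1$. The only cosmetic difference is ordering---you set aside the $u_0$-row first and revisit it last, whereas the paper keeps it throughout and handles the residual $2\times 2$ system for $(z_0,y_0)$ in one shot---but the content is identical.
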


\begin{proof}
Let us assume $r=2$ and $\rho>0$ without loss of generality. 
Similar to \lemref{multiple-layers}, (\ref{eq:licq-a}) and (\ref{eq:licq-b}) admits a tri-diagonal structure which allows us to restate the desired claim as the following
\begin{equation}\label{eq:blockdiag-1}
\begin{bmatrix}
2u_0 & a_0 &  a_1^T & a_2^T & \cdots & a_{\ell-1}^T\\
     & b_0 & B_1 \\
 	 &     & C_1 & B_2 \\
 	 &     &     & C_2 & \ddots \\
     &     &     &     & \ddots & B_{\ell-1} \\
     &     &     &     &        & C_{\ell-1} \\
     & d_0 & D_1 \\
 	 &     & E_1 & D_2 \\
  	 &     &     & E_2 & \ddots \\
     &     &     &     & \ddots & D_{\ell-1} \\
     &     &     &     &        & E_{\ell-1} \\
     & f_0 \\
	 &     & F_1 \\
	 &     &     & F_2 \\
	 &     &	 &     & \ddots \\
     &     &     &     &        & F_{\ell-1} \\
\end{bmatrix}
\begin{bmatrix}
z_{0}\\y_{0}\\\lambda_{1}\\\lambda_{2}\\\vdots\\\lambda_{\ell-1}
\end{bmatrix}=0
\qquad\iff\qquad
\begin{bmatrix}
z_{0}\\y_{0}\\\lambda_{1}\\\lambda_{2}\\\vdots\\\lambda_{\ell-1}
\end{bmatrix}=0,
\end{equation}
in which $\lambda_{k}=(y_{k,1},y_{k,2},z_{k})$. 
The blocks $(a_k,B_k,C_k,D_k,E_k)$ for all $k\in\{1,\ldots,\ell-1\}$ are defined in \lemref{multiple-layers}, and the blocks $(a_0,b_0,d_0,f_0)$ are written
\begin{gather*}
a_0=\frac{\partial g_0}{\partial u_0}=2(\hat x^Tu_1-\|\hat x\|^2u_0),\quad b_0=\frac{\partial g_0}{\partial u_1}=2(u_1-\hat x u_0),\quad d_0=\frac{\partial g_0}{\partial \vec(V_1)}=2\vec(V_1), \quad f_0 = g_0(x).
\end{gather*}
Under the stated assumptions, we can verify the matrix at the left-hand side of (\ref{eq:blockdiag-1}) indeed has full column rank. First, we apply \lemref{multiple-layers} to show that
\[
\mathbf{M}_{\ell-1}\lambda_{\ell-1}=0\iff\lambda_{\ell-1}=0,\quad \mathbf{M}_{\ell-2}\lambda_{\ell-2}=0\iff\lambda_{\ell-2}=0,\quad\ldots\quad,\mathbf{M}_{1}\lambda_{1}=0\iff\lambda_{1}=0.
\]
Substituting $\lambda_{\ell-1}=\lambda_{\ell-2}=\cdots=\lambda_{1}=0$ allows us to simplify (\ref{eq:blockdiag-1}) as
\begin{equation}\label{eq:blockdiag-2}
\begin{bmatrix}
2u_0 & a_0 \\
     & b_0 \\
     & d_0 \\
     & f_0
\end{bmatrix}
\begin{bmatrix}
z_{0}\\y_{0}
\end{bmatrix}=0
\qquad\iff\qquad
\begin{bmatrix}
z_{0}\\y_{0}
\end{bmatrix}=0.
\end{equation}

To show that the matrix at the left-hand side of (\ref{eq:blockdiag-2}) has full column rank. We consider two cases:
\begin{itemize}
	\item If $g_0(x)=0$. It follows from $\|u_{1}-\hat{x}u_{0}\|^{2}+\|V_{1}\|^{2}=\|(u_1-\hat x u_0,\vec(V_1))\|^2=\|\frac{1}{2}(b_0,d_0)\|^2=\rho^2>0$ and $u_0^2=1$ that
\[
\begin{bmatrix}
2u_0 & a_0 \\
     & b_0 \\
     & d_0 \\
     & f_0
\end{bmatrix}
\begin{bmatrix}
z_{0}\\y_{0}
\end{bmatrix}=0
\qquad\iff\qquad
\begin{bmatrix}
z_{0}\\y_{0}
\end{bmatrix}=0
\]
	\item If $g_0(x)>0$, then $y_0=0$. Substituting $y_0=0$ into (\ref{eq:blockdiag-2}), it then follows from  $u_0^2=1$ that
\[
2u_0z_0=0\qquad\iff\qquad z_0=0.
\]
\end{itemize}
\end{proof}

We can now extend \thmref{licq_bm} to show that (\ref{eq:bm_l2}) satisfies \ref{eq:licq} under an extra mild assumption, $\|V_1\|\neq 0$. The proof is summarized in the following corollary.
\begin{corollary}[LICQ for (\ref{eq:bm_l2})]
	Define $u_0$, $u$ and $V$ as in \thmref{licq_bm}. Let $x=(u_0,\{u_k\}_{k=1}^{\ell-1},\{\vec(V_k)\}_{k=1}^{\ell})$ satisfy constraints in (\ref{eq:bm_l2}), $g_0(x)\ge 0$, $g_{k,1}(x)\ge0$, $g_{k,2}(x)\ge0$ and $h_{k}(x)=0$ for all $k\in\{0,\ldots,\ell-1\}$, where 
\begin{gather*}
g_{0}(x)=\rho^{2}-\|u_{1}-\hat{x}u_{0}\|^{2}-\|V_{1}\|^{2},\qquad h_{0}(x)=u_{0}^2-1,\\
g_{0,1}(x)=u_{0}\cdot u_{1},\qquad g_{0,2}(x)=u_{0}\cdot\left(u_{0}-u_{1}\right),\\
g_{k,1}(x)=u_{0}\cdot u_{k+1},\qquad g_{k,2}(x)=u_{0}\cdot\left(u_{k+1}-W_{k}u_{k}-b_{k}u_{0}\right)\quad\forall k\in\{1,\ldots,\ell-1\},\\
h_{k}(x)=\diag[(u_{k+1}-W_{k}u_{k}-b_{k}u_0)u_{k+1}^{T}+(V_{k+1}-WV_{k})V_{k+1}^{T}]\quad\forall k\in\{1,\ldots,\ell-1\}.
\end{gather*}
Suppose that $x$ satisfies (\ref{eq:npcq}), and $\|V_1\|\neq 0$. Then, $x$ satisfies (\ref{eq:licq})
\end{corollary}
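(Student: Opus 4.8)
The plan is to extend the proof of \thmref{licq_bm} by incorporating the two additional valid-input-set constraints $g_{0,1}(x)=u_{0}\cdot u_{1}$ and $g_{0,2}(x)=u_{0}\cdot(u_{0}-u_{1})$, which differ from the constraint system of \thmref{licq_bm} only in that they act on the first-layer variables $(u_{0},u_{1},V_{1})$. Reducing to $r=2$ exactly as before, I would stack the analogues of (\ref{eq:licq-a}) and (\ref{eq:licq-b}) into a single matrix equation $M\lambda=0$ with $\lambda=(z_{0},y_{0},y_{0,1},y_{0,2},\lambda_{1},\dots,\lambda_{\ell-1})$ and $\lambda_{k}=(y_{k,1},y_{k,2},z_{k})$. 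Since $g_{0,1},g_{0,2}$ depend on none of $u_{2},\dots,u_{\ell},V_{2},\dots,V_{\ell}$, the matrix $M$ coincides with the one in (\ref{eq:blockdiag-1}) except for two extra columns (the multipliers $y_{0,1},y_{0,2}$) and two extra blocks of complementary-slackness rows ($\diag(g_{0,1}(x))\,y_{0,1}=0$ and $\diag(g_{0,2}(x))\,y_{0,2}=0$), all confined to the top-left corner; in particular the block tri-diagonal structure over the hidden layers is untouched. I would therefore reuse the peeling argument of \lemref{multiple-layers}/\thmref{licq_bm} verbatim: under (\ref{eq:npcq}) and $u_{0}^{2}=1$, applying \lemref{one-layer} successively to the block-rows for layers $\ell-1,\ell-2,\dots,1$ forces $\lambda_{k}=0$ for every $k\in\{1,\dots,\ell-1\}$.

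After substituting $\lambda_{1}=\dots=\lambda_{\ell-1}=0$, what remains is a reduced homogeneous system in $(z_{0},y_{0},y_{0,1},y_{0,2})$ whose rows I would read off one at a time. The $\partial/\partial\vec(V_{1})$ row is simply $-2\vec(V_{1})\,y_{0}=0$, so the new hypothesis $\|V_{1}\|\neq0$ forces $y_{0}=0$; this is precisely where the extra assumption enters, since — unlike in \thmref{licq_bm} — the $\partial/\partial u_{1}$ row now also carries the terms $u_{0}y_{0,1}-u_{0}y_{0,2}$ and can no longer be used to isolate $y_{0}$. With $y_{0}=0$ in hand, the $\partial/\partial u_{1}$ row collapses to $u_{0}(y_{0,1}-y_{0,2})=0$, hence $y_{0,1}=y_{0,2}$ because $u_{0}^{2}=1$. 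The key algebraic identity is that on the feasible set $g_{0,1}(x)+g_{0,2}(x)=u_{0}\cdot u_{1}+u_{0}\cdot(u_{0}-u_{1})=u_{0}^{2}\one=\one$ componentwise, so adding the two complementary-slackness relations (and using $y_{0,1}=y_{0,2}$) yields $\one\odot y_{0,1}=y_{0,1}=0$, and therefore $y_{0,2}=0$ as well. Finally the $\partial/\partial u_{0}$ row reduces to $2u_{0}z_{0}=0$, giving $z_{0}=0$. Combined with the hidden-layer conclusion this shows that the only solution of $M\lambda=0$ is $\lambda=0$, which is exactly (\ref{eq:licq}).

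I do not expect a serious obstacle: the statement is a direct grafting of two local constraints onto \thmref{licq_bm}, and the genuinely new content is the two small observations above — using $\vec(V_{1})\neq0$ to eliminate $y_{0}$, and the identity $g_{0,1}+g_{0,2}\equiv u_{0}^{2}\one$ on the constraint set to kill $y_{0,1},y_{0,2}$ through complementary slackness without any case analysis. The one delicate bookkeeping point is verifying that, after the hidden-layer peeling, the $\partial/\partial u_{0}$, $\partial/\partial u_{1}$ and $\partial/\partial\vec(V_{1})$ rows of $M$ genuinely lose all of their $\lambda_{k}$ contributions, which holds because $g_{0},h_{0},g_{0,1},g_{0,2}$ involve none of $u_{2},\dots,u_{\ell},V_{2},\dots,V_{\ell}$, so the remaining first-layer rows are precisely the ones analyzed above.
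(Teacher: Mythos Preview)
Your proposal is correct and follows essentially the same route as the paper: reuse the hidden-layer peeling of \thmref{licq_bm} to kill $\lambda_{1},\dots,\lambda_{\ell-1}$, then use $\|V_{1}\|\neq 0$ in the $\partial/\partial\vec(V_{1})$ row to force $y_{0}=0$, and finally exploit the componentwise identity $g_{0,1}+g_{0,2}=u_{0}^{2}\one=\one$ to eliminate $y_{0,1},y_{0,2}$ before reading off $z_{0}=0$. The only difference is in the last step: the paper observes that $g_{0,1}[i]$ and $g_{0,2}[i]$ cannot both vanish and then does a three-case split per coordinate, whereas you first obtain $y_{0,1}=y_{0,2}$ from the $\partial/\partial u_{1}$ row and then add the two complementary-slackness relations to get $\one\odot y_{0,1}=0$ directly --- a slightly cleaner bookkeeping that avoids the case analysis but uses the same underlying identity.
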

\begin{proof}
From \thmref{licq_bm}, to prove this corollary, it is suffice to show the following
\begin{equation*}
\begin{bmatrix}
2u_0 & a_0 & u_1^T & 2u_0-u_1^T \\
     & b_0 & u_0I  & -u_0I\\
     & d_0 \\
     & f_0 \\
     &     & g_{0,1}(x) \\
     &     &            & g_{0,2}(x)
\end{bmatrix}
\begin{bmatrix}
z_{0}\\y_{0}\\y_{0,1}\\y_{0,2}
\end{bmatrix}=0
\qquad\iff\qquad
\begin{bmatrix}
z_{0}\\y_{0}\\y_{0,1}\\y_{0,2}
\end{bmatrix}=0
\end{equation*}
where $(a_0,b_0,d_0,f_0)$ are defined in \thmref{licq_bm}. By hypothesis, $\|\frac{1}{2}d_0\|=\|V_1\|\neq 0$. This allows us to restate the desired claim as the following
\begin{equation}\label{eq:bm_l2_claim}
\begin{bmatrix}
2u_0 & u_1^T & 2u_0-u_1^T \\
     & u_0I  & -u_0I\\
     & g_{0,1}(x) \\
     &            & g_{0,2}(x)
\end{bmatrix}
\begin{bmatrix}
z_{0}\\y_{0,1}\\y_{0,2}
\end{bmatrix}=0
\qquad\iff\qquad
\begin{bmatrix}
z_{0}\\y_{0,1}\\y_{0,2}
\end{bmatrix}=0.
\end{equation}
Notice that we cannot jointly have $\e_i^Tg_{0,1}(x)=\e_i^Tg_{0,2}(x)=0$ for all $i\in\{1,\ldots,n_1\}$. Hence, each pair of $(\e_i^Ty_{0,1},\e_i^Ty_{0,2})$ has only three possible cases: $\e_i^Tg_{0,1}(x)=0,\e_i^Tg_{0,2}(x)\neq 0$; $\e_i^Tg_{0,1}(x)\neq 0,\e_i^Tg_{0,2}(x)= 0$; and $\e_i^Tg_{0,1}(x)\neq0,\e_i^Tg_{0,2}(x)\neq 0$. Of all three cases, it is clear that (\ref{eq:bm_l2_claim}) is true because $u_0^2=1$.

\end{proof}

\end{document}